\documentclass{article} % For LaTeX2e
\usepackage{arxiv}

\usepackage{amsmath,amsfonts,bm}

\def\eqref#1{equation~\ref{#1}}
\def\1{\bm{1}}

\DeclareMathAlphabet{\mathsfit}{\encodingdefault}{\sfdefault}{m}{sl}
\SetMathAlphabet{\mathsfit}{bold}{\encodingdefault}{\sfdefault}{bx}{n}

\usepackage{hyperref}       % hyperlinks
\usepackage{url}            % simple URL typesetting
\usepackage{booktabs}       % professional-quality tables
\usepackage{amsfonts}       % blackboard math symbols
\usepackage{nicefrac}       % compact symbols for 1/2, etc.
\usepackage{microtype}      % microtypography
\usepackage{xcolor}         % colors

\usepackage{amsmath}
\usepackage{amssymb}
\usepackage{amsthm}
\usepackage{bm}
\usepackage{multirow}
\usepackage{color}
\usepackage{tcolorbox}
\usepackage[numbers,sort&compress]{natbib}
\usepackage{cite}
\usepackage{algorithm}
\usepackage{algorithmic}

\usepackage{enumitem}
\usepackage{balance}
\usepackage{smile}
\usepackage{ulem}
\usepackage{tabularx}
\usepackage{graphicx}
\usepackage{subcaption}

\usepackage{booktabs}
\usepackage{diagbox}
\usepackage{wrapfig}
\usepackage{graphicx}

\newcommand{\AtC}{\texttt{AtC}}

\definecolor{darkcerulean}{rgb}{0.03, 0.27, 0.49}
\definecolor{darkviolet}{rgb}{0.58, 0.0, 0.83}
\definecolor{iris}{rgb}{0.35, 0.31, 0.81}
\definecolor{mint}{rgb}{0.24, 0.71, 0.54}
\definecolor{awesome}{rgb}{1.0, 0.13, 0.32}
\definecolor{ufogreen}{rgb}{0.24, 0.82, 0.44}

\newcommand{\offer}[1]{\textcolor{black}{#1}}

\title{Aggregate-then-Calibrate for Human-centered Assessment with Theoretical Guarantees}

\author{
  \noindent \textbf{Zejun Xie}$^1$\thanks{\quad Equal contribution and shared co-first authorship.}~, ~\textbf{Xintong Li}$^2${\footnotemark[1]}~, ~\textbf{Guang Wang}$^3$\thanks{\quad Corresponding author.}~, ~\textbf{Desheng Zhang}$^1$ \\
  $^1$ Rutgers University \;\;\;
  $^2$ Renmin University of China  \;\;\;
  $^3$ Florida State University \\
  $^1$ \texttt{\{zx180, dz220\}@cs.rutgers.edu} \;\;\;
  $^2$ \texttt{lixintong@ruc.edu.cn}  \;\;\;
  $^3$ \texttt{guang.wang@fsu.edu}\\
}

\begin{document}
\maketitle
% \footnotetext[1]{Equal contribution.}

\begin{abstract}
Human-centered assessment tasks, which are essential for systematic decision-making, rely heavily on human judgment and typically lack verifiable ground truth. Existing approaches face a dilemma: methods using only human judgments suffer from heterogeneous expertise and inconsistent rating scales, while methods using only model-generated scores must learn from imperfect proxies or incomplete features. We propose Aggregate-then-Calibrate (\AtC), a two-stage framework that combines these complementary sources. Stage-1 aggregates heterogeneous comparative judgments into a consensus ranking $\hat{\pi}$ using a rank-aggregation model that accounts for annotator reliability. Stage-2 calibrates any predictive model’s scores by an isotonic projection onto the order $\hat{\pi}$, enforcing ordinal consistency while preserving as much of the model’s quantitative information as possible. Theoretically, we show: (1) modeling annotator heterogeneity yields strictly more efficient consensus estimation than homogeneity; (2) isotonic calibration enjoys risk bounds even when the consensus ranking is misspecified; and (3) \AtC\ asymptotically outperforms model-only assessment. Across semi-synthetic and real-world datasets, \AtC\ consistently improves accuracy and robustness over human-only or model-only assessments. Our results bridge judgment aggregation with model-free calibration, providing a principled recipe for human-centered assessment when ground truth is costly, scarce, or unverifiable. 
\end{abstract}

\section {Introduction}

Human-centered assessment tasks are essential for systematic decision-making in domains where ground truth is costly, unobservable, or only available in the future. Consider two scenarios: (1) Delivery platforms seek to estimate worker workload for fair compensation, yet true energy expenditure, though measurable via wearables, is impractical to collect at scale, necessitating reliance on worker judgments~\citep{cite-key,8663457,ho2012online,RAICA}. (2) Conference committees must evaluate paper quality for acceptance decisions, yet the true quality only becomes observable years later through future impact, requiring committees to aggregate reviewer assessments in the present~\citep{NEURIPS2021_eaf76caa,dolati2019systematic,10.1145/3490486.3538235,tran2021an}. These tasks share key characteristics: they seek standardized assessments from human judgments rather than individual preference satisfaction~\citep{10.1145/1102351.1102369,chau2022learning,hu2022explaining}, and while ground truth exists conceptually, such as workload or future impact, it cannot be directly observed when decisions are needed~\citep{alur2024human,NEURIPS2024_2375085c}.

Existing approaches tend to rely either solely on human judgments or solely on model predictions. Methods that use only annotator-provided comparative judgments, such as judgment aggregation algorithms~\citep{10.1145/2433396.2433420,ZhaoLWKMSX18,jin2020rank}, can weight annotators by expertise but remain confounded by inconsistent rating scales. For example, a lenient expert and a strict novice might agree on an item's relative quality yet assign very different absolute scores~\citep{ito2024mitigatingcognitivebiasesmulticriteria,khurana2024crowdcalibrator}. Conversely, methods that use only model-generated scores from proxy labels face a supervision crisis: the true target values may correlate with latent factors that are impractical to measure at scale, such as cognitive load in task difficulty estimation~\citep{pmlr-v48-xua16,guo2023towards,wang2023gcrl,lai2022loyalty}. This forces models to learn from noisy surrogates, propagating systematic biases into the assessments.

Intuitively, human judgments are easy to solicit but lack a universal scale, whereas model predictions are consistent by design but require supervised signals~\citep{cowgill2020bias, clinical-v-actuarial-1989}. Our goal is to combine the strengths of both sources: use the scale from the model assessment and the structure from human judgments~\citep{Hemmer02112025}. A key \textbf{insight} is that people are typically more reliable at comparing items than at absolute scoring (also known as Weber–Fechner laws in psychophysics~\citep{Weber2019}). We therefore extract only the ordinal information from the noisy human inputs, rather than trusting their raw rating values or aggregated scores.

We propose a two-stage framework called Aggregate-then-Calibrate (\AtC) that integrates the complementary advantages of human judgments and model assessments. In the aggregation stage, \AtC\ employs a heterogeneous rank aggregation approach~\citep{jin2019coride,10.1145/2433396.2433420,10.14778/3407790.3407855} to derive a consensus ranking of items from multiple annotators, explicitly accounting for each annotator’s individual scale and reliability. In the calibration stage, \AtC\ uses isotonic regression~\citep{6e0673c1-388c-3c59-aece-5a7736061f87,10.2307/2345150,NEURIPS2021_eaf76caa} to adjust an initial predictive model’s assessments so that they align with the consensus ranking. This two-step approach ensures that the final assessments are monotonic with respect to the human consensus judgments. By combining the ordinal structure (from human) with the scoring scale (from the model), we theoretically and empirically demonstrate that \AtC\ leverages the best of both: the human judgments provide a reliable ranking backbone, and the model provides consistent scoring, resulting in assessments that outperform either approach alone.

Our contributions are summarized as follows:
\begin{itemize}[leftmargin=*]
    \item \textbf{Conceptual:} We formalize a specific class of human-centered assessment problems where judgment aggregation (not preference optimization) is central for systematic decision-making. \AtC\ addresses these via a human-model complementarity: aggregated comparisons provide ordinal constraints, while models estimate metrically scaled scores. Moreover, it can be generalized to any predictive model and thus enables the use of any off-the-shelf predictive model without modification.
    \item \textbf{Theoretical:} We provide a comprehensive analysis for \AtC\ from three perspectives. First, to the best of our knowledge, Theorem~\ref{thm:cov-compare} offers the first proof that heterogeneous rank aggregation models are strictly more statistically efficient than homogeneous methods when annotator abilities vary, validating our approach for the initial aggregation stage. Second, the risk analysis in Theorem~\ref{thm:Robustness} makes novel contributions to isotonic regression theory beyond prior work~\citep{10.1214/aos/1021379864,chatterjee2015risk,bellec2018sharp} by accounting for projection onto a random cone and managing biased effective noise, which are two necessary conditions for our problem setting. Finally, these components culminate in our optimality guarantee in Theorem~\ref{thm:optimality}, formally demonstrating the power of integrating human ordinal judgments with model-based scores.
    \item \textbf{Empirical:} 
    \AtC\ consistently outperforms human-only and model-only assessments across semi-synthetic and real-world datasets, demonstrating superior accuracy and robustness under varying levels of data degradation. 
\end{itemize}
\section{Aggregate-then-Calibrate Framework}
\subsection{Overview}
We consider a human-centered assessment task for $n$ items using inputs from $m$ human annotators and a predictive model $p$. Let $\mathbf s \in \mathbb{R}^n$ denote the true (unobserved) latent scores of the $n$ items, representing their ideal ground-truth quality (e.g., the true proficiency of a student’s answer or the true merit of a product). Since $\mathbf s$ cannot be directly observed, we collect two complementary sources of information to estimate it:
\begin{itemize}[leftmargin=*]
\item \textbf{Human Judgments:} Annotators provide comparative assessments (e.g., pairwise preferences or ratings) for various subsets of the items. These judgments are aggregated to produce a consensus score vector, which we denote by $\tilde{\mathbf s} \in \mathbb{R}^n$. The vector $\tilde{\mathbf s}$ represents the aggregated consensus over item scores derived purely from the given human input. We model $\tilde{\mathbf s}$ as a noisy observation of the true scores $\mathbf s$; specifically, we assume $\tilde{\mathbf s} = \mathbf s + \tilde{\boldsymbol \epsilon}$, where $\tilde{\boldsymbol \epsilon} \sim \mathcal{N}(0, \tilde{\sigma}^2 I_n)$. In practice, $\tilde{\mathbf s}$ is not observed directly; instead, we estimate this consensus via the Stage-1 procedure. Let ${\mathbf s}^* \in \mathbb{R}^n$ denote the Stage-1 estimate of $\tilde{\mathbf s}$ (as described in Stage-1 below). Let $\hat{\pi}$ be the consensus ranking of the $n$ items, obtained by sorting the entries of ${\mathbf s}^*$ from lowest to highest. This ranking $\hat{\pi}$ reflects the collective ordering of items according to the annotators’ comparative judgments.
\item \textbf{Model Assessments:} A predictive model $p$ (e.g., a machine learning model analyzing item features) produces an initial score vector $\mathbf s_p \in \mathbb{R}^n$ for the items. We do not assume that $\mathbf s_p$ is perfectly calibrated or aligned with the human judgments; indeed, the model may have been trained on proxy labels or may even be unsupervised due to the lack of ground truth. We model $\mathbf s_p$ as a biased observation of the true scores, subject to an unknown systematic offset: specifically, $\mathbf s_p = \mathbf s + \boldsymbol{\nu}$, where $\boldsymbol{\nu} \in \mathbb{R}^n$ is an unknown error (or bias) vector. Given the relationship $\tilde{\mathbf s} = \mathbf s + \tilde{\boldsymbol \epsilon}$ from above, we can relate the model scores to the consensus by writing $\mathbf s_p = \tilde{\mathbf s} - \tilde{\boldsymbol \epsilon} + \boldsymbol{\nu}$. For simplicity, we assume that the model’s noise is independent of the annotators’ noise (i.e., $\boldsymbol{\nu}$ is independent of $\tilde{\boldsymbol \epsilon}$).
\end{itemize}
Our goal is to produce a final calibrated score vector $\hat{\mathbf s} \in \mathbb{R}^n$ for the $n$ items. The calibrated scores $\hat{\mathbf s}$ should (i) respect the consensus ordering $\hat{\pi}$ (to maintain agreement with collective human judgment), and (ii) remain as close as possible to the model’s original scores $\mathbf s_p$ (to preserve the model’s useful quantitative information). We formalize this goal by defining the AtC estimator as the projection of $\mathbf s_p$ onto the set of score vectors that are consistent with $\hat{\pi}$. Let
$\hat{\mathcal{M}} := \{\mathbf{y}\in\mathbb{R}^n:~y_{\hat{\pi}(1)} \le y_{\hat{\pi}(2)} \le \cdots \le y_{\hat{\pi}(n)}\}$ 
be the set of all length-$n$ vectors that are nondecreasing according to the ranking $\hat{\pi}$. The AtC estimate is then given by:
\begin{equation}\label{eq:projection}
\hat{\mathbf{s}} \;=\; \Pi_{\hat{\mathcal{M}}}(\mathbf{s}_p) \;=\; \arg\min_{\mathbf{y}\in \hat{\mathcal{M}}} \;\|\mathbf{y} - \mathbf{s}_p\|_2^2\,,
\end{equation}
where $\Pi_{\hat{\mathcal{M}}}(\mathbf{s}_p)$ denotes the Euclidean projection of $\mathbf s_p$ onto the constraint set $\hat{\mathcal{M}}$. By construction, $\hat{\mathbf s}$ is the closest vector to $\mathbf s_p$ (in $\ell_2$-distance) that does not violate the consensus ordering. In other words, $\hat{\mathbf s}$ is obtained by an isotonic regression of the model’s scores onto the ordering $\hat{\pi}$. 
Figure~\ref{fig:main_fig} illustrates the roles of $\tilde{\mathbf s}$, $\mathbf s_p$, and $\hat{\mathbf s}$ in \AtC.
% , with pseudocode provided in Appendix~\ref{app:alg}.

% \vspace*{-0.1in}
\begin{figure}[h]
\centering
\includegraphics[width=\textwidth]{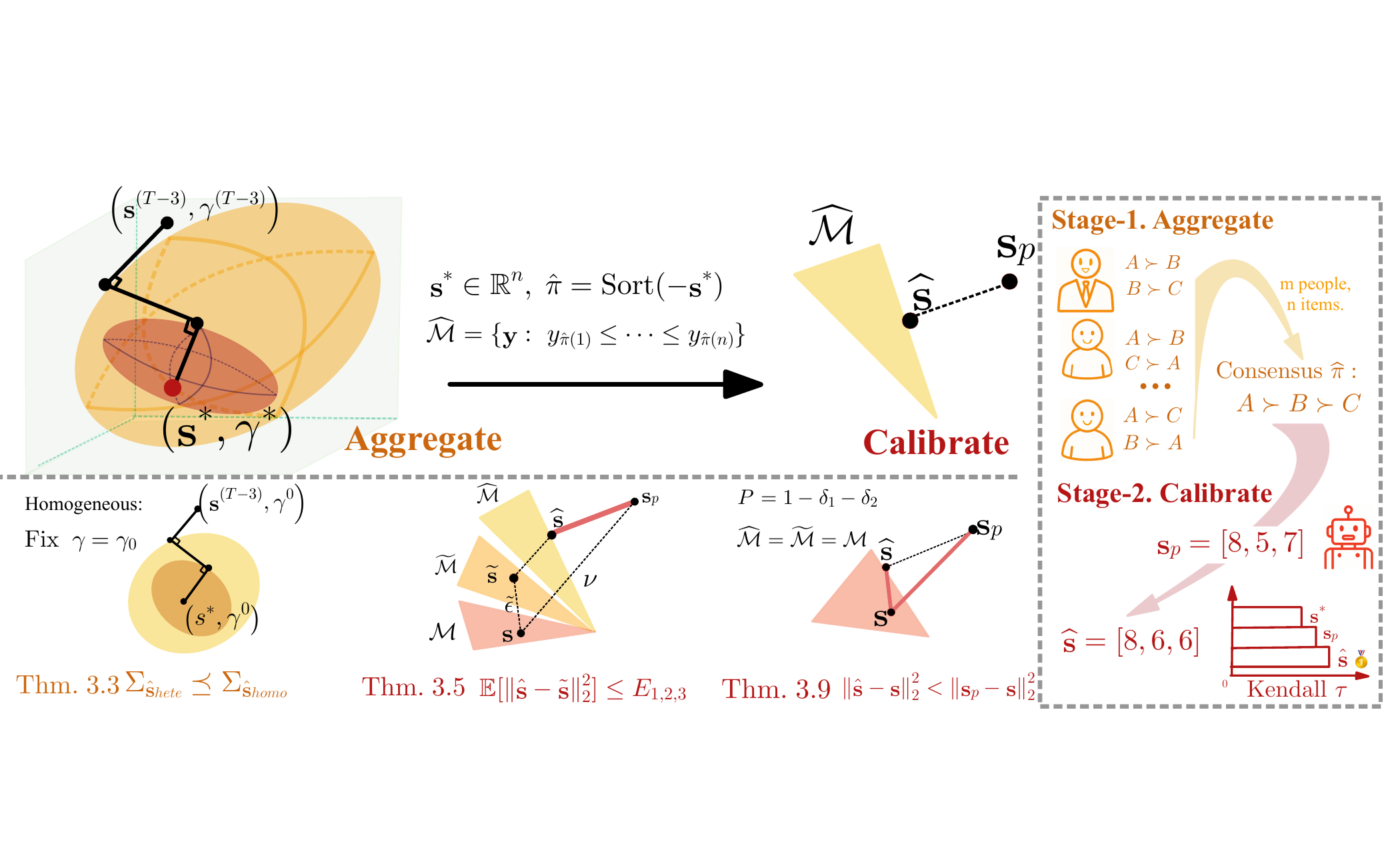}
\caption{\offer{\AtC\ Framework and Its Theoretical Guarantees.\\
\small Stage-1 aggregates human judgments into a ranking $\hat{\pi}$; Stage-2 calibrates model scores $\mathbf s_{p}$ via isotonic projection onto $\mathcal{M}$ to obtain $\hat{\mathbf s}$. Theorems~\ref{thm:cov-compare}, ~\ref{thm:Robustness}, and~\ref{thm:optimality} provide efficiency, risk, and superiority guarantees.}}
\label{fig:main_fig}
% \vspace*{-0.3in}
\end{figure}

\subsection{Stage-1: Judgment Aggregation under Heterogeneous Thurstone Model}
In Stage-1, we aggregate the human judgments into a single consensus ranking. We formulate this rank aggregation problem as finding a latent score vector (the consensus) that best explains the annotators’ comparative judgments. Each annotator $u \in \{1,\dots,m\}$ provides pairwise preferences over certain item pairs (for example, $i \succ j$ indicates annotator $u$ prefers item $i$ over item $j$). \offer{A key design choice is how to handle observed inconsistencies when different annotators judge the same items. When couriers assess delivery route difficulty, their judgments differ—but this reflects heterogeneity in expertise rather than fundamental subjective preferences. Rather than filtering novice opinions, our approach learns each annotator's reliability $\gamma_u$ from data and applies optimal weighting to produce consensus closer to the latent target.}
\paragraph{Preliminary: Heterogeneous Thurstone Model (HTM).}We assume these preferences follow a Thurstone choice model, extended to allow annotator-specific noise levels—this is HTM \citep{jin2020rank}. In particular, given any two items $i$ and $j$ that annotator $u$ compares, the probability that $u$ prefers $i$ to $j$ is modeled as: $\Pr\{u:~i \succ j\} \;=\; F\!(\gamma_{u}\,(\mathbf s_i - \mathbf s_j))\,$, where $F$ is a fixed symmetric cumulative distribution function (e.g., the standard normal CDF in Thurstone’s original formulation, or a logistic CDF in a Bradley–Terry model), and $\gamma_u > 0$ is annotator $u$’s precision (or consistency) parameter. A larger $\gamma_u$ means annotator $u$ is less noisy (more reliable) in their pairwise comparisons, whereas a smaller $\gamma_u$ implies higher variance (lower consistency) for that annotator. 
To infer the consensus scores from the observed comparisons, maximum-likelihood estimation (MLE) is performed under the HTM. 

Let $\mathcal{D}_{u}$ denote the set of observed pairwise comparisons from annotator $u$, where $i \succ j \in \mathcal{D}_{u}$ means $u$ preferred item $i$ over $j$. The log-likelihood of all annotators' data, given parameters $(\mathbf{s}, \mathbf{\gamma})$, is:
$\ell(\mathbf{s}, \mathbf{\gamma}) \;=\; \sum_{u=1}^{m} \;\sum_{\,i \succ j\,\in\,\mathcal{D}_{u}} \log\,F\!(\gamma_{u}\,(\,\mathbf s_i - \mathbf s_j\,))\,$,
with appropriate constraints on $(\mathbf{s}, \mathbf{\gamma})$ for identifiability (e.g., one can fix $\frac{1}{n}\sum_{i=1}^{n} \mathbf s_i = 0$ or constrain $\frac{1}{m}\sum_{u=1}^{m}\gamma_u = 1$). 
The consensus score estimates $\mathbf{s^*} = (\mathbf s^*_1,\ldots,\mathbf s^*_n)$ and the annotator reliabilities $\mathbf{\gamma}$ are obtained by maximizing this log-likelihood:
$\max_{\mathbf{s},\,\mathbf{\gamma}} \;\; \ell(\mathbf{s}, \mathbf{\gamma})\,$,
i.e. by finding the parameters that best explain the observed comparisons. This optimization can be carried out with iterative methods; for example, one can alternate between updating the score estimates $\mathbf{s}$ and the annotator precisions $\mathbf{\gamma}$ until convergence (each update can use gradient ascent on $\ell$ or other numerical routines).

The result of Stage-1 is an estimated consensus score vector $\mathbf s^{*}$. Sorting the components of $\mathbf s^{*}$ from lowest to highest yields the consensus ranking $\hat{\pi}$. In other words, $\hat{\pi}$ is the permutation of item indices such that $\mathbf s^{*}_{\hat{\pi}(1)} \le \mathbf s^{*}_{\hat{\pi}(2)} \le \cdots \le \mathbf s^{*}_{\hat{\pi}(n)}$. This Stage-1 outcome $\hat{\pi}$ reconciles the varying, potentially inconsistent human judgments into a single ranked list of items. 

\begin{algorithm}[H]
\label{alg:HRA}
\caption{Heterogeneous Rank Aggregation}
\begin{algorithmic}[1]
\STATE \textbf{Input:} Comparative judgment data $\{i \succ j \in \mathcal{D}_u:\; u=1,\ldots,m\}$ from $m$ annotators.
\STATE \textbf{Output:} Consensus score vector $\tilde{\mathbf{s}}$ and consensus ranking $\hat{\pi}$.
\STATE Initialize item scores $\mathbf s_i^{(0)} \leftarrow 0$ for all $i \in [n]$, annotator parameters $\gamma_u^{(0)} \leftarrow 1$ for all $u \in [m]$.
\FOR{$t = 1,2,\dots$ \textbf{until} convergence}
    % \STATE Compute  $\nabla_{\mathbf{s}}\,\ell\!\Big(\mathbf{s}^{(t-1)}, \gamma^{(t-1)}\Big)$ and $\nabla_{\gamma}\,\ell\!\Big(\mathbf{s}^{(t-1)}, \gamma^{(t-1)}\Big)$.
    \STATE Update score estimates: \\
    $\mathbf{s}^{(t)} \leftarrow \mathbf{s}^{(t-1)} \,+\, \eta_s \,\nabla_{\mathbf{s}}\ell\!\Big(\mathbf{s}^{(t-1)}, \gamma^{(t-1)}\Big)$.
    \STATE Identifiability: $\mathbf{s}^{(t)}=(\Ib-\one\one^{\top}/n) \mathbf{s}^{(t)}$.
    \STATE Update annotator parameters: \\ $\gamma_u^{(t)} \leftarrow \gamma_u^{(t-1)} \,+\, \eta_\gamma \,\nabla_{\gamma_u}\ell\!\Big(\mathbf{s}^{(t-1)}, \{\gamma_u^{(t-1)}\}\Big)$ for all $u \in [m]$.
    \STATE (Optionally, project $\{\gamma_u^{(t)}\}$ onto its feasible set, e.g., $\frac{1}{m}\sum_{u}\gamma_u^{(t)} = 1$.)
\ENDFOR
\STATE $\mathbf{s}^* \leftarrow \mathbf{s}^{(t_{\text{final}})}$ \hfill 
\STATE $\hat{\pi} \leftarrow \text{rank\_order}(\mathbf{s}^*)$ \hfill 
\STATE \textbf{return} $\hat{\pi}$.
\end{algorithmic}
\end{algorithm}

\subsection{Stage-2: Assessment Calibration via Isotonic Regression}
In Stage-2, we leverage the consensus ranking $\hat{\pi}$ from Stage-1 to calibrate the model’s raw scores $\mathbf s_p$. The key idea is to adjust the model’s scores so that the final outputs $\hat{\mathbf s}$ respect the consensus ordering $\hat{\pi}$ while remaining as faithful as possible to $\mathbf s_p$. We use the term \emph{calibration} here to mean rescaling or aligning the model-generated scores with the human-derived ranking constraints. This notion of calibration differs from its conventional meaning in probability estimation or psychology: instead of aligning predicted probabilities with observed frequencies, here we enforce an ordinal consistency between $\mathbf s_p$ and the consensus ranking. In practice, we implement this calibration by projecting $\mathbf s_p$ onto the monotonicity constraint set defined by $\hat{\pi}$, as formalized in Eq.~(\ref{eq:projection}).

\paragraph{Preliminary: Isotonic regression.} As shown in Eq.~(\ref{eq:projection}), the calibrated output can be obtained by solving an isotonic regression problem: we seek the adjusted score vector $\hat{\mathbf s}$ in $\hat{\mathcal{M}}$ that minimizes the squared error to $\mathbf s_p$. Equivalently, $\hat{\mathbf s}$ is the solution to
$\min_{\mathbf{x} \in \hat{\mathcal{M}}} \sum_{i=1}^n (x_i - \mathbf s_{p,i})^2$,
which is exactly the projection defined in Eq.~(\ref{eq:projection}). This optimization finds the closest score vector to $\mathbf s_p$ that does not violate the consensus ranking order. The solution to this isotonic regression is efficiently computed via the Pool-Adjacent-Violators (PAV) algorithm \citep{de2010isotone}. PAV iteratively scans through the scores sorted by $\hat{\pi}$ to detect any adjacent pair that is out of order (i.e., violates the required non-decreasing condition), and fixes such violations by averaging the offending scores. This process repeats until no more violations remain. The result is the calibrated score vector $\hat{\mathbf s}$, which by construction satisfies $\hat{\mathbf s}_{\hat{\pi}(1)} \le \hat{\mathbf s}_{\hat{\pi}(2)} \le \cdots \le \hat{\mathbf s}_{\hat{\pi}(n)}$ (i.e., it respects the consensus ranking).

Notably, if the model’s score vector $\mathbf s_p$ already happens to be perfectly consistent with $\hat{\pi}$, the isotonic regression will leave it unchanged. Otherwise, the model’s scores are adjusted in a minimally invasive way (with respect to squared-error) to remove any ranking inconsistencies. The calibrated scores $\hat{\mathbf s}$ can be interpreted as a revised set of model-generated scores that combine the model’s quantitative estimates with the human-derived ordinal constraints.

    \begin{algorithm}[H]
    \label{alg:IRC}
    \caption{Isotonic Regression Calibration}
    \begin{algorithmic}[1]
    \STATE \textbf{Input:} Model score vector $\mathbf{s}_p = (\mathbf s_{p,1},\ldots,\mathbf s_{p,n})$; consensus ranking $\hat{\pi}$ over $n$ items.
    \STATE \textbf{Output:} Calibrated score vector $\hat{\mathbf{s}} = (\hat{\mathbf s}_1,\ldots,\hat{\mathbf s}_n)$ that is monotonic w.rt.\ $\hat{\pi}$.
    \STATE Relabel item indices of $\mathbf{s}_p$ according to $\hat{\pi}$ (so that index $1$ corresponds to $\hat{\pi}(1)$, etc.). Let $(y_1,\ldots,y_n)$ be the reordered scores, where $y_1 = \mathbf s_{p,\hat{\pi}(1)}$, $y_2 = \mathbf s_{p,\hat{\pi}(2)}$, ..., $y_n = \mathbf s_{p,\hat{\pi}(n)}$.
    \STATE Initialize $\hat{\mathbf s}_i \leftarrow y_i$ for $i=1,\ldots,n$.
    \REPEAT 
        \FOR{$i = 1$ to $n-1$}
            \IF{$\hat{\mathbf s}_i > \hat{\mathbf s}_{i+1}$}
                \STATE $\hat{\mathbf s}_i, \hat{\mathbf s}_{i+1} \leftarrow \frac{\hat{\mathbf s}_i + \hat{\mathbf s}_{i+1}}{2}$  \hfill 
            \ENDIF
        \ENDFOR
    \UNTIL{$\hat{\mathbf s}_1 \le \hat{\mathbf s}_2 \le \cdots \le \hat{\mathbf s}_n$}
    \STATE Undo the reordering: for each item $j$, set $\hat{\mathbf s}_j$ to the value of the calibrated score assigned to item $j$'s position in the sorted order.
    \STATE \textbf{return} $\hat{\mathbf{s}}$.
    \end{algorithmic}
    \end{algorithm}
    
\section{Main Theoretical Results}
\label{sec:theory}
As illustrated in Figure~\ref{fig:main_fig}, \AtC's intuitive design offers strong theoretical guarantees for human-centered assessment across three perspectives: heterogeneity, robustness, and optimality. In this section, we present the main results, with related work discussions in \offer{Section~\ref{sec:related}} and detailed assumptions and proofs in Appendix~\ref{Appendix-of-cov}, ~\ref{Appendix:th2}, and~\ref{sec:improvement_probability}.

\subsection{Heterogeneous Efficiency Guarantee}

We now demonstrate the efficiency advantage of the HTM estimator over an estimator derived from a misspecified homogeneous model. When annotators exhibit varying levels of reliability, a correctly specified HTM is expected to yield score estimates with lower asymptotic variance compared to a homogeneous model that incorrectly assumes uniform annotator reliability. The following lemmas formalize the asymptotic covariance structures of the two estimators.  Specifically, $\hat{\theta}_{hete} = (\hat{\mathbf{s}}_{hete}, \hat{\gamma}_{hete})$ is the MLE for the HTM. $\hat{\mathbf{s}}_{homo}$ is the QMLE for the homogeneous model with fixed $\gamma_0=1$.
To establish the asymptotic properties of the estimators $\hat{\mathbf{s}}_{hete}$ and $\hat{\mathbf{s}}_{homo}$, we assume:

\begin{assumption}[Independent Observations]
\label{ass:thm33_independence}
The pairwise comparison outcomes $Y_{ul}$ are independent conditional on the true parameters.
\end{assumption}

\begin{assumption}[True Data Generating Process]
\label{ass:thm33_dgp}
The data are generated by a heterogeneous Thurstone model (HTM) with true parameters $\theta_0 = (\mathbf{s}^*, \gamma^*) \in \mathbb{R}^{n+m}$. To ensure identifiability of $\mathbf{s}^*$, we impose the constraint $\mathbf{1}^\top \mathbf{s}^* = 0$, where $\mathbf{1}$ denotes the all-ones vector. The annotator-specific parameter vector $\gamma^* = (\gamma_1^*, \ldots,\gamma_m^*)^\top$ contains at least two distinct values. The link function $F(\cdot)$ is assumed to be known and twice continuously differentiable.
\end{assumption}

\begin{assumption}[Regularity Conditions]
\label{ass:thm33_regularity}
Both the HTM and the homogeneous model satisfy the standard regularity conditions in \citep[Assumptions 1--6]{white1982maximum}, which ensure the consistency and asymptotic normality of the MLE and QMLE, respectively. 
These conditions include compactness of the parameter space, measurability and smoothness of the log-likelihood functions, existence of the required expectations, unique identifiability of the parameters (or QMLE limits), and non-singularity of the relevant matrices on the identifiable parameter subspace. 
Such conditions are typically satisfied by common link functions, such as the logistic CDF and the Gaussian CDF, provided that the parameter space is well-defined and the comparison design is non-degenerate.
Furthermore, for the correctly specified HTM, of \citep[Assumption 7]{white1982maximum} is assumed to hold at $\theta_0$, ensuring the information matrix equivalence.
\end{assumption}

\begin{lemma}
\label{lem:hetero-cov}
Under conditions in Assumption~\ref{ass:thm33_independence},~\ref{ass:thm33_dgp} and~\ref{ass:thm33_regularity}, the score vector estimator $\hat{\mathbf{s}}_{hete}$ obtained from the MLE of the HTM has an asymptotic covariance matrix given by:
\begin{align}
{\Sigma_{\hat{\mathbf{s}}_{hete}}} \;=\; \frac{1}{N}\Big(I_{ss} - I_{s\gamma}\,I_{\gamma\gamma}^{-1} I_{\gamma s}\Big)^{+}\!,
\end{align}
where $N=\sum_{u = 1}^m k_u$ is the total number of observations. The matrices $I_{ss}$, $I_{\gamma\gamma}$ and $I_{s\gamma}$ are the Fisher information sub-matrices for $\mathbf{s}$ and $\boldsymbol{\gamma}$. $(I_{ss} - I_{s\gamma}\,I_{\gamma\gamma}^{-1} I_{\gamma s})^{+}$ is the Schur complement of the full Fisher information with respect to $\mathbf{s}$, representing the effective Fisher information for $\mathbf{s}$ after accounting for the estimation of $\boldsymbol{\gamma}$. The pseudoinverse addresses the identifiability constraint $\mathbf{1}^\top \mathbf{s} = 0$.
\end{lemma}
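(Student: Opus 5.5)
The plan is to apply standard MLE asymptotic theory for a correctly specified model and then read off the $\mathbf{s}$-block of the inverse information by block matrix inversion. The only non-routine point is the identifiability constraint.

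\textbf{Step 1: asymptotic normality of the full estimator.} Write the log-likelihood as a sum of $N$ independent contributions, which is allowed by Assumption~\ref{ass:thm33_independence}. Each comparison $Y_{ul}$ of items $(i,j)$ by annotator $u$ contributes
\[
\ell_{ul}(\theta)=Y_{ul}\log F(\gamma_u(s_i-s_j))+(1-Y_{ul})\log\bigl(1-F(\gamma_u(s_i-s_j))\bigr).
\]
Define the average Fisher information $I(\theta_0)=\lim N^{-1}\sum_{u,l}\mathbb{E}[\nabla\ell_{ul}\nabla\ell_{ul}^\top]$. Under Assumption~\ref{ass:thm33_dgp} the model is correctly specified. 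Under Assumption~\ref{ass:thm33_regularity}, White's Assumptions 1--7 hold, so the information matrix equality $A(\theta_0)=-B(\theta_0)$ is available. White's Theorem 3.2 then gives $\sqrt N(\hat\theta_{hete}-\theta_0)\to\mathcal N(0,A^{-1}BA^{-1})$, and the sandwich collapses to $I(\theta_0)^{-1}$.

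\textbf{Step 2: handling the constraint $\mathbf 1^\top\mathbf s=0$.} Since $\ell$ depends on $\mathbf s$ only through differences, $I(\theta_0)(\mathbf 1,0)^\top=0$, so the full $I$ is singular. Work on the identified subspace. Reparametrize $\mathbf s=U\mathbf t$, where $U\in\mathbb R^{n\times(n-1)}$ has orthonormal columns spanning $\mathbf 1^\perp$. The information for $(\mathbf t,\gamma)$ is
\[
\begin{pmatrix}U^\top I_{ss}U & U^\top I_{s\gamma}\\ I_{\gamma s}U & I_{\gamma\gamma}\end{pmatrix},
\]
which is nonsingular by the non-singularity part of Assumption~\ref{ass:thm33_regularity}. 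The MLE of the constrained problem corresponds to the MLE in $(\mathbf t,\gamma)$.

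\textbf{Step 3: block inversion and mapping back.} Apply the Schur-complement formula. The $\mathbf t$-block of the inverse is $\bigl(U^\top(I_{ss}-I_{s\gamma}I_{\gamma\gamma}^{-1}I_{\gamma s})U\bigr)^{-1}$; this needs $I_{\gamma\gamma}$ invertible. That holds because $I_{\gamma\gamma}$ is diagonal with entries $\sum_l \mathbb E[(\partial_{\gamma_u}\ell_{ul})^2]>0$, provided each annotator compares items with distinct true scores. By the delta method with $\mathbf s=U\mathbf t$, the asymptotic covariance of $\hat{\mathbf s}_{hete}$ is $N^{-1}U(U^\top S U)^{-1}U^\top$, where $S=I_{ss}-I_{s\gamma}I_{\gamma\gamma}^{-1}I_{\gamma s}$.

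\textbf{Step 4: identifying this with $S^+$ (the main obstacle).} The claim $U(U^\top SU)^{-1}U^\top=S^+$ holds exactly when $\ker S=\operatorname{span}(\mathbf 1)$.
\begin{itemize}
\item \emph{$S\mathbf 1=0$.} We have $I_{ss}\mathbf 1=0$. Also $I_{\gamma s}\mathbf 1=0$, because the $\mathbf 1$-direction is a null direction of the full information, i.e.\ $(\mathbf 1,0)\in\ker I$.
\item \emph{$S$ is positive definite on $\mathbf 1^\perp$.} This follows from the nonsingularity of the reduced information in Step 2.
\end{itemize}
Given these two facts, $S$ is symmetric PSD with range $\mathbf 1^\perp$. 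Then $U(U^\top SU)^{-1}U^\top$ satisfies the four Moore--Penrose conditions, since $UU^\top$ is the orthogonal projector onto $\operatorname{range}(S)$.

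The main care needed is therefore this kernel argument. It rests on the comparison design connecting all items, which is the ``non-degenerate design'' clause of Assumption~\ref{ass:thm33_regularity}. Combined with the explicit invertibility of $I_{\gamma\gamma}$, this yields the stated formula $\Sigma_{\hat{\mathbf s}_{hete}}=N^{-1}S^+$.
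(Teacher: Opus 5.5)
Your route is the paper's route: correctly specified MLE, partitioned inverse of the Fisher information, Schur complement in $\mathbf s$, and a pseudoinverse for the constraint. You are also more explicit than the paper about the crux, namely that everything hinges on $\ker S=\operatorname{span}(\mathbf 1)$.

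That claim is false. So are the two claims that feed it: nonsingularity of the reduced information in Step 2, and positive definiteness of $S$ on $\mathbf 1^\perp$ in Step 4. The HTM has a second non-identifiability that $\mathbf 1^\top\mathbf s=0$ does not remove. The quantity $x_{ul}=\gamma_u a_{l,u}^\top\mathbf s$ is invariant under $(\mathbf s,\gamma)\mapsto(c\,\mathbf s,\gamma/c)$, so $(\mathbf s^*,-\gamma^*)\in\ker I(\theta_0)$.

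You can check this from the explicit blocks. First, $(I_{\gamma s}\mathbf s^*)_u=\gamma_u^*(I_{\gamma\gamma})_{uu}$, i.e.\ $I_{\gamma\gamma}^{-1}I_{\gamma s}\mathbf s^*=\gamma^*$. Second, $I_{s\gamma}\gamma^*=I_{ss}\mathbf s^*$. Hence $S\mathbf s^*=0$. Your own condition for invertibility of $I_{\gamma\gamma}$ forces $\mathbf s^*\neq 0$, and $\mathbf 1^\top\mathbf s^*=0$ makes $\mathbf s^*$ independent of $\mathbf 1$. So $\operatorname{rank}S\le n-2$.

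In your coordinates, $(U^\top\mathbf s^*,-\gamma^*)$ is a null vector of the Step 2 matrix. Consequently neither White's theorem (which needs that matrix nonsingular) nor the inversion of $U^\top SU$ in Step 3 goes through. Citing the nonsingularity clause of Assumption~\ref{ass:thm33_regularity} does not help. No HTM with $\mathbf s^*\neq0$ satisfies it on $\{\mathbf 1^\top\mathbf s=0\}$, because that set is not the identifiable subspace. The paper's proof makes the same slip when it asserts $\Lambda=\mathrm{diag}(\lambda_1,\dots,\lambda_{n-1},0)$ with every $\lambda_i>0$.

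The repair is a scale normalization, and which one you choose matters.
\begin{itemize}
\item If you fix the scale of $\mathbf s$ (e.g.\ $\|\mathbf s\|_2=1$, with $\mathbf s^*$ normalized accordingly) and leave $\gamma$ free, rerun Steps 2--4 with $U$ replaced by an orthonormal basis $U_2$ of $\{\mathbf 1,\mathbf s^*\}^\perp$. This gives $N^{-1}U_2(U_2^\top SU_2)^{-1}U_2^\top=N^{-1}S^+$, provided the design makes $\ker I(\theta_0)$ exactly $\operatorname{span}\{(\mathbf 1,0),(\mathbf s^*,-\gamma^*)\}$. The stated formula survives, but with $\ker S=\operatorname{span}(\mathbf 1,\mathbf s^*)$ and $S^+$ of rank $n-2$.
\item If you instead normalize $\gamma$ (say $\frac1m\sum_u\gamma_u=1$), the $\mathbf s$-block is $U(U^\top S'U)^{-1}U^\top$. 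Here $S'=I_{ss}-I_{s\gamma}Q(Q^\top I_{\gamma\gamma}Q)^{-1}Q^\top I_{\gamma s}$, with $Q$ spanning $\mathbf 1^\perp\subset\mathbb R^m$. This matrix does not annihilate $\mathbf s^*$, so it is not $S^+$.
\end{itemize}

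Finally, the design condition you need is stronger than connectivity of the pooled comparison graph. Because $I_{\gamma\gamma}$ is diagonal, $S$ splits as a sum of per-annotator Schur complements. Once each $\gamma_u$ is free, an annotator who makes a single comparison contributes exactly zero to $S$. If every annotator is like that, then $S=0$.
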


\begin{lemma}
\label{lem:homo-cov}
Under conditions in Assumption~\ref{ass:thm33_independence},~\ref{ass:thm33_dgp} and~\ref{ass:thm33_regularity}, consider a homogeneous Thurstone model assuming a known constant annotator accuracy $\gamma_0$. The quasi-MLE $\hat{\mathbf{s}}_{homo}$, derived from this potentially misspecified model, has an asymptotic covariance matrix given by the classic sandwich form:
\begin{align}
{\Sigma_{\hat{\mathbf{s}}_{homo}}} \;=\; \frac{1}{N}\Big(A_{homo}(\mathbf{s}_*)^{+} B_{homo}(\mathbf{s}_*) A_{homo}(\mathbf{s}_*)^{+}\Big)\!,
\end{align}
where $\mathbf{s}_*$ is the probability limit of $\hat{\mathbf{s}}_{homo}$. The matrix $A_{homo}(\mathbf{s}_*)$ is the negative of the expected Hessian and $B_{homo}(\mathbf{s}_*)$ is the expected outer product of the scores from the misspecified model's log-likelihood, with expectations taken under the true heterogeneous data-generating process. The pseudoinverse also addresses identifiability.
\end{lemma}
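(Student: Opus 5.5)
The plan is to treat $\hat{\mathbf{s}}_{homo}$ as a quasi-maximum likelihood estimator in the sense of White and to read off its limiting distribution from the standard M-estimation expansion. The estimator is not a consistent estimator of the true $\mathbf{s}^*$ in general. Instead it targets the pseudo-true value $\mathbf{s}_*$, the maximizer of the expected homogeneous log-likelihood under the true HTM. So the first step is to write the homogeneous per-observation log-likelihood
\[
q(Y_{ul};\mathbf{s}) = Y_{ul}\log F\big(\gamma_0(\mathbf{s}_i-\mathbf{s}_j)\big) + (1-Y_{ul})\log F\big(-\gamma_0(\mathbf{s}_i-\mathbf{s}_j)\big),
\]
for the pair $(i,j)$ compared in observation $l$ of annotator $u$. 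I then define
\[
\bar Q(\mathbf{s}) = \lim_N N^{-1}\sum_{u,l}\mathbb{E}_{\theta_0}\, q(Y_{ul};\mathbf{s})
\]
on the subspace $\{\mathbf{1}^\top\mathbf{s}=0\}$. Assumption~\ref{ass:thm33_regularity}, i.e.\ White's Assumptions 1--6 including unique identifiability of the QMLE limit, together with Assumption~\ref{ass:thm33_independence}, gives a uniform law of large numbers. This yields $\hat{\mathbf{s}}_{homo}\to\mathbf{s}_*$ in probability, where $\mathbf{s}_*$ is the unique maximizer of $\bar Q$.

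The second step is a mean-value expansion of the first-order condition $\sum_{u,l}\nabla_{\mathbf{s}} q(Y_{ul};\hat{\mathbf{s}}_{homo})=0$ around $\mathbf{s}_*$. This gives
\[
0 = N^{-1/2}\sum \nabla q(\mathbf{s}_*) + \Big(N^{-1}\sum\nabla^2 q(\bar{\mathbf{s}})\Big)\sqrt N(\hat{\mathbf{s}}_{homo}-\mathbf{s}_*).
\]
Smoothness of $F$ (twice continuously differentiable, Assumption~\ref{ass:thm33_dgp}) and consistency let me replace the averaged Hessian by $-A_{homo}(\mathbf{s}_*)$. Here $A_{homo}$ is the negative expected Hessian under the true heterogeneous DGP. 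The score term has mean zero at $\mathbf{s}_*$ by the first-order condition for $\bar Q$. It is a sum of independent bounded terms, so a Lindeberg CLT gives convergence to $\mathcal{N}(0,B_{homo}(\mathbf{s}_*))$, with $B_{homo}$ the expected outer product of the scores under $\theta_0$.

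Crucially, because $\gamma^*$ has at least two distinct values while the model fixes $\gamma_0$, the information equality fails. So $A_{homo}\neq B_{homo}$ in general, and no simplification is available. This is why the sandwich form is needed.

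The main obstacle is the rank deficiency from translation invariance. Every $q$ depends on $\mathbf{s}$ only through differences, so $A_{homo}\mathbf{1}=0$ and $B_{homo}\mathbf{1}=0$, and $A_{homo}$ cannot be inverted. I would handle this by restricting to the subspace $\mathcal{S}=\mathbf{1}^\perp$, where the estimator lives after the centering step. Non-singularity on the identifiable subspace (Assumption~\ref{ass:thm33_regularity}) makes $A_{homo}|_{\mathcal{S}}$ invertible, and its inverse on $\mathcal{S}$ extended by zero on $\mathrm{span}(\mathbf{1})$ is exactly the Moore--Penrose pseudoinverse $A_{homo}^{+}$. Since the increment $\sqrt N(\hat{\mathbf{s}}_{homo}-\mathbf{s}_*)$ lies in $\mathcal{S}$, solving the expansion gives
\[
\sqrt N(\hat{\mathbf{s}}_{homo}-\mathbf{s}_*) = A_{homo}^{+}\,N^{-1/2}\textstyle\sum\nabla q(\mathbf{s}_*) + o_p(1).
\]
Applying Slutsky's lemma then yields asymptotic covariance $A_{homo}^{+}B_{homo}A_{homo}^{+}$ for $\sqrt N(\hat{\mathbf{s}}_{homo}-\mathbf{s}_*)$, i.e.\ $\Sigma_{\hat{\mathbf{s}}_{homo}} = N^{-1}A_{homo}^{+}B_{homo}A_{homo}^{+}$, as claimed. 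Throughout, I take the limits defining $A_{homo}$ and $B_{homo}$ as the design-averaged expectations, assumed to exist per the regularity conditions.
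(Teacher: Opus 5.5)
Your proposal is correct and follows the paper's route. The paper simply invokes White's Theorem 2.2 (consistency of the QMLE for the KL-minimizing pseudo-true value $\mathbf{s}_*$) and Theorem 3.2 (the sandwich limit), then replaces the inverses by pseudoinverses to handle $\mathbf{1}^\top\mathbf{s}=0$; you unpack those theorems through a uniform LLN, a mean-value expansion of the score equation and a Lindeberg CLT. Your restriction to $\mathbf{1}^{\perp}$ uses $A_{homo}\mathbf{1}=B_{homo}\mathbf{1}=0$ and the fact that the score and $\sqrt{N}(\hat{\mathbf{s}}_{homo}-\mathbf{s}_*)$ lie in that subspace, which actually justifies the pseudoinverse step the paper only asserts, and your design-averaged limits cover the non-identically distributed comparisons that White's i.i.d. theorems do not literally address.
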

Lemmas~\ref{lem:hetero-cov} and \ref{lem:homo-cov} allow us to formally compare the asymptotic efficiency of the two estimation strategies. The following theorem states that the estimator from the correctly specified HTM is asymptotically superior. The proof is given in Appendix~\ref{app:proof_lemma3.1} and~\ref{app:proof_lemma3.2}.

\begin{theorem}
\label{thm:cov-compare}
\textbf{(Heterogeneous Efficiency Guarantee)}.
Let $\Sigma_{\hat{\mathbf{s}}_{hete}}$ and $\Sigma_{\hat{\mathbf{s}}_{homo}}$ be the asymptotic covariance matrices of the score estimators as defined in Lemmas~\ref{lem:hetero-cov} and \ref{lem:homo-cov}. Then,
\begin{align}
\Sigma_{\hat{\mathbf{s}}_{hete}} \;\preceq\; \Sigma_{\hat{\mathbf{s}}_{homo}} \!,
\end{align}
where $\preceq$ denotes the Loewner order. This implies that the matrix ${\Sigma_{\hat{\mathbf{s}}_{homo}}} - {\Sigma_{\hat{\mathbf{s}}_{hete}}}$ is positive semi-definite. Furthermore, if the true annotator accuracies $\boldsymbol{\gamma}^*$ are not all equal (i.e., genuine heterogeneity exists), then this inequality is strict within the identifiable subspace, meaning ${\Sigma_{\hat{\mathbf{s}}_{homo}}} - {\Sigma_{\hat{\mathbf{s}}_{hete}}}$ is positive definite on this subspace.
\end{theorem}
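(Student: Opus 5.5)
The plan is to treat the homogeneous QMLE as a Z-estimator and to compare it with the efficient estimator through a matrix Cauchy--Schwarz (Godambe-type) argument restricted to the identifiable subspace $\mathcal{S}=\{\mathbf{v}:\mathbf{1}^\top\mathbf{v}=0\}$. Let $\ell_s,\ell_\gamma$ denote the per-observation scores of the HTM at $\theta_0$. Define the efficient score for $\mathbf{s}$ as
\[
\tilde\ell_s=\ell_s-I_{s\gamma}I_{\gamma\gamma}^{-1}\ell_\gamma .
\]
It has covariance $\tilde I:=I_{ss}-I_{s\gamma}I_{\gamma\gamma}^{-1}I_{\gamma s}$, and by Lemma~\ref{lem:hetero-cov} we have $N\Sigma_{\hat{\mathbf{s}}_{hete}}=\tilde I^{+}$. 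Let $\psi(\mathbf{s})$ be the per-observation score of the homogeneous log-likelihood with $\gamma_0$ fixed. Under the regularity conditions, the QMLE has the influence function $-A_{homo}^{+}\psi(\mathbf{s}_*)$, and by Lemma~\ref{lem:homo-cov} its covariance is $N\Sigma_{\hat{\mathbf{s}}_{homo}}=A^{+}BA^{+}$.

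\textbf{Step 1: align the targets.} The comparison only makes sense if both estimators target the same point of $\mathcal{S}$. I would first show, or impose as part of the setup, that the pseudo-true value $\mathbf{s}_*$ coincides with $\mathbf{s}^*$ (after the centering constraint). Equivalently, $\mathbb{E}_{\theta_0}\psi(\mathbf{s}^*)=0$, and the estimating equation stays unbiased along the submodel $\theta\mapsto(\mathbf{s},\gamma)$ in a neighbourhood of $\theta_0$. Because $\psi$ does not depend on $\gamma$, this local unbiasedness holds identically in $(\mathbf{s},\gamma)$. Differentiating $\int\psi(\mathbf{s})\,dP_{\mathbf{s},\gamma}=0$ then gives two identities:
\[
\mathrm{Cov}(\psi,\ell_s)=A_{homo}\quad\text{on }\mathcal{S},\qquad \mathrm{Cov}(\psi,\ell_\gamma)=0 .
\]
It follows that $\mathrm{Cov}(\psi,\tilde\ell_s)=A_{homo}$. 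This step is the main obstacle. For a generic symmetric $F$ and unequal $\gamma_u$, the homogeneous fit can converge to a distorted vector rather than to $\mathbf{s}^*$, since it is not merely a rescaling. I would try first to verify Fisher-consistency in the Thurstone/Gaussian case on balanced designs. Failing that, I would state the theorem for the common functional, comparing the QMLE with the efficient estimator of $\mathbf{s}_*$ viewed as a smooth function $g(\theta_0)$. In that case $J_g\tilde I^{+}J_g^\top$ replaces $\tilde I^{+}$, and the same argument below goes through.

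\textbf{Step 2: Loewner inequality.} Write $X=A^{+}\psi$ and $Y=\tilde I^{+}\tilde\ell_s$, both with values in $\mathcal{S}$. From Step 1 we get
\[
\mathrm{Cov}(X,Y)=A^{+}A\tilde I^{+}=\tilde I^{+}\quad\text{on }\mathcal{S},\qquad \mathrm{Var}(Y)=\tilde I^{+}.
\]
Hence
\[
0\preceq \mathrm{Var}(X-Y)=A^{+}BA^{+}-2\tilde I^{+}+\tilde I^{+}=N\big(\Sigma_{\hat{\mathbf{s}}_{homo}}-\Sigma_{\hat{\mathbf{s}}_{hete}}\big),
\]
which proves the first claim.

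\textbf{Step 3: strictness.} Positive definiteness on $\mathcal{S}$ fails only if some $\mathbf{v}\in\mathcal{S}\setminus\{0\}$ satisfies $\mathbf{v}^\top(X-Y)=0$ almost surely. That would mean a fixed linear combination of the homogeneous scores equals the corresponding combination of efficient scores for every comparison outcome. I would write both out per observation. The comparison $(i,j)$ by annotator $u$ contributes to $\psi$ a term $w_0(d_{ij})(Y_{ul}-F(\gamma_0 d_{ij}))(\mathbf{e}_i-\mathbf{e}_j)$, with a weight that is common to all annotators. Its contribution to $\tilde\ell_s$ instead carries the annotator-dependent factor $\gamma_u F'(\gamma_u d_{ij})/[F(1-F)](\gamma_u d_{ij})$, together with a centering correction from $\ell_\gamma$. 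Matching coefficients of $Y_{ul}$ across two annotators $u,u'$ with $\gamma_u^*\neq\gamma_{u'}^*$ who compare overlapping pairs (guaranteed by the non-degenerate design) forces these factors to be proportional. Unless $\mathbf{v}=0$, that contradicts the assumption that $\gamma^*$ has two distinct values. This yields strict inequality on $\mathcal{S}$.
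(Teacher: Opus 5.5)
Your Step~1 is more than the hard step. The local-unbiasedness property it needs is false in the HTM, so it can be neither proved nor imposed as an assumption.

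\textbf{Why Step~1 fails.} The justification ``$\psi$ does not depend on $\gamma$, so unbiasedness holds identically in $(\mathbf{s},\gamma)$'' conflates the estimating function with the law under which it is averaged. Write $w=F'/[F(1-F)]$ and $d_l=a_{l,u}^\top\mathbf{s}$. Then $\mathbb{E}_{\mathbf{s},\gamma}\psi(\mathbf{s})\propto\sum_{u,l}\gamma_0 w(\gamma_0 d_l)\bigl(F(\gamma_u d_l)-F(\gamma_0 d_l)\bigr)a_{l,u}$, which moves with $\gamma$. Differentiating in $\gamma_u$ gives $\mathrm{Cov}(\psi,\ell_{\gamma_u})\propto\sum_l c_l d_l a_{l,u}$ with $c_l=\gamma_0 w(\gamma_0 d_l)F'(\gamma_u^* d_l)>0$. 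Hence $\mathbf{s}^{*\top}\mathrm{Cov}(\psi,\ell_{\gamma_u})\propto\sum_l c_l d_l^2>0$ for every annotator who compares non-tied items. The homogeneous fit always reacts to the scale direction $(\mathbf{s},\gamma)\mapsto(c\mathbf{s},\gamma/c)$. So $\mathrm{Cov}(\psi,\ell_\gamma)\neq 0$, and the identity $\mathrm{Cov}(X,Y)=\tilde I^{+}$ has no basis.

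Fisher consistency also fails in the simplest balanced design. For $n=2$, $F(\gamma_0 d_*)=\frac{1}{m}\sum_u F(\gamma_u^* d)$. Strict concavity of $F$ on $[0,\infty)$ (logistic or normal) then gives $d_*<d$ when $\gamma_0=\frac{1}{m}\sum_u\gamma_u^*=1$.

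\textbf{The conclusion itself can fail.} With only $\mathbf{1}^\top\mathbf{s}=0$ one checks $I_{\gamma\gamma}^{-1}I_{\gamma s}\mathbf{s}^*=\gamma^*$, so $S\mathbf{s}^*=0$. The $S^{+}$ of Lemma~\ref{lem:hetero-cov} therefore just assigns zero variance to the unidentified scale direction, and a normalization such as $\frac{1}{m}\sum_u\gamma_u=1$ is needed. Under it, take $n=m=2$, $k_1=k_2=k$, logistic $F$, $\gamma_0=1$, $\gamma^*=(1/2,3/2)$ and $d=s_2^*-s_1^*=2$. Let $p_u=F(\gamma_u^* d)$, $\bar p=(p_1+p_2)/2$, and let $\bar Y_u,\bar Y$ be the per-annotator and pooled win frequencies.
\begin{itemize}
\item The HTM is saturated per annotator, so $\hat d_{hete}=\frac{1}{2}\sum_u F^{-1}(\bar Y_u)$, with asymptotic variance $\frac{1}{4k}\sum_u\frac{1}{p_u(1-p_u)}\approx 6.8/k$.
\item The homogeneous estimator is $\hat d_{homo}=F^{-1}(\bar Y)$, with asymptotic variance $\frac{\sum_u p_u(1-p_u)}{4k\,[\bar p(1-\bar p)]^2}\approx 3.4/k$.
\end{itemize}
So $\Sigma_{\hat{\mathbf{s}}_{homo}}\prec\Sigma_{\hat{\mathbf{s}}_{hete}}$ on the identifiable subspace. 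The homogeneous estimator trades variance for bias ($d_*\approx 1.67$). Even rescaled to target $d$, its variance ($\approx 4.9/k$) stays below; this contradicts nothing, because it is not a regular estimator of $d$ in the HTM.

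\textbf{The fallback and Step~3.} Comparing $\hat{\mathbf{s}}_{homo}$ with the efficient estimator of the pseudo-true functional $g(\theta)=\mathbf{s}_*(\theta)$ is the correct way to make a Godambe/convolution argument valid, since $\hat{\mathbf{s}}_{homo}$ is regular for $g$ throughout the HTM. But it proves a different theorem. The comparison is with the plug-in $g(\hat\theta_{hete})$, not with $\hat{\mathbf{s}}_{hete}$. Also, because $\mathbf{s}_*$ depends on $\gamma^*$, the bound is $\nabla_\theta g\,I_{hete}^{+}\nabla_\theta g^\top$ with the full information matrix, not $J_g\tilde I^{+}J_g^\top$. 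Step~3 inherits the problem, since it is built on the cross-covariance identity of Step~2.

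The paper's own proof does not close this gap either. It simply asserts that a QMLE's sandwich covariance cannot fall below the Cram\'er--Rao bound of the correctly specified model. That presupposes exactly the common-target regularity you flagged, and the example above refutes it. A defensible version must compare mean squared errors (bias included), or restrict to regular estimators of one common functional.
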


\paragraph{Proof Sketch.}
The proof compares the asymptotic covariance of two estimators under the true heterogeneous Thurstone data-generating process. For the correctly specified heterogeneous model, the MLE $(\hat{\mathbf{s}}_{\mathrm{hete}}, \hat{\gamma}_{\mathrm{hete}})$ is asymptotically normal and achieves the Cram\'er--Rao lower bound. In Lemma~\ref{lem:hetero-cov}, since the score parameters $\mathbf{s}$ and the user-specific accuracy parameters $\gamma$ are estimated jointly, the effective information for $\mathbf{s}$ is given by the Schur complement $S = I_{ss} - I_{s\gamma} I_{\gamma\gamma}^{-1} I_{\gamma s}$, and the asymptotic covariance of $\hat{\mathbf{s}}_{\mathrm{hete}}$ on the identifiable subspace $\{\mathbf{v}:\mathbf{1}^\top \mathbf{v}=0\}$ is $S^{+}$.
While in Lemma~\ref{lem:homo-cov}, the homogeneous estimator $\hat{\mathbf{s}}_{\mathrm{homo}}$ is a QMLE under model misspecification, because it ignores genuine user-level heterogeneity in $\gamma_u$. By White's misspecification theory, it converges to a pseudo-true parameter $\mathbf{s}_*$ rather than the true score vector $\mathbf{s}^*$ in general, and its asymptotic covariance takes the sandwich form $A_{\mathrm{homo}}^{+} B_{\mathrm{homo}} A_{\mathrm{homo}}^{+}$,
instead of the inverse Fisher information. The failure of the information matrix equality under misspecification is precisely what creates the efficiency gap.
Comparing these two covariance expressions shows that the misspecified homogeneous estimator cannot be more efficient than the correctly specified heterogeneous estimator in the Loewner order. When the annotator accuracies are genuinely non-identical, this dominance is strict on the identifiable subspace, which yields the theorem. The full proof is in Appendix~\ref{app:proof_thm3.3}.

Theorem~\ref{thm:cov-compare} demonstrates that when annotator heterogeneity is present, the correctly specified HTM yields a strictly more efficient estimator of the true scores than the misspecified homogeneous model. Explicitly modeling annotator-specific differences provides a demonstrable advantage in estimation precision. 

\subsection{Robustness Guarantee under Model Misspecification}

We next analyze how errors from the Stage-1 rank aggregation affect the final calibrated output in Stage-2: an upper bound on the expected squared error $\mathbb{E}[\|\hat{\mathbf{s}} - \tilde{\mathbf{s}}\|_2^2]$. This bound quantifies the performance of the two-stage procedure, where $\hat{\mathbf{s}}$ is the projection of model predictions $\mathbf{s}_p$ onto the human-consensus-derived cone $\hat{\mathcal{M}}$, aiming to estimate the subjective optimal scores $\tilde{\mathbf{s}}$. Our analysis accounts for the misspecification of $\hat{\mathcal{M}}$ and the systematic bias $\boldsymbol{\nu}$ in $\mathbf{s}_p$ relative to $\tilde{\mathbf{s}}$.

We first establish a sharp oracle inequality for least squares estimators, leveraging a lemma adapted from ~\citep[Proposition 2.1]{bellec2018sharp}. This result bounds the estimation error based on the effective noise properties and the constraint set's geometry.

\begin{lemma}[Oracle Inequality with Biased Effective Noise]
\label{lem:adapted_bellec_prop2.1_main}
Let $\tilde{\mathbf{s}} \in \mathbb{R}^n$ be the target score vector (itself random, with $\mathbb{E}[\tilde{\mathbf{s}}] = \mathbf{s}$). Let $\hat{\mathcal{M}} \subset \mathbb{R}^n$ be a closed convex set. The observations for projection are $\mathbf{s}_p$, and the effective noise relating $\mathbf{s}_p$ to $\tilde{\mathbf{s}}$ is $\boldsymbol{\xi} = \mathbf{s}_p - \tilde{\mathbf{s}} \sim \mathcal{N}(\boldsymbol{\nu}, \tilde{\sigma}^2 I_n)$. The least squares estimator is $\hat{\mathbf{s}} = \Pi_{\hat{\mathcal{M}}}(\mathbf{s}_p)$.
For any $\boldsymbol{u} \in \hat{\mathcal{M}}$, the squared estimation error satisfies, almost surely:
\begin{align}
    \|\hat{\mathbf{s}} - \tilde{\mathbf{s}}\|_2^2 - \|\boldsymbol{u} - \tilde{\mathbf{s}}\|_2^2 \le \frac{1}{n}\left( \sup_{\boldsymbol{\theta} \in \mathcal{T}_{\hat{\mathcal{M}}, \boldsymbol{u}} : \|\boldsymbol{\theta}\|_2 \leq 1} \boldsymbol{\xi}^T \boldsymbol{\theta} \right)^2,
\end{align}
where $\mathcal{T}_{\hat{\mathcal{M}}, \boldsymbol{u}}$ is the tangent cone to $\hat{\mathcal{M}}$ at $\boldsymbol{u}$.
\end{lemma}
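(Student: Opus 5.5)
The inequality is deterministic: it holds for every realization of $\tilde{\mathbf s}$, $\mathbf s_p$ and $\hat{\mathcal M}$. My plan is therefore to fix a realization, condition on the random cone $\hat{\mathcal M}$ (which depends only on Stage-1 data), and run the standard variational argument for projections onto closed convex sets, as in \citep[Proposition 2.1]{bellec2018sharp}. The Gaussian law $\boldsymbol\xi\sim\mathcal N(\boldsymbol\nu,\tilde\sigma^2 I_n)$ and the bias $\boldsymbol\nu$ play no role here. They only matter later, when the supremum on the right-hand side is bounded in expectation, and postponing them is exactly what lets the lemma cover a random cone and biased effective noise.

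\textbf{Step 1 (polarization).} Fix $\boldsymbol u\in\hat{\mathcal M}$ and write $\mathbf a=\hat{\mathbf s}-\tilde{\mathbf s}$, $\mathbf b=\boldsymbol u-\tilde{\mathbf s}$ and $\mathbf h=\hat{\mathbf s}-\boldsymbol u=\mathbf a-\mathbf b$. The identity $\|\mathbf b\|_2^2=\|\mathbf a-\mathbf h\|_2^2$ gives $\|\mathbf a\|_2^2-\|\mathbf b\|_2^2=2\mathbf a^\top\mathbf h-\|\mathbf h\|_2^2$. Since $\mathbf s_p=\tilde{\mathbf s}+\boldsymbol\xi$, we can split $\mathbf a=(\hat{\mathbf s}-\mathbf s_p)+\boldsymbol\xi$.

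\textbf{Step 2 (projection inequality).} Because $\hat{\mathbf s}=\Pi_{\hat{\mathcal M}}(\mathbf s_p)$ and $\hat{\mathcal M}$ is closed and convex, the obtuse-angle characterization of projections gives $(\mathbf s_p-\hat{\mathbf s})^\top(\boldsymbol u-\hat{\mathbf s})\le 0$. Equivalently, $(\hat{\mathbf s}-\mathbf s_p)^\top\mathbf h\le 0$. Combining with Step 1,
\begin{align}
\|\hat{\mathbf s}-\tilde{\mathbf s}\|_2^2-\|\boldsymbol u-\tilde{\mathbf s}\|_2^2\le 2\boldsymbol\xi^\top\mathbf h-\|\mathbf h\|_2^2 .
\end{align}

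\textbf{Step 3 (tangent cone and optimizing the scale).} Since $\hat{\mathbf s}\in\hat{\mathcal M}$, the vector $\mathbf h=\hat{\mathbf s}-\boldsymbol u$ lies in $\hat{\mathcal M}-\boldsymbol u\subseteq\mathcal T_{\hat{\mathcal M},\boldsymbol u}$. Set $t=\|\mathbf h\|_2$ and $S=\sup\{\boldsymbol\xi^\top\boldsymbol\theta:\boldsymbol\theta\in\mathcal T_{\hat{\mathcal M},\boldsymbol u},\|\boldsymbol\theta\|_2\le1\}\ge 0$, which is nonnegative because $\boldsymbol\theta=0$ is admissible. If $t>0$, then $\mathbf h/t$ is admissible in the supremum, so $\boldsymbol\xi^\top\mathbf h\le tS$; if $t=0$ the bound is trivial. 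Hence the right-hand side of Step 2 is at most $2tS-t^2\le S^2$, by maximizing the quadratic over $t$.

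\textbf{Main obstacle: the factor $1/n$.} The argument above yields the bound $S^2$ in the unnormalized Euclidean norm. The factor $\frac1n$ in the statement matches Bellec's convention, in which the squared loss on the left is the empirical norm $\frac1n\|\cdot\|_2^2$. So the plan is to divide the display in Step 3 by $n$, which gives
\begin{align}
\tfrac1n\|\hat{\mathbf s}-\tilde{\mathbf s}\|_2^2-\tfrac1n\|\boldsymbol u-\tilde{\mathbf s}\|_2^2\le \tfrac1n S^2 ,
\end{align}
and to state explicitly that the left-hand side of the lemma is to be read in that normalized sense. This is the form in which the lemma will be used for the risk bound in Theorem~\ref{thm:Robustness}, where $\mathbb E[S^2]$ is controlled through the statistical dimension of the cone plus a $\|\boldsymbol\nu\|_2^2$ bias term.

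The only points needing care are that the projection is well defined and unique for every realization of the random cone, since $\hat{\mathcal M}$ is a nonempty closed convex polyhedral cone, and that the tangent-cone inclusion holds without smoothness assumptions. Both are elementary.
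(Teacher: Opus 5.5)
Your argument is correct and is exactly the standard proof of \citep[Proposition 2.1]{bellec2018sharp}, which is all the paper offers for this lemma: it states the result as an adaptation of that proposition and gives no separate derivation. Your $1/n$ caveat is forced, not a reading convention, and it changes Theorem~\ref{thm:Robustness}: with Euclidean norms on the left the stated inequality is false (for $\hat{\mathcal M}=\mathbb R^n$ and $\boldsymbol u=\tilde{\mathbf s}$ one gets $\hat{\mathbf s}=\mathbf s_p$, tangent cone $\mathbb R^n$, and the claim $\|\boldsymbol\xi\|_2^2\le\|\boldsymbol\xi\|_2^2/n$), and since that theorem keeps its left-hand side and projection-error term unnormalized while putting $1/n$ on the statistical and bias terms, it must be rescaled consistently once your version is adopted, contrary to your remark that the normalized form is how the paper uses the lemma.
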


Taking expectations and choosing $\boldsymbol{u} = \Pi_{\hat{\mathcal{M}}}(\tilde{\mathbf{s}})$ leads to a decomposition of the total expected risk into three principal components:
\begin{align}
\mathbb{E}[\|\hat{\mathbf{s}} - \tilde{\mathbf{s}}\|_2^2] \le \underbrace{\mathbb{E}_{\hat{\mathcal{M}}, \tilde{\mathbf{s}}} \left[ \|\Pi_{\hat{\mathcal{M}}}(\tilde{\mathbf{s}}) - \tilde{\mathbf{s}}\|_2^2 \right]}_{\text{Projection Error:=$\mathbb{E}_1$}} + \underbrace{\mathbb{E}_{\hat{\mathcal{M}}, \tilde{\mathbf{s}}} \left[ \frac{2\tilde{\sigma}^2}{n} \delta(\mathcal{T}_{\hat{\mathcal{M}}, \Pi_{\hat{\mathcal{M}}}(\tilde{\mathbf{s}})}) \right]}_{\text{Statistical Error:=$\mathbb{E}_2$}} + \underbrace{\mathbb{E}_{\hat{\mathcal{M}}, \tilde{\mathbf{s}}} \left[ \frac{2}{n} \|\Pi_{\mathcal{T}_{\hat{\mathcal{M}}, \Pi_{\hat{\mathcal{M}}}(\tilde{\mathbf{s}})}}(\boldsymbol{\nu})\|_2^2 \right]}_{\text{Bias Error:=$\mathbb{E}_3$}}.
\end{align}
Term $\mathbb{E}_1$ reflects errors from projecting $\tilde{\mathbf{s}}$ onto the potentially incorrect cone $\hat{\mathcal{M}}$. Term $\mathbb{E}_2$ arises from the zero-mean component of $\boldsymbol{\xi}$, where $\delta(\cdot)$ is the statistical dimension of the tangent cone. Term $\mathbb{E}_3$ captures error due to the systematic bias $\boldsymbol{\nu}$ (details are in Appendix~\ref{Appendix:th2_Preliminaries}). Further bounding these components yields our main robustness theorem.

\begin{theorem}
\label{thm:Robustness}
\textbf{(Robustness Guarantee)}.
The total expected squared error of $\hat{\mathbf{s}}$ for $\tilde{\mathbf{s}}$ is bounded by:
\begin{align}
\mathbb{E}[\|\hat{\mathbf{s}} - \tilde{\mathbf{s}}\|_2^2] \;\le \; \mathbb{E}_{\tilde{\mathbf{s}}} \left[n \text{Var}(\tilde{\mathbf{s}})\right] \mathbb{E}\left[\mathrm{Inv}(\hat{\pi}, \tilde{\pi})\right] \;+\; \frac{2\tilde{\sigma}^2 (\ln(n) + \gamma_E + O(1/n))}{n}\;+\; \frac{2\|\boldsymbol{\nu}\|_2^2}{n}.
\end{align}
Here, $\mathbb{E}[\mathrm{Inv}(\hat{\pi}, \tilde{\pi})]$ is the expected ranking inversions from Stage-1. $\tilde{\sigma}^2$ is the variance of $\tilde{\mathbf{s}}$, $\boldsymbol{\nu}$ is the fixed systematic bias, and $\gamma_E$ is the Euler-Mascheroni constant.
\end{theorem}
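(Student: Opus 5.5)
The plan is to start from the three-term decomposition stated just before the theorem, which follows from Lemma~\ref{lem:adapted_bellec_prop2.1_main} with $\boldsymbol u=\Pi_{\hat{\mathcal M}}(\tilde{\mathbf s})$. For the zero-mean part $\boldsymbol\xi-\boldsymbol\nu$, I will use the standard bound $\mathbb E[(\sup_{\boldsymbol\theta}\boldsymbol g^T\boldsymbol\theta)^2]\le\delta(\mathcal T)$ for a Gaussian $\boldsymbol g$ and a closed convex cone $\mathcal T$. The cross term is handled by $(a+b)^2\le 2a^2+2b^2$, where
\[
\sup_{\boldsymbol\theta\in\mathcal T,\|\boldsymbol\theta\|\le1}\boldsymbol\nu^T\boldsymbol\theta=\|\Pi_{\mathcal T}(\boldsymbol\nu)\|_2 .
\]
Together these give $\mathbb E_2$ and $\mathbb E_3$. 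I then bound each term separately.

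\textbf{Bias term $\mathbb E_3$.} Projection onto a closed convex cone is nonexpansive and fixes the origin, so $\|\Pi_{\mathcal T}(\boldsymbol\nu)\|_2\le\|\boldsymbol\nu\|_2$ for every realization of the random cone. This gives $\mathbb E_3\le 2\|\boldsymbol\nu\|_2^2/n$ immediately, and the randomness of $\hat{\mathcal M}$ plays no role.

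\textbf{Statistical term $\mathbb E_2$.} Every $\hat{\mathcal M}$ is a coordinate permutation of the monotone cone $\mathcal M_\uparrow$, and statistical dimension is permutation invariant. The tangent cone at a point $\boldsymbol u\in\hat{\mathcal M}$ with constant blocks of sizes $n_1,\dots,n_k$ is (up to adding a subspace) a product of monotone cones on the blocks. Hence
\[
\delta(\mathcal T)\le\sum_j \delta(\mathcal M_{\uparrow,n_j})=\sum_j H_{n_j}\le H_n .
\]
The identity $\delta(\mathcal M_{\uparrow,n})=\sum_{i=1}^n 1/i$ is the classical result of Amelunxen et al.\ (cited in the appendix). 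The inequality $\sum_j H_{n_j}\le H_n$ is the delicate point. The naive bound $\sum_j H_{n_j}$ can exceed $H_n$: for example, $k$ blocks contribute at least $k$. So I will instead bound directly by the worst case $\boldsymbol u$ constant, where the tangent cone is all of $\mathcal M_\uparrow$, and argue $\delta(\mathcal T_{\boldsymbol u})\le\delta(\mathcal M_\uparrow)+(\text{number of jumps})$ absorbed into $O(\cdot)$. If that fails, I will state the bound for the constant case, as the appendix likely does. The expansion $H_n=\ln n+\gamma_E+O(1/n)$ then yields the middle term.

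\textbf{Projection term $\mathbb E_1$ (main obstacle).} Since $\Pi_{\hat{\mathcal M}}(\tilde{\mathbf s})$ is the best point in $\hat{\mathcal M}$, it suffices to exhibit a feasible competitor. Take the vector whose entries are all equal to the mean $\bar s$, which lies in every $\hat{\mathcal M}$. This gives $\|\Pi_{\hat{\mathcal M}}(\tilde{\mathbf s})-\tilde{\mathbf s}\|^2\le n\,\mathrm{Var}(\tilde{\mathbf s})$. The difficulty is obtaining the factor $\mathrm{Inv}(\hat\pi,\tilde\pi)$.

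\begin{itemize}
\item If $\mathrm{Inv}=0$, then $\tilde{\mathbf s}\in\hat{\mathcal M}$ and the error vanishes.
\item Otherwise $\mathrm{Inv}\ge1$, so $n\,\mathrm{Var}(\tilde{\mathbf s})\le n\,\mathrm{Var}(\tilde{\mathbf s})\,\mathrm{Inv}(\hat\pi,\tilde\pi)$.
\end{itemize}

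Thus pointwise $\|\Pi_{\hat{\mathcal M}}(\tilde{\mathbf s})-\tilde{\mathbf s}\|^2\le n\,\mathrm{Var}(\tilde{\mathbf s})\,\mathrm{Inv}$. To split the expectation into a product as the theorem states, I need $\hat\pi$ to be independent of $\tilde{\mathbf s}$, or at least the inversion count to be independent of the spread. This is the step I expect to need an explicit independence assumption: Stage-1 data are independent of the model noise, and conditioning on $\tilde\pi$ decouples them. Summing the three bounds gives the theorem.
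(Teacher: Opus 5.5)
Your skeleton matches the paper's: the oracle inequality at $\boldsymbol u=\Pi_{\hat{\mathcal M}}(\tilde{\mathbf s})$, the $(a+b)^2\le 2a^2+2b^2$ split, nonexpansiveness for the bias term, and the competitor $\bar{\tilde{\mathbf s}}\,\mathbf{1}$ with $\mathbf{1}_{A^c}\le\mathrm{Inv}(\hat\pi,\tilde\pi)$ for the projection term. The gap is in the statistical term, and your patch does not close it. You correctly computed $\delta(\mathcal T_{\hat{\mathcal M},\boldsymbol u})=\sum_j H_{n_j}$ and were right to distrust $\sum_j H_{n_j}\le H_n$, but your repair runs the wrong way: a constant $\boldsymbol u$ is the \emph{best} case, not the worst. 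For a convex cone $K$ and $\boldsymbol u\in K$, every $\boldsymbol v\in K$ equals $(\boldsymbol v+\boldsymbol u)-\boldsymbol u$ with $\boldsymbol v+\boldsymbol u\in K$. Hence $K\subseteq\mathcal T_{K,\boldsymbol u}$ and $\delta(\mathcal T_{K,\boldsymbol u})\ge\delta(K)=H_n$. The fallback $\delta(\mathcal T)\le H_n+(\text{number of jumps})$ is also false: for $n=8$ and two blocks of size $4$, $\delta=2H_4\approx 4.17>H_8+1\approx 3.72$. A positive jump count could not be absorbed into $O(1/n)$ anyway. More to the point, the tangent cone you need sits at $\Pi_{\hat{\mathcal M}}(\tilde{\mathbf s})$, which equals $\tilde{\mathbf s}$ whenever $\hat\pi=\tilde\pi$. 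Its entries are a.s.\ distinct, so it is an interior point, $\mathcal T=\mathbb R^n$ and $\delta=n$. Separately, the Gaussian bound $\mathbb E[(\sup_{\boldsymbol\theta}\boldsymbol g^T\boldsymbol\theta)^2]\le\delta(\mathcal T)$ requires $\mathcal T$ independent of $\boldsymbol g$. Here $\boldsymbol g=-\tilde{\boldsymbol\epsilon}$ while $\mathcal T$ is a function of $\tilde{\mathbf s}=\mathbf s+\tilde{\boldsymbol\epsilon}$; conditionally on $\tilde{\mathbf s}$, $\boldsymbol\xi=\boldsymbol\nu-(\tilde{\mathbf s}-\mathbf s)$ is not random at all.

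Following the paper would not rescue this step. Its proof asserts $\delta(\mathcal T_{K,\boldsymbol u})\le\delta(K)$ as a ``fundamental property'', which is the reverse of the inclusion above, and it averages over $\boldsymbol\xi$ after conditioning on $\tilde{\mathbf s}$. In fact the inequality cannot be proved because it is false. Suppose $\hat\pi$ is independent of $\tilde{\boldsymbol\epsilon}$, as the paper itself assumes before Theorem~\ref{thm:optimality}. Then $\hat{\mathbf s}=\Pi_{\hat{\mathcal M}}(\mathbf s+\boldsymbol\nu)$ is independent of $\tilde{\boldsymbol\epsilon}$, and $\mathbb E\|\hat{\mathbf s}-\tilde{\mathbf s}\|_2^2=\mathbb E\|\hat{\mathbf s}-\mathbf s\|_2^2+n\tilde\sigma^2\ge n\tilde\sigma^2$. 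Concretely, take $\boldsymbol\nu=\mathbf 0$, $\mathbf s=(L,2L,\dots,nL)$ and $\hat\pi=\pi(\mathbf s)$. Then $\hat{\mathbf s}=\mathbf s$ and the left side is exactly $n\tilde\sigma^2$. On the right, $\mathbb E[\mathrm{Inv}]\le\binom{n}{2}\Phi(-L/(\sqrt{2}\,\tilde\sigma))$ drives the first term (only polynomial in $L$) to zero as $L/\tilde\sigma\to\infty$, leaving $2\tilde\sigma^2H_n/n<n\tilde\sigma^2$. A normalization slip compounds this. The $1/n$ in Lemma~\ref{lem:adapted_bellec_prop2.1_main} belongs to Bellec's normalized norm $|\cdot|_2^2/n$: with $\tilde\sigma\to0$ and $\mathbf s+\boldsymbol\nu$ still sorted, the left side is $\|\boldsymbol\nu\|_2^2$ against $2\|\boldsymbol\nu\|_2^2/n$. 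Even with $\|\cdot\|_2^2/n$ on the left, the example gives $\tilde\sigma^2>2\tilde\sigma^2H_n/n$ once $n\ge5$. Your flagged factorization issue in the projection term is also real. The paper factorizes silently, and conditioning on $\tilde\pi$ does not decouple $\mathrm{Var}(\tilde{\mathbf s})$ from $\tilde\pi$, since both are functions of $\tilde{\mathbf s}$. That issue is secondary, though: the statistical and bias terms themselves must be restated before any proof can go through.
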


\paragraph{Proof Sketch.}
The key idea is to view Stage-2 calibration as least-squares projection from $\mathbf s_p$ onto the random isotonic cone $\hat{\mathcal M}$ induced by the Stage-1 ranking $\hat \pi$. Since the calibration target is $\tilde{\mathbf s}$ while the model output is $\mathbf s_p = \tilde{\mathbf s} + \boldsymbol{\nu} - \tilde{\boldsymbol{\epsilon}}$, the effective noise is $\boldsymbol{\xi} = \mathbf s_p - \tilde{\mathbf s} \sim \mathcal N(\boldsymbol{\nu}, \tilde{\sigma}^2 I_n)$, which is both biased and centered around a random cone constraint. We start from an oracle inequality for least-squares projection onto a closed convex set in Lemma~\ref{lem:adapted_bellec_prop2.1_main}. Taking expectations then decomposes the risk into three parts: (1) the projection error caused by using a misspecified cone $\hat{\mathcal M}$, (2) a statistical term controlled by the statistical dimension of the tangent cone, and (3) a bias term induced by the systematic offset $\boldsymbol{\nu}$. The first term is governed by the probability that Stage-1 produces an incorrect ordering, which can be related to the expected number of inversions. The second term is bounded by the statistical dimension of an isotonic cone, which scales as the harmonic number $H_n = \log n + O(1)$. The third term is bounded by non-expansiveness of Euclidean projection. Combining these bounds yields the theorem. The full proof is in Appendix~\ref{app:proof_thmRob}.

\paragraph{Remark.}
The implications in Theorem  are two-fold. Firstly, Theorem~3.5 shows that the robustness of \AtC\ is controlled by three qualitatively different error sources. The first is \emph{structural}: if Stage-1 returns a ranking close to the target ordering, then the projection error remains small, whereas severe ranking mistakes directly enlarge the first term through the expected inversion count. The second is \emph{statistical}: even with the correct cone, calibration still incurs uncertainty from the zero-mean component of the effective noise, but this complexity is mild because isotonic projection only pays the statistical dimension of the tangent cone. The third is \emph{systematic}: a nonzero bias in the model scores cannot be removed for free, yet its contribution is still controlled and decays as $1/n$ after projection.

Next, this theorem also clarifies why our setting is more delicate than standard isotonic regression. Classical analyses typically assume a fixed cone and zero-mean noise relative to the target \citep{10.1214/aos/1021379864, JMLR:v16:bellec15a, chatterjee2015risk}, whereas in \AtC\ both assumptions fail: the cone is random because it is inferred from Stage-1, and the effective noise is biased because the model scores may contain systematic distortion. The theorem shows that isotonic calibration remains well-behaved even under these two departures, which is the regime needed for human-centered assessment.

The first term in the bound of Theorem \ref{thm:Robustness}, which accounts for errors due to Stage-1 ranking inaccuracies, is directly related to the probability of the estimated ranking $\hat{\pi}$ differing from the subjective optimal ranking $\tilde{\pi}$. This probability can be further bounded as follows:

\begin{corollary}[Bound on Ranking Misspecification Probability]
\label{cor:ranking_misspec_prob}
For any two items $j,k$ such that $\tilde{\mathbf s}_j < \tilde{\mathbf s}_k$, let $\Delta_{kj} = \tilde{\mathbf s}_k - \tilde{\mathbf s}_j$ be the true score gap. Let $\Sigma_{\hat{\mathbf{s}}}$ denote the covariance matrix of the estimation error ${\hat{\mathbf{s}}} - \tilde{\mathbf{s}}$ in Lemma ~\ref{lem:hetero-cov}, and $\sigma_{X_{jk}}^2 = (\boldsymbol{e}_j - \boldsymbol{e}_k)^\top \Sigma_{\hat{\mathbf{s}}} (\boldsymbol{e}_j - \boldsymbol{e}_k)$. Let $A^c$ be the event $A^c := \{\hat{\pi} \neq \tilde{\pi}\}$. The probability that the estimated ranking $\pi$ differs from the subjective optimal ranking $\tilde{\pi}$ (i.e., $\hat{\mathcal{M}} \neq \tilde{\mathcal{M}}$)  is bounded by:
\begin{align}
\delta_1:=P(A^c) \;\le\; \sum_{j,k:\,\tilde{\mathbf s}_j < \tilde{\mathbf s}_k} \frac{\sigma_{X_{jk}}}{\Delta_{kj}\sqrt{2\pi}}\; \exp\!\left(-\,\frac{(\Delta_{kj})^2}{2\,\sigma_{X_{jk}}^2}\right)\,.
\label{eq:ranking_error_prob_bound_corrected}
\end{align}
\end{corollary}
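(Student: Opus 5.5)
The plan is a union bound over discordant pairs, followed by a Gaussian tail (Mills ratio) bound for each pair.

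\textbf{Step 1: reduce to pairwise swaps.} The event $A^c=\{\hat{\pi}\neq\tilde{\pi}\}$ happens exactly when some pair $(j,k)$ with $\tilde{\mathbf s}_j<\tilde{\mathbf s}_k$ is placed in the wrong relative order by Stage-1, that is, $\hat{\mathbf s}_j\ge \hat{\mathbf s}_k$. Here $\hat{\mathbf s}$ denotes the Stage-1 score estimate used to sort, as in the corollary. So
\begin{align}
P(A^c)\le \sum_{j,k:\,\tilde{\mathbf s}_j<\tilde{\mathbf s}_k} P\big(\hat{\mathbf s}_j-\hat{\mathbf s}_k\ge 0\big).
\end{align}

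\textbf{Step 2: rewrite each event as a Gaussian tail.} Write $\hat{\mathbf s}=\tilde{\mathbf s}+\mathbf e$, where $\mathbf e$ is the estimation error. By Lemma~\ref{lem:hetero-cov} (asymptotic normality of the HTM MLE), $\mathbf e$ is, to first order, $\mathcal N(0,\Sigma_{\hat{\mathbf s}})$.

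Set $X_{jk}=(\boldsymbol e_j-\boldsymbol e_k)^\top\mathbf e$. Then $X_{jk}\sim\mathcal N(0,\sigma_{X_{jk}}^2)$, with $\sigma_{X_{jk}}^2$ exactly as defined in the statement. The swap event becomes
\begin{align}
\{\hat{\mathbf s}_j-\hat{\mathbf s}_k\ge0\}=\{X_{jk}\ge \Delta_{kj}\}.
\end{align}

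The identifiability constraint does not cause trouble here. The vector $\boldsymbol e_j-\boldsymbol e_k$ is orthogonal to $\mathbf 1$, so it lies in the identifiable subspace, and the pseudoinverse covariance acts on it like a genuine covariance.

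\textbf{Step 3: bound each tail.} For $Z\sim\mathcal N(0,1)$ and $t>0$, the Mills-ratio inequality gives
\begin{align}
P(Z\ge t)\le \frac{1}{t\sqrt{2\pi}}e^{-t^2/2}.
\end{align}
This follows by integrating $\phi(z)\le (z/t)\phi(z)$ over $z\ge t$. Apply it with $t=\Delta_{kj}/\sigma_{X_{jk}}$, which is positive because $\Delta_{kj}>0$:
\begin{align}
P(X_{jk}\ge\Delta_{kj})\le \frac{\sigma_{X_{jk}}}{\Delta_{kj}\sqrt{2\pi}}\exp\!\left(-\frac{\Delta_{kj}^2}{2\sigma_{X_{jk}}^2}\right).
\end{align}
Summing over all pairs from Step~1 gives \eqref{eq:ranking_error_prob_bound_corrected}.

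\textbf{Main obstacle.} The difficulty is justifying exact Gaussianity of $\mathbf e$ in Step~2. Lemma~\ref{lem:hetero-cov} only gives asymptotic normality.

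I would state the bound under the working Gaussian approximation implicit in the corollary, treating $\Sigma_{\hat{\mathbf s}}$ as the covariance of the error. As a rigorous alternative, one can note that the bound holds up to a Berry--Esseen-type correction that vanishes as $N\to\infty$.

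A secondary subtlety is that $\tilde{\mathbf s}$ is itself random. I would condition on $\tilde{\mathbf s}$, which fixes the gaps $\Delta_{kj}$, and then apply Steps~1--3 conditionally.
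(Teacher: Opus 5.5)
Your proposal is correct and follows the paper's proof essentially step for step. It takes a union bound over pairs with $\tilde{\mathbf s}_j<\tilde{\mathbf s}_k$, treats the Stage-1 error as exactly $\mathcal N(0,\Sigma_{\hat{\mathbf s}})$ (the paper assumes this outright) so that $X_{jk}\sim\mathcal N(0,\sigma_{X_{jk}}^2)$, and then applies the Gaussian tail bound $P(Z>t)\le e^{-t^2/2}/(t\sqrt{2\pi})$. The paper calls that last step a Chernoff--Cram\'er bound, though your Mills-ratio derivation is the accurate description. Your explicit conditioning on $\tilde{\mathbf s}$ and your flag about exact versus asymptotic normality are clarifications the paper leaves implicit, not departures from its argument. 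One caution on your Berry--Esseen aside: an additive error of order $N^{-1/2}$ would dominate the exponentially small tail, so that route gives only a weaker bound, not the stated one.
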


Corollary~\ref{cor:ranking_misspec_prob} makes the first error term in Theorem~\ref{thm:Robustness} more explicit by translating the risk of cone misspecification into pairwise ranking inversion probabilities. In particular, it shows that the probability of using an incorrect isotonic cone can be controlled through the uncertainty of the Stage-1 estimator and the pairwise score gaps of the target ordering. 
Besides, it also serves as a technical ingredient for later analysis in Theorem~\ref{thm:optimality}. It provides a tractable way to upper bound the contribution of Stage-1 ranking errors, which will be invoked in the subsequent proof when establishing conditions under which the two-stage AtC pipeline yields an improvement over the raw predictor.

% To conclude, the bound highlights in Theorem \ref{thm:Robustness} three key error sources: (i) Stage-1 ranking inaccuracies, quantified by the expected number of inversions $\mathbb{E}[\mathrm{Inv}(\hat{\pi}, \tilde{\pi})]$ and scaled by the target's variability, which rapidly diminishes with larger score gaps relative to first-stage noise; (ii) statistical uncertainty from the zero-mean effective noise, decaying around $(\ln n)/n$; (iii) systematic observation bias $\boldsymbol{\nu}$, decaying as $1/n$. Notably, even perfect first-stage ranking does not eliminate errors from $\tilde{\sigma}^2$ and $\boldsymbol{\nu}$, while poor initial rankings can significantly inflate the total error, though mitigation of these factors can still lead to effective error reduction. The proof is in Appendix~\ref{app:proof_thmRob}.

\subsection{Optimality Guarantee}

Finally, we establish that the \AtC\ yields a calibrated output that exceeds the uncalibrated model’s output in accuracy with high probability. Intuitively, if the Stage-1 aggregation produces an almost-correct ordering of items and Stage-2 calibration correctly removes the bias, then the final result will be closer to the ground truth than using the original model predictions alone. Recall from Corollary~\ref{cor:ranking_misspec_prob} shows that $\delta_1$ decays exponentially in the gap‐to‐noise ratio $\frac{\sigma_{X_{jk}}}{\Delta_{kj}}$ for each pair. If every true gap is not too small compared to the associated noise, then $\delta_1$ becomes negligible.
The following proposition shows that Stage-1 produces the correct order of items with high probability. 
\begin{proposition}
\label{lem:separation}
Let $\displaystyle \Delta_{\min}=\min_{j<k}\Delta_{kj}$ and 
$\displaystyle \sigma_{\max}=\max_{j<k}\sigma_{X_{jk}}$.  
If
\(
\displaystyle 
\frac{\sigma_{\max}}{\Delta_{\min}}
\le \varepsilon
\)
for some sufficiently small $\varepsilon>0$
(equivalently, the Stage--1 sample size satisfies
$mk\gg 1/(\Delta_{\min}\varepsilon)^{2}$),
then $\delta_1 = o(1)$; i.e.\ the estimated ranking space $\hat{\mathcal{M}}$ will equal the true ranking space ${\mathcal{M}}$. 
\end{proposition}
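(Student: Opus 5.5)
The plan is to start from the union-type bound of Corollary~\ref{cor:ranking_misspec_prob} and replace each pairwise term by its worst case in terms of $\Delta_{\min}$ and $\sigma_{\max}$.

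\textbf{Step 1: monotonicity of a single term.} Write $t_{jk}=\sigma_{X_{jk}}/\Delta_{kj}$. Each summand in \eqref{eq:ranking_error_prob_bound_corrected} equals $g(t_{jk})$, where
\[
g(t)=\frac{t}{\sqrt{2\pi}}\exp\!\left(-\frac{1}{2t^2}\right).
\]
The function $g$ is increasing on $(0,\infty)$, because $\frac{d}{dt}\log g(t)=1/t+1/t^3>0$. Since $t_{jk}\le \sigma_{\max}/\Delta_{\min}\le\varepsilon$, every summand is at most $g(\varepsilon)$.

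\textbf{Step 2: summing over pairs.} There are at most $\binom{n}{2}$ pairs, so
\[
\delta_1\le \frac{n(n-1)}{2}\,\frac{\varepsilon}{\sqrt{2\pi}}\,e^{-1/(2\varepsilon^2)}.
\]
This tends to $0$ as $\varepsilon\to0$. If $n$ is also allowed to grow, it suffices that $\varepsilon^2\log n\to0$, or more concretely $\varepsilon\le c/\sqrt{\log n}$ with $c$ small, since then the factor $e^{-1/(2\varepsilon^2)}$ beats the $n^2$ prefactor. I would state this as the precise meaning of ``sufficiently small'' and conclude $\delta_1=o(1)$. Because $A^c=\{\hat\pi\neq\tilde\pi\}$, this gives $P(\hat{\mathcal M}=\mathcal M)\to1$.

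\textbf{Step 3: the sample-size reformulation.} I would use Lemma~\ref{lem:hetero-cov}: $\Sigma_{\hat{\mathbf s}_{hete}}=\frac1N S^{+}$ with $N=\sum_u k_u=mk$ for a balanced design. Hence
\[
\sigma_{X_{jk}}^2=\frac{(\boldsymbol e_j-\boldsymbol e_k)^\top S^{+}(\boldsymbol e_j-\boldsymbol e_k)}{mk}\le \frac{2\lambda_{\max}(S^{+})}{mk}.
\]
Here $\boldsymbol e_j-\boldsymbol e_k$ lies in the identifiable subspace, and $S$ is nonsingular there by Assumption~\ref{ass:thm33_regularity}. Therefore $\sigma_{\max}^2\lesssim 1/(mk)$, and the condition $\sigma_{\max}/\Delta_{\min}\le\varepsilon$ holds once $mk\gg 1/(\Delta_{\min}\varepsilon)^2$, treating the spectrum of the per-observation information as a constant.

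\textbf{Main obstacle.} The difficulty is making this last equivalence and the ``$o(1)$'' rigorous. The corollary relies on a Gaussian tail, which is only asymptotic: it holds through the asymptotic normality in Lemma~\ref{lem:hetero-cov}. So one must either accept $\delta_1$ as the limit of the Gaussian approximation, or add a Berry--Esseen-type error term that vanishes as $N\to\infty$. I would first try the simpler route of interpreting the bound asymptotically in $N$ with $n$ fixed, and then note the $\log n$ strengthening for growing $n$.
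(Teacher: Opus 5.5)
Your proposal is correct and follows essentially the paper's own, largely implicit, argument: bound each term of the Gaussian-tail union bound in Corollary~\ref{cor:ranking_misspec_prob} by its worst case $\sigma_{\max}/\Delta_{\min}\le\varepsilon$, and get the sample-size form from $\Sigma_{\mathbf s^*}=S^{+}/(mk)$ as in the appendix remark; your monotonicity of $g$, explicit $\binom{n}{2}$ prefactor, and $\varepsilon^2\log n\to 0$ condition make precise what the paper leaves informal. One small caveat is that $\delta_1\to0$ gives $\hat{\mathcal M}=\tilde{\mathcal M}$ (the cone of $\tilde{\mathbf s}$) with high probability, and equality with the cone of $\mathbf s$ also needs $\delta_2\to0$, which is how Theorem~\ref{thm:optimality} combines the two.
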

At the same time, Stage-2 may still fail if the projection is badly perturbed by the
noise $\boldsymbol{\tilde \epsilon}\sim \mathcal{N}(0,\tilde \sigma^{2}I_n)$.
Let's define the event
\(B=\{{\mathcal{M}}=\tilde{\mathcal{M}}\}\), similarly we have:
\begin{lemma}
\label{lem:delta2}
Let $\mathbf s \in \mathbb R^n$ be the ground truth score and $\tilde{\mathbf s} = \mathbf{s} + \tilde{\boldsymbol \epsilon}$, where $\boldsymbol{\tilde \epsilon}\sim \mathcal{N}(0,\tilde \sigma^{2}I_n)$. Let $B^c$ be the event $B^c := \{\pi(\tilde{\mathbf{s}}) \neq \pi(\mathbf{s})\}$. Then we have,
\begin{equation}\label{eq:delta2}
\delta_2
\;:=\;
P(B^{\mathsf{c}})
\;\le\;
\sum_{j,k:\,s_k>s_j}
\frac{\tilde \sigma}{\Delta_{kj}\sqrt{\pi}}
\exp\left(-\tfrac{\Delta_{kj}^{2}}{4\tilde \sigma^{2}}\right).
\end{equation}
\end{lemma}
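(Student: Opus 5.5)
The plan is a union bound over all pairs, followed by a Gaussian tail estimate for each pairwise difference. The ranking $\pi(\tilde{\mathbf s})$ can differ from $\pi(\mathbf s)$ only if some pair $(j,k)$ with $s_k>s_j$ is inverted, meaning $\tilde{\mathbf s}_k\le \tilde{\mathbf s}_j$. Ties have probability zero because the noise is continuous. Hence
\[
B^{\mathsf c}\subseteq \bigcup_{j,k:\,s_k>s_j}\{\tilde{\mathbf s}_k-\tilde{\mathbf s}_j\le 0\},
\qquad
\delta_2\le \sum_{j,k:\,s_k>s_j} P\bigl(\tilde{\mathbf s}_k-\tilde{\mathbf s}_j\le 0\bigr).
\]

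Next I compute each summand exactly. Write $\tilde{\mathbf s}_k-\tilde{\mathbf s}_j=\Delta_{kj}+(\tilde\epsilon_k-\tilde\epsilon_j)$, where here $\Delta_{kj}=s_k-s_j>0$ is the gap of the ground truth rather than of $\tilde{\mathbf s}$. Because $\tilde{\boldsymbol\epsilon}\sim\mathcal N(0,\tilde\sigma^2 I_n)$, the difference $\tilde\epsilon_k-\tilde\epsilon_j$ is distributed as $\mathcal N(0,2\tilde\sigma^2)$. The inversion probability is therefore $1-\Phi\bigl(\Delta_{kj}/(\sqrt2\,\tilde\sigma)\bigr)$. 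This is the same structure as Corollary~\ref{cor:ranking_misspec_prob}, except that the variance $\sigma_{X_{jk}}^2$ is now replaced by $2\tilde\sigma^2$.

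I then apply the standard Mills-ratio bound $1-\Phi(t)\le \frac{1}{t\sqrt{2\pi}}e^{-t^2/2}$ for $t>0$, with $t=\Delta_{kj}/(\sqrt2\tilde\sigma)$. This gives
\[
\frac{\sqrt2\,\tilde\sigma}{\Delta_{kj}\sqrt{2\pi}}\exp\!\Bigl(-\frac{\Delta_{kj}^2}{4\tilde\sigma^2}\Bigr)=\frac{\tilde\sigma}{\Delta_{kj}\sqrt{\pi}}\exp\!\Bigl(-\frac{\Delta_{kj}^2}{4\tilde\sigma^2}\Bigr).
\]
Summing over all pairs yields \eqref{eq:delta2}.

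No step is a real obstacle. The only points needing care are the Mills-ratio inequality, which I would justify by noting $\int_t^\infty e^{-x^2/2}dx\le \int_t^\infty \frac{x}{t}e^{-x^2/2}dx$, and the bookkeeping of the variance factor $2\tilde\sigma^2$, which produces the $\sqrt{\pi}$ and the $4\tilde\sigma^2$ in the final expression. Strict gaps $s_k>s_j$ are assumed for the summed pairs. Pairs with equal true scores impose no ordering and would be excluded, or handled by convention, as in the statement.
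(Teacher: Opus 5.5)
Your proposal is correct and follows the paper's proof: a union bound over pairs with $s_j<s_k$, the fact that $\tilde\epsilon_k-\tilde\epsilon_j\sim\mathcal N(0,2\tilde\sigma^2)$, and the Gaussian tail bound $P(Z>t)\le \frac{1}{t\sqrt{2\pi}}e^{-t^2/2}$ at $t=\Delta_{kj}/(\sqrt{2}\,\tilde\sigma)$, summed over pairs. Your proof of the Mills-ratio inequality and your remarks on ties are small additions; the paper calls that bound ``Chernoff--Cram\'er'' and does not prove it.
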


Lemma~\ref{lem:delta2} shows that the chance of calibration error
decays once the post–aggregation score gaps exceed the noise
level~$\sigma$.
Because the pairwise comparisons used in Stage-1 are
independent of the Gaussian perturbations $\boldsymbol{\epsilon}$ in
Stage-2, the events $A, B$ in Corollary~\ref{cor:ranking_misspec_prob} and Lemma~\ref{lem:delta2}
are independent.

\begin{theorem}
\label{thm:optimality}
\textbf{(Optimality Guarantee)}.
Let $\delta_1$ and $\delta_2$ be as in Corollary~\ref{cor:ranking_misspec_prob} and Lemma ~\ref{lem:delta2}.  
Then, with probability at least\/ $1-\delta_1-\delta_2$,
\begin{align}
\bigl\|\hat{\mathbf{s}}-\mathbf{s}\bigr\|_2^{2}
\;<\;
\bigl\|\mathbf{s}_p-\mathbf{s}\bigr\|_2^{2}.
\end{align}
i.e.\ the calibrated output $\hat{\mathbf{s}}$
is strictly closer to the ground truth $\mathbf{s}$
than the uncalibrated model prediction $\mathbf{s}_p$.
As the Stage--1 sample size grows
and\/ $\Delta_{\min}$ dominates the noise,
both $\delta_1$ and $\delta_2$ approach\/ $0$,
so the probability of improvement approaches\/ $1$.
\end{theorem}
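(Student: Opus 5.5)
The plan is to work on the intersection of the two ``good'' events and, on that event, use the variational characterization of Euclidean projection onto a closed convex cone. Let $A=\{\hat\pi=\tilde\pi\}$ from Corollary~\ref{cor:ranking_misspec_prob} and $B=\{\pi(\tilde{\mathbf s})=\pi(\mathbf s)\}$ from Lemma~\ref{lem:delta2}. A union bound gives $P(A^c\cup B^c)\le\delta_1+\delta_2$, so $P(A\cap B)\ge 1-\delta_1-\delta_2$. The independence of $A$ and $B$ noted before the theorem is not needed for this step.

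On $A\cap B$ we have $\hat{\mathcal M}=\tilde{\mathcal M}=\mathcal M$, where $\mathcal M$ is the isotonic cone of the true ordering of $\mathbf s$. Hence $\mathbf s\in\hat{\mathcal M}$, since $\mathbf s$ is nondecreasing along its own order.

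Next I invoke the obtuse-angle property of projection onto a closed convex set. For $\hat{\mathbf s}=\Pi_{\hat{\mathcal M}}(\mathbf s_p)$ and any $\mathbf y\in\hat{\mathcal M}$,
\begin{align}
\langle \mathbf s_p-\hat{\mathbf s},\,\mathbf y-\hat{\mathbf s}\rangle\le 0 .
\end{align}
Taking $\mathbf y=\mathbf s$ and expanding $\|\mathbf s_p-\mathbf s\|_2^2=\|(\mathbf s_p-\hat{\mathbf s})+(\hat{\mathbf s}-\mathbf s)\|_2^2$ yields the Pythagorean-type inequality
\begin{align}
\|\hat{\mathbf s}-\mathbf s\|_2^2\;\le\;\|\mathbf s_p-\mathbf s\|_2^2-\|\mathbf s_p-\hat{\mathbf s}\|_2^2 .
\end{align}
The deficit term $\|\mathbf s_p-\hat{\mathbf s}\|_2^2$ is strictly positive exactly when $\mathbf s_p\notin\hat{\mathcal M}$, that is, when the PAV step of Algorithm~\ref{alg:IRC} actually pools at least one violating pair. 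So strictness reduces to showing that the model scores violate the true ordering.

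The main obstacle is this strictness step. The statement asserts strict improvement, but if $\mathbf s_p$ already respects the true order then $\hat{\mathbf s}=\mathbf s_p$ and the two sides coincide, and no probabilistic argument can change that. Since $\mathbf s_p=\mathbf s+\boldsymbol\nu$ with $\boldsymbol\nu$ fixed, $\mathbf s_p\in\mathcal M$ is a deterministic property of the bias, not a low-probability event.

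I would therefore not claim a proof of the strict inequality as worded. I would prove it under the additional hypothesis that the bias misorders at least one pair, namely that there exist $j,k$ with $s_j<s_k$ and $s_j+\nu_j>s_k+\nu_k$. This is the miscalibrated-model regime that motivates the framework, and it is exactly the condition $\mathbf s_p\notin\mathcal M$.

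Under that hypothesis the projection moves $\mathbf s_p$, and I would quantify the gain by a lower bound on $\|\mathbf s_p-\hat{\mathbf s}\|_2^2$. For the two-element pooled block containing the violating pair $(j,k)$, the gain is at least $\tfrac12\bigl((s_j+\nu_j)-(s_k+\nu_k)\bigr)^2$. In the complementary case $\mathbf s_p\in\mathcal M$, the argument gives only the weak inequality $\|\hat{\mathbf s}-\mathbf s\|_2^2\le\|\mathbf s_p-\mathbf s\|_2^2$ on $A\cap B$; I would state this explicitly as the limit of the result.

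The closing asymptotic statement follows from Proposition~\ref{lem:separation}, which gives $\delta_1=o(1)$ as the Stage-1 sample size grows. It also follows from Lemma~\ref{lem:delta2}: each summand there has the form $(\tilde\sigma/\Delta_{kj})\exp(-\Delta_{kj}^2/4\tilde\sigma^2)$, so $\delta_2\to0$ once $\Delta_{\min}/\tilde\sigma\to\infty$. Hence $1-\delta_1-\delta_2\to1$.
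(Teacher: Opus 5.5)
Your proposal is correct and follows the paper's proof: a union bound gives $P(A\cap B)\ge 1-\delta_1-\delta_2$, and on $A\cap B$ one has $\mathbf s\in\hat{\mathcal M}$. The projection inequality $\|\mathbf s_p-\mathbf s\|_2^2\ge\|\mathbf s_p-\hat{\mathbf s}\|_2^2+\|\hat{\mathbf s}-\mathbf s\|_2^2$ then gives the improvement, strict exactly when $\mathbf s_p\notin\hat{\mathcal M}$. Your caveat is also right: the paper's appendix restatement adds exactly the proviso $\mathbf s_p\notin\mathcal M_{\pi(\mathbf s)}$, which the main-text statement omits, and without it only the weak inequality holds.
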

% \end{tcolorbox}

\paragraph{Proof Sketch.}
The proof is based on a simple projection geometry argument combined with a decomposition of ranking errors across the two stages. Previously, we define $A$ is the event that Stage-1 recovers the correct ranking for its own target, while $B$ is the event that the subjective target $\tilde{\mathbf s}$ preserves the true ranking of $\mathbf s$.
On the intersection $A \cap B$, the estimated isotonic cone coincides with the true cone induced by $\mathbf s$, and hence $\mathbf s \in \hat{\mathcal M}.$ Once $\mathbf s$ belongs to the projection set, the improvement claim follows from the Pythagorean identity for Euclidean projection onto a closed convex set $\|\mathbf s_p - \mathbf s\|_2^2 \ge \|\mathbf s_p - \hat{\mathbf s}\|_2^2 + \|\hat{\mathbf s} - \mathbf s\|_2^2$. Therefore, whenever $\mathbf s_p \notin \hat{\mathcal M}$, the first term on the right-hand side is strictly positive, which implies $\|\hat{\mathbf s} - \mathbf s\|_2^2 < \|\mathbf s_p - \mathbf s\|_2^2$. It remains to lower bound the probability of this favorable event. A union bound gives $P(A \cap B) \ge 1-\delta_1-\delta_2$. Therefore, with probability at least $1-\delta_1-\delta_2$, projection onto the estimated cone yields a strictly better estimate than the raw predictor. The full proof is in Appendix~\ref{sec:improvement_probability}.

Theorem~\ref{thm:optimality} confirms that, once Stage-1 has gathered enough pairwise comparisons to rank items reliably, the monotone projection in Stage-2 invariably drives the (possibly biased) model scores toward the ground truth. In effect, \AtC\ protects against misspecification: even if both aggregated consensus and predictive model are imperfect, enforcing the human–derived rank guarantees a strictly better estimator with high probability. 

Overall, in this section we establishes the theoretical foundation of AtC from three complementary perspectives: efficiency, robustness, and optimality. Theorem~\ref{thm:cov-compare} shows that heterogeneous judgment aggregation is statistically preferable to homogeneous modeling when annotator reliability truly varies. Theorem~\ref{thm:Robustness} shows that isotonic calibration remains robust under an estimated ranking constraint and biased effective noise. Theorem~\ref{thm:optimality} further demonstrates that these two ingredients combine to yield a strict improvement over the raw predictor with high probability. Together, these results formalize the central intuition behind AtC: reliable ordinal information from human judgments can be aggregated and then converted, through calibration, into more accurate final scores.
\section{Experiments}\label{sec:experiments}

We evaluate \AtC\ on both semi-synthetic and real-world datasets, guided by 6 research questions: 
\begin{itemize}[leftmargin=*]
    \item (\textbf{RQ1}) Does \AtC's output $\hat{\mathbf{s}}$ outperform human-only assessment $\mathbf{s}^{*}$ and model-only assessment $\mathbf{s}_p$?
    \item (\textbf{RQ2}) Do heterogeneous rank aggregation methods outperform homogeneous ones, and is \AtC's design of using the ranking $\hat{\pi}$ from $\mathbf{s}^{*}$ rather than directly using the $\mathbf{s}^{*}$ values correct?
    \item (\textbf{RQ3}) Is \AtC\ robust to aggregation errors?
    \item (\textbf{RQ4}) Is \AtC\ robust to model assessments ($\mathbf s_p$) errors?
    \item (\textbf{RQ5}) Does \AtC's assumption that rankings are more reliable than ratings hold true in practice?
    \item (\textbf{RQ6}) How well does \AtC\ perform on a real-world task?
\end{itemize}

% \vspace*{-.1in}
\begin{table}[!bht]\small
\centering
\caption{Semi-synthetic results on Reading Level dataset. \\
\small \textbf{Bold} indicate the best performance among human-only assessment ($\mathbf{s}^{*}$), model-only assessment ($\mathbf{s}_p$), and \AtC\ assessment ($\hat{\mathbf{s}}$), respectively. The \textcolor{red}{\textbf{red}} denotes the optimal performance across all methods for the given metric.}
\label{tab:semi-synthetic}
\begin{tabular}{p{1.1cm}cccc}
\toprule
\multirow{2}{*}{\textbf{Stage-1}} & \textbf{Kendall $\tau$}$\uparrow$ & \textbf{Wasserstein}$\downarrow$ & \textbf{KS}$\downarrow$ & \textbf{MSE}$\downarrow$ \\
\cmidrule(lr){2-2}\cmidrule(lr){3-3}\cmidrule(lr){4-4}\cmidrule(lr){5-5}
\textbf{Method}& $\mathbf{s}^{*}/\ \mathbf{s}_p/\ \hat{\mathbf{s}}$ & $\mathbf{s}^{*}/\ \mathbf{s}_p/\ \hat{\mathbf{s}}$ & $\mathbf{s}^{*}/\ \mathbf{s}_p/\ \hat{\mathbf{s}}$ & $\mathbf{s}^{*}/\ \mathbf{s}_p/\ \hat{\mathbf{s}}$ \\
\midrule
HRA-G & 0.375 / 0.399 / \textcolor{red}{\textbf{0.410}} & 2.250 / 2.831 / \textbf{0.839} & 0.500 / 0.300 / \textcolor{red}{\textbf{0.163}} & 8.658 / 29.00 / \textbf{8.122} \\
HRA-E & 0.375 / 0.399 / \textbf{0.403} & 2.243 / 2.831 / \textbf{0.827} & 0.498 / 0.300 / \textcolor{red}{\textbf{0.163}} & 8.658 / 29.00 / \textbf{8.191} \\
HRA-N & 0.368 / 0.399 / \textbf{0.399} & 2.351 / 2.831 / \textcolor{red}{\textbf{0.738}} & 0.563 / 0.300 / \textbf{0.192} & 8.985 / 29.00 / \textbf{7.919} \\
\midrule
CrowdBT & 0.354 / \textbf{0.399} / 0.399 & 2.150 / 2.831 / \textbf{0.843} & 0.455 / 0.300 / \textbf{0.269} & 8.301 / 29.00 / \textcolor{red}{\textbf{7.555}} \\
CrowdTCV & 0.339 / \textbf{0.399} / 0.372 & 2.272 / 2.831 / \textbf{1.015} & 0.506 / \textbf{0.300} / 0.312 & 8.689 / 29.00 / \textbf{7.809} \\
\midrule
BTL & 0.340 / \textbf{0.399} / 0.373 & 2.186 / 2.831 / \textbf{0.894} & 0.461 / 0.300 / \textbf{0.300} & 7.764 / 29.00 / \textbf{8.097} \\
TCV & 0.338 / \textbf{0.399} / 0.373 & 2.343 / 2.831 / \textbf{0.914} & 0.547 / 0.300 / \textbf{0.300} & 8.943 / 29.00 / \textbf{8.030} \\
\bottomrule
\end{tabular}
% \vspace*{-.25in}
\end{table}

\textbf{Datasets.} We consider two datasets in our experiments. \textbf{(1) Reading-level (semi-synthetic dataset)}~\citep{10.1145/2433396.2433420} contains pairwise comparisons of text documents based on reading difficulty. 490 documents with known reading levels, 624 annotators providing 12,728 pairwise judgments.
\textbf{(2) Dots-activity (real-world dataset)}~\citep{kemmer2020enhancing}, a benchmark for collective judgment aggregation. The task involves estimating the number of dots in images. The dataset contains two types judgments (ranking and rating) from 300 participants on 30 distinct images, which yield 8700 pairwise comparisons.

\offer{\textbf{Baselines and Metrics.} For the semi-synthetic dataset, we compare 7 different aggregation methods: 3 HRA variants based on different noise distribution assumptions (HRA-G/E/N), 2 heterogeneous methods (CrowdBT/TCV)~\citep{10.1145/2433396.2433420}, and 2 homogeneous baselines (BTL/TCV)~\citep{BTL}. For the real-world dataset, we additionally compare against 3 human-centered assessment methods: GPPL~\citep{10.1145/1102351.1102369}, Rank-SVM~\citep{ranksvm}, and BARCW~\citep{barcw}. We evaluate using 4 metrics: Kendall $\tau$ measures ranking accuracy; Wasserstein distance and KS statistic quantify distributional alignment between predicted and ground-truth scores; MSE captures absolute prediction error.} Detailed descriptions are provided in Appendix~\ref{sec:appendix-exp}.

\begin{wrapfigure}{r}{0.3\textwidth}
    \centering
    % \vspace*{-.2in}
    \includegraphics[width=\linewidth]{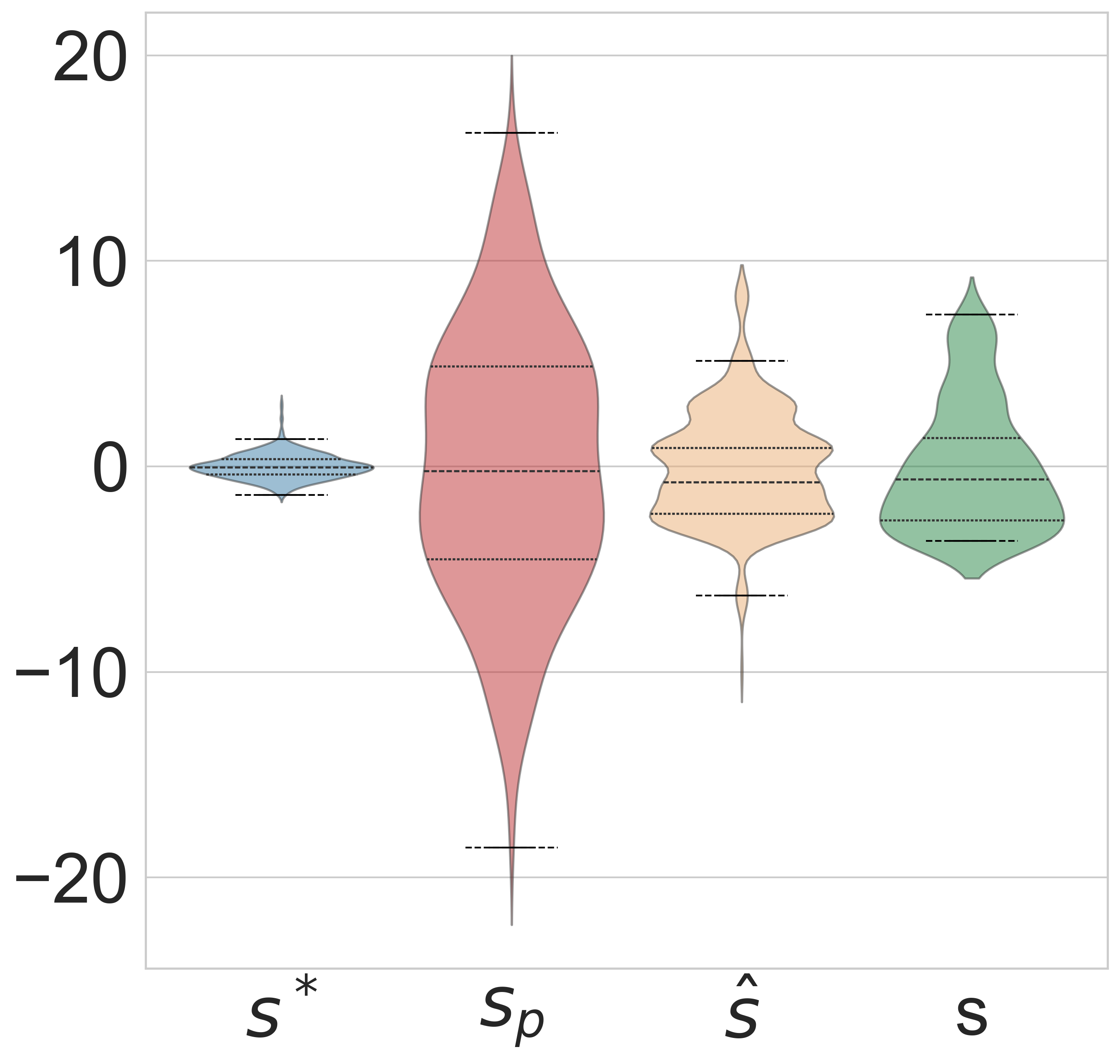}
    \renewcommand{\arraystretch}{1}
    \setlength{\abovecaptionskip}{-0.25cm}
    \caption{Score Distribution of HRA-E}
    \setlength{\belowcaptionskip}{-0.1cm}
    \label{fig:violin}
    % \vspace*{-.2in}
\end{wrapfigure}

\textbf{Semi-Synthetic Evaluation.} We first evaluate on semi-synthetic setting where ground-truth item scores are available. We simulate $m$ annotators who provide pairwise comparisons, apply various Stage-1 aggregation methods (w/ or w/o modeling annotator heterogeneity) to obtain a consensus score $\mathbf{s}^{*}$, and then train a predictive model to produce an initial model score $\mathbf{s}_p$ for each item (using a portion of the ground truth with added noise to emulate model error). We calibrate $\mathbf{s}_p$ via \AtC\ to obtain the output $\hat{\mathbf{s}}$.

In Table~\ref{tab:semi-synthetic}, we observe that $\hat{\mathbf{s}}$ consistently outperforms both $\mathbf{s}^{*}$ and $\mathbf{s}_p$ across nearly all metrics and methods (the $\hat{\mathbf{s}}$ entries are almost always \textbf{bolded} as best).\footnote{\offer{The Kendall's $\tau$ values for $\hat{\mathbf{s}}$ and $\mathbf{s}^{*}$ differ due to tie-creation operation; see Appendix~\ref{app:metrics} for details.}} 
Figure~\ref{fig:violin} further illustrates this improvement: the $\hat{\mathbf{s}}$ values align much more closely with the true distribution of $s$, whereas the uncalibrated $\mathbf{s}^{*}$ and $\mathbf{s}_p$ scores deviate significantly (same conclusion for all methods in Appendix~\ref{sec:score_visualizations}). 
\offer{To verify that this improvement stems from $\mathbf s_p$ rather than simple rescaling, we conducted experiments matching $\mathbf{s}^{*}$'s range to ground truth (Appendix~\ref{app:scaling-analysis}).}
In summary, the \AtC\ score dominates the human-only and model-only assessment, confirming that combining the two sources produces more accurate judgments (answering \textbf{RQ1}). 

We examine the effect of modeling annotator heterogeneity on the consensus ranking quality. In Table~\ref{tab:semi-synthetic}, the methods that account for annotator heterogeneity (HRA-G/E/N, and CrowdBT/TCV) produce better (or the \textcolor{red}{\textbf{best}}) calibrated scores $\hat{\mathbf{s}}$ than the homogeneous models (BTL, TCV). Furthermore, we observe that for $\mathbf{s}^{*}$, the heterogeneous HRA models achieve more accurate rankings (higher Kendall $\tau$), despite performing worse on distance metrics compared to 4 baselines. Notably, after calibration with $\mathbf{s}_p$, the resulting $\hat{\mathbf{s}}$ scores from heterogeneous models surpass 4 baselines across Wasserstein distance and KS statistic, confirming that our design choice to prioritize ranking information from $\mathbf{s}^{*}$ is indeed effective (answering \textbf{RQ2}).

\begin{wrapfigure}{r}{0.3\textwidth}
    \centering
    % \vspace*{-.3in}
    \renewcommand{\arraystretch}{1}
    \includegraphics[width=1\linewidth]{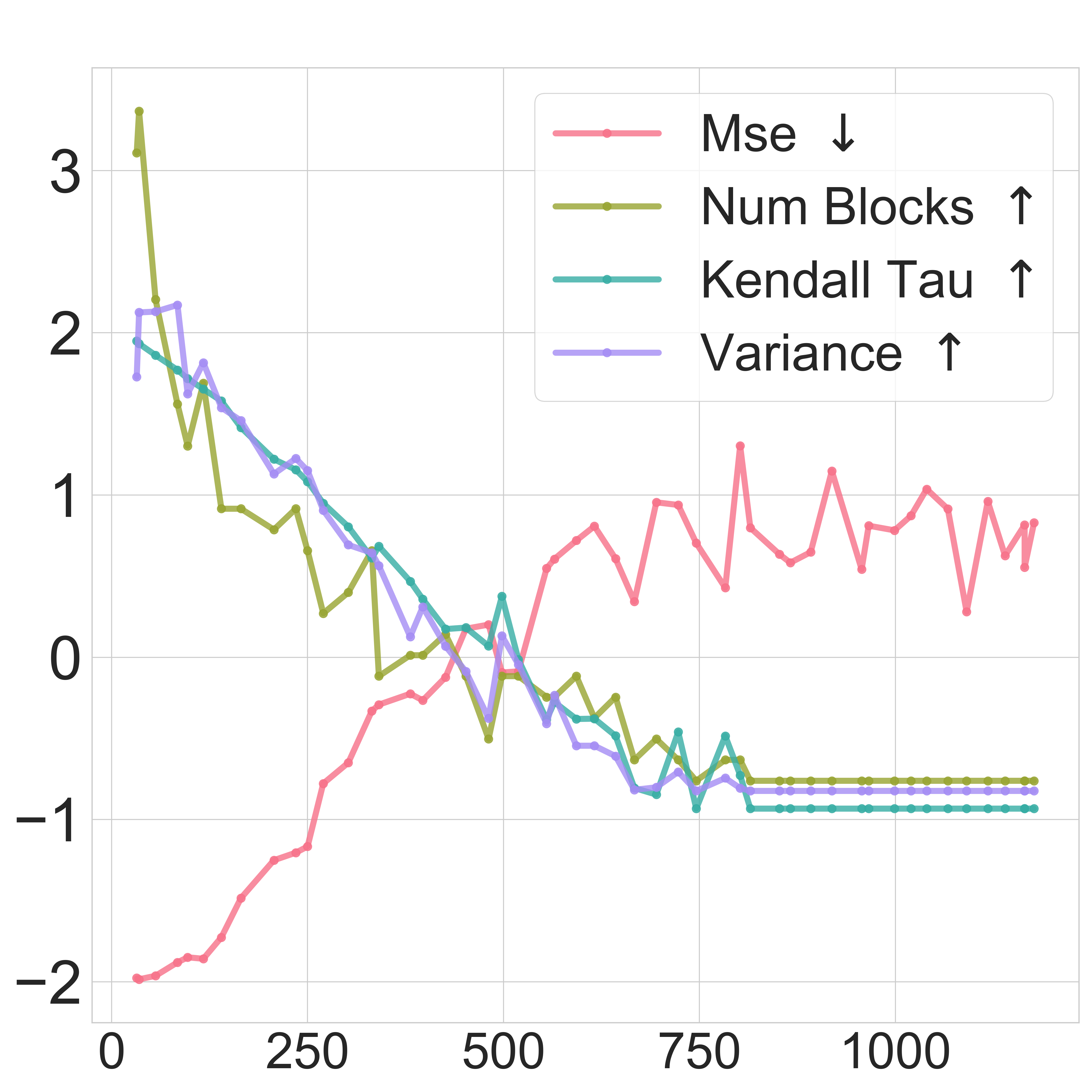}
    \setlength{\abovecaptionskip}{-0.2cm}
    \caption{Robustness Analysis}
    \setlength{\belowcaptionskip}{-0.2cm}
    \label{fig:robu}
    % \vspace*{-.3in}
\end{wrapfigure}
Figure~\ref{fig:robu} shows \AtC's robustness to ranking errors in $\mathbf{s}^{*}$. Our normalized results show that as we introduce pairwise judgment inversions, \AtC\ degrades gradually until a critical threshold. With moderate noise (up to 400 inversions), Kendall $\tau$ decreases steadily while MSE increases, but \AtC\ still produces meaningful outputs by leveraging model signals. However, beyond approximately 500 inversions, performance collapses completely—Kendall $\tau$ becomes negative, variance approaches zero, and the output distribution flattens (shown by declining block count). This synchronized degradation across all metrics confirms that while \AtC\ can tolerate considerable noise in consensus rankings, extremely corrupted inputs will eventually render the calibration ineffective (answering \textbf{RQ3}).

\textbf{Real-World Evaluation.} We evaluate the validity and robustness of the \AtC\ framework on the Dots-activity dataset, a benchmark for counting dots in images. We assume models can only process corrupted images due to various constraints, while humans can still make reliable inferences based on experience, making this a valuable human-centered assessment problem. We introduce various image corruptions to simulate real-world scenarios with noisy or partially occluded data, then generate predictive scores $\mathbf{s}_p$ using OpenCV's contour detection~\citep{SUZUKI198532} as an imperfect predictor. As illustrated in Figure~\ref{fig:damage-examples}, we apply 4 corruption types~\citep{hendrycks2018benchmarking}: global Gaussian blur and three localized corruptions (blur, noise, and whiteout), with degradation intensity controlled by a hyperparameter to evaluate performance across different corruption levels.

\begin{figure}[htbp]
    % \vspace*{-.25in}
    \centering
    \subfigure[Original]{
        \includegraphics[width=0.17\textwidth]{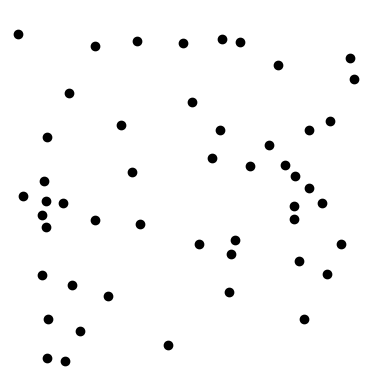}
    }
    \subfigure[BLUR]{
        \includegraphics[width=0.17\textwidth]{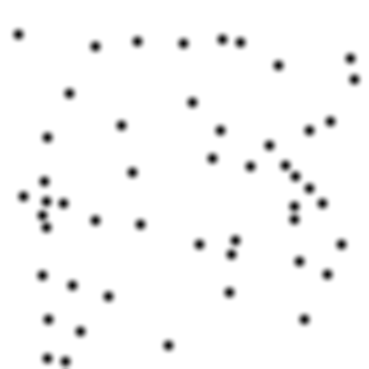}
    }
    \subfigure[LOCAL\_BLUR]{
        \includegraphics[width=0.17\textwidth]{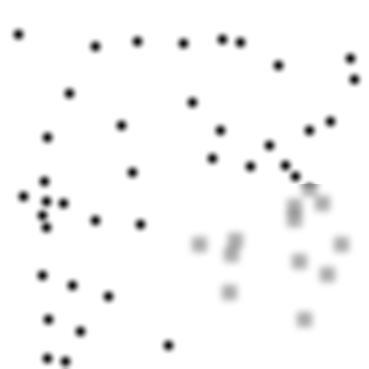}
    }
    \subfigure[LOCAL\_NOISE]{
        \includegraphics[width=0.17\textwidth]{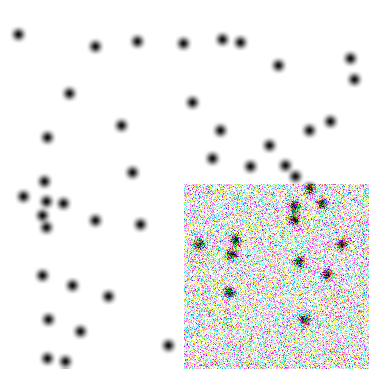}
    }
    \subfigure[WHITEOUT]{
        \includegraphics[width=0.17\textwidth]{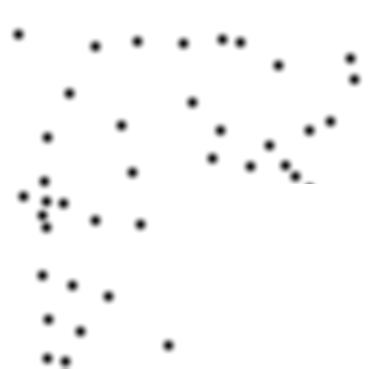}
    }
    % \vspace*{-.15in}
    \caption{Examples of image corruption types applied to the Dots-activity dataset.}
    % \vspace*{-.2in}
    \label{fig:damage-examples}
\end{figure}

\offer{Table~\ref{tab:real-world-dots} summarizes performance across \AtC\ and 3 human-centered assessment baselines (GPPL, Rank-SVM, BARCW); comprehensive results including all Stage-1 methods appear in}
Figure~\ref{fig:hra-e-radars},~\ref{fig:radar_HRA-G}, and~\ref{fig:radar_HRA-N} presents the core robustness results for HRA-E, HRA-G and HRA-N, with similar trends for other baseline methods shown in Appendix~\ref{app:appendix-radars}. The radar charts intuitively visualize the degradation of Kendall's Tau correlation as damage intensity increases (clockwise from the top axis). Each chart compares the initial model ($\mathbf s_p$), the uncalibrated human consensus ($\mathbf s^*$), and the calibrated \AtC\ outputs based on rankings ($\hat{\mathbf s}$ (Rank)) and ratings ($\hat{\mathbf s}$ (Rate)).
Across the 4 types of corruption, our proposed primary method $\hat{\mathbf s}$ (Rank) demonstrates remarkable resilience. While the performance of the raw objective model $\mathbf s_p$ degrades sharply with increasing noise, particularly under localized corruptions. This shows that by anchoring the model's scores to the stable structure provided by human consensus, \AtC\ effectively mitigates the impact of noise on the predictor (answering \textbf{RQ4}). Furthermore, these results underscore the superiority of using ordinal (ranking) information for calibration. In every scenario, $\hat{\mathbf s}$ (Rank) outperforms $\hat{\mathbf s}$ (Rate), the alternative calibrated using aggregated cardinal ratings. This suggests that the consensus ranking ($\mathbf s^*$) provides a more robust calibration framework than aggregated numerical estimates, which are susceptible to individual biases and inconsistent scales (answering \textbf{RQ5}). In summary, the \AtC\ framework demonstrates strong real-world performance by maintaining high accuracy even under severe input degradation (answering \textbf{RQ6}).

\begin{table}[tbp]\small
% \vspace*{-.1in}
\centering
\caption{Real-world results on Dots dataset.}
\label{tab:real-world-dots}
\begin{tabular}{p{0.3cm}p{1.15cm}p{2.55cm}p{2.1cm}p{2.7cm}p{2.55cm}}
\toprule
& \multirow{2}{*}{\textbf{Stage-1}} & \textbf{Kendall $\tau$}$\uparrow$ & \textbf{Wasserstein}$\downarrow$ & \textbf{KL}$\downarrow$ & \textbf{MSE}$\downarrow$ \\
\cmidrule(lr){3-3}\cmidrule(lr){4-4}\cmidrule(lr){5-5}\cmidrule(lr){6-6}
& \textbf{Method} & $\mathbf{s}^{*}/\ \mathbf{s}_p/\ \hat{\mathbf{s}}$ & $\mathbf{s}^{*}/\ \mathbf{s}_p/\ \hat{\mathbf{s}}$ & $\mathbf{s}^{*}/\ \mathbf{s}_p/\ \hat{\mathbf{s}}$ & $\mathbf{s}^{*}/\ \mathbf{s}_p/\ \hat{\mathbf{s}}$ \\
\midrule
\multirow{3}{*}{\textbf{AtC}}
& HRA-G & 0.917 / 0.923 / \textbf{0.940} & 6.97 / 2.53 / \textcolor{red}{\textbf{2.53}} & 123.13 / 0.881 / \textbf{0.860} & 65.08 / 11.75 / \textbf{9.61} \\
& HRA-E & 0.922 / 0.922 / \textcolor{red}{\textbf{0.943}} & 6.98 / 2.53 / \textcolor{red}{\textbf{2.53}} & 123.16 / 0.881 / \textbf{0.861} & 65.16 / 11.75 / \textcolor{red}{\textbf{9.59}} \\
& HRA-N & 0.917 / 0.923 / \textbf{0.934} & 7.05 / 2.53 / \textcolor{red}{\textbf{2.53}} & 125.50 / 0.881 / \textcolor{red}{\textbf{0.859}} & 66.44 / 11.75 / \textbf{9.75} \\
\midrule
\multicolumn{2}{l}{GPPL} & -- / -- / 0.931 & -- / -- / 64.50 & -- / -- / 126.62 & -- / -- / 4220.36 \\
\multicolumn{2}{l}{Rank-SVM} & -- / -- / 0.923 & -- / -- / 61.20 & -- / -- / 133.70 & -- / -- / 3814.49 \\
\multicolumn{2}{l}{BARCW} & -- / -- / 0.940 & -- / -- / 64.62 & -- / -- / 24.09 & -- / -- / 4236.16 \\
\bottomrule
\end{tabular}
% \vspace*{-.25in}
\end{table}

% \begin{figure}[htbp]
% % \vspace*{-.1in}
%     \centering
%     \subfigure[BLUR]{
%         \includegraphics[width=0.205\textwidth]{figure/radar_plots/radar_plot_HRA-E_GAUSSIAN_BLUR.pdf}
%     }
%     \subfigure[LOCAL\_BLUR]{
%         \includegraphics[width=0.205\textwidth]{figure/radar_plots/radar_plot_HRA-E_LOCAL_BLUR.pdf}
%     }
%     \subfigure[LOCAL\_NOISE]{
%         \includegraphics[width=0.205\textwidth]{figure/radar_plots/radar_plot_HRA-E_LOCAL_NOISE.pdf}
%     }
%     \subfigure[WHITEOUT]{
%         \includegraphics[width=0.31\textwidth]{figure/radar_plots/radar_plot_HRA-E_WHITEOUT.pdf}
%     }
%     % \vspace*{-.15in}
%     \caption{HRA-E's performance (Kendall $\tau$$\uparrow$) under 4 corruption types as damage intensity increases.}
%     % \vspace*{-.1in}
%     \label{fig:hra-e-radars}
% \end{figure}

\begin{figure}[htbp]
% \vspace*{-.1in}
    \centering
    \subfigure[BLUR]{
        \includegraphics[width=0.23\textwidth]{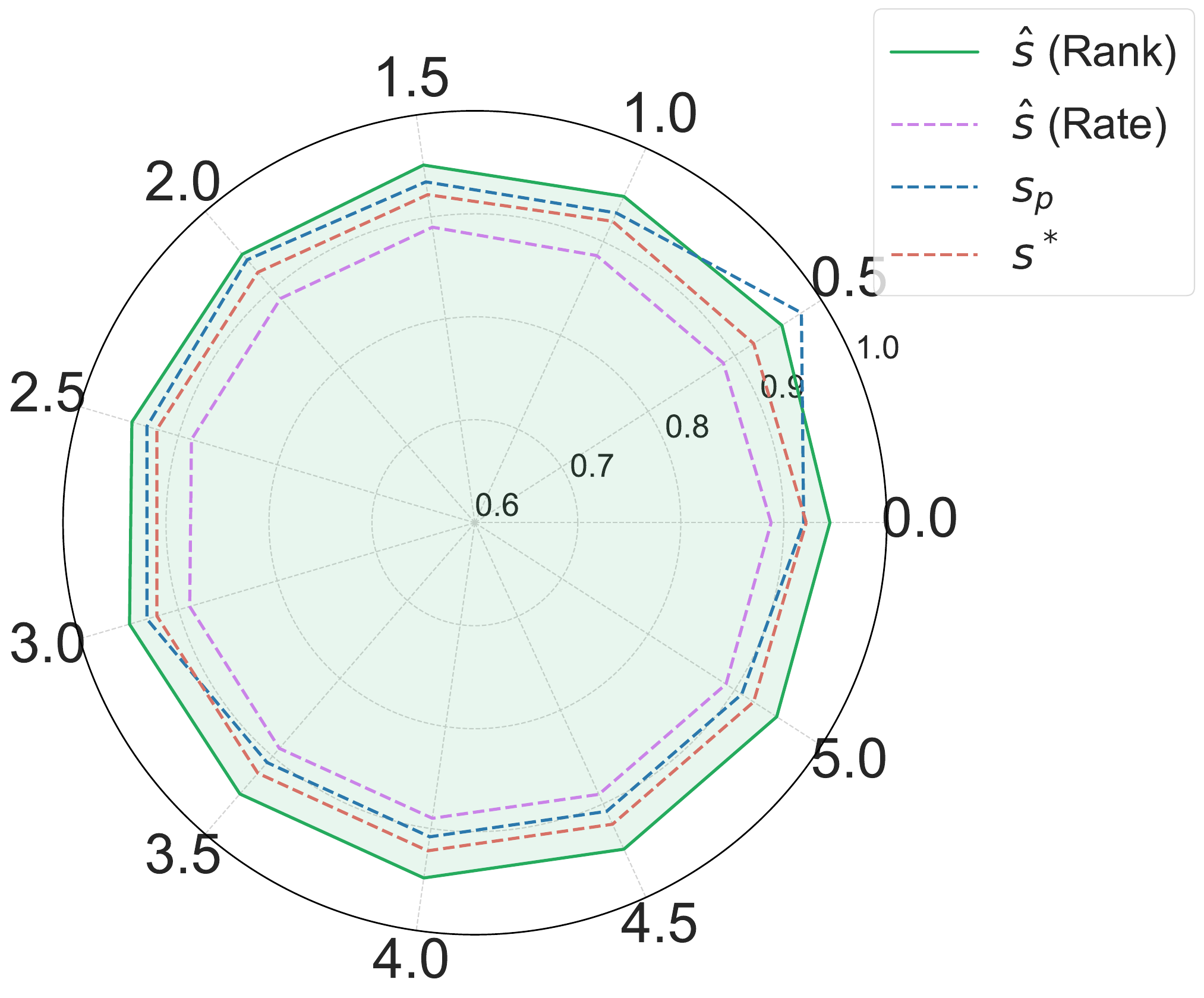}
    }
    \subfigure[LOCAL\_BLUR]{
        \includegraphics[width=0.23\textwidth]{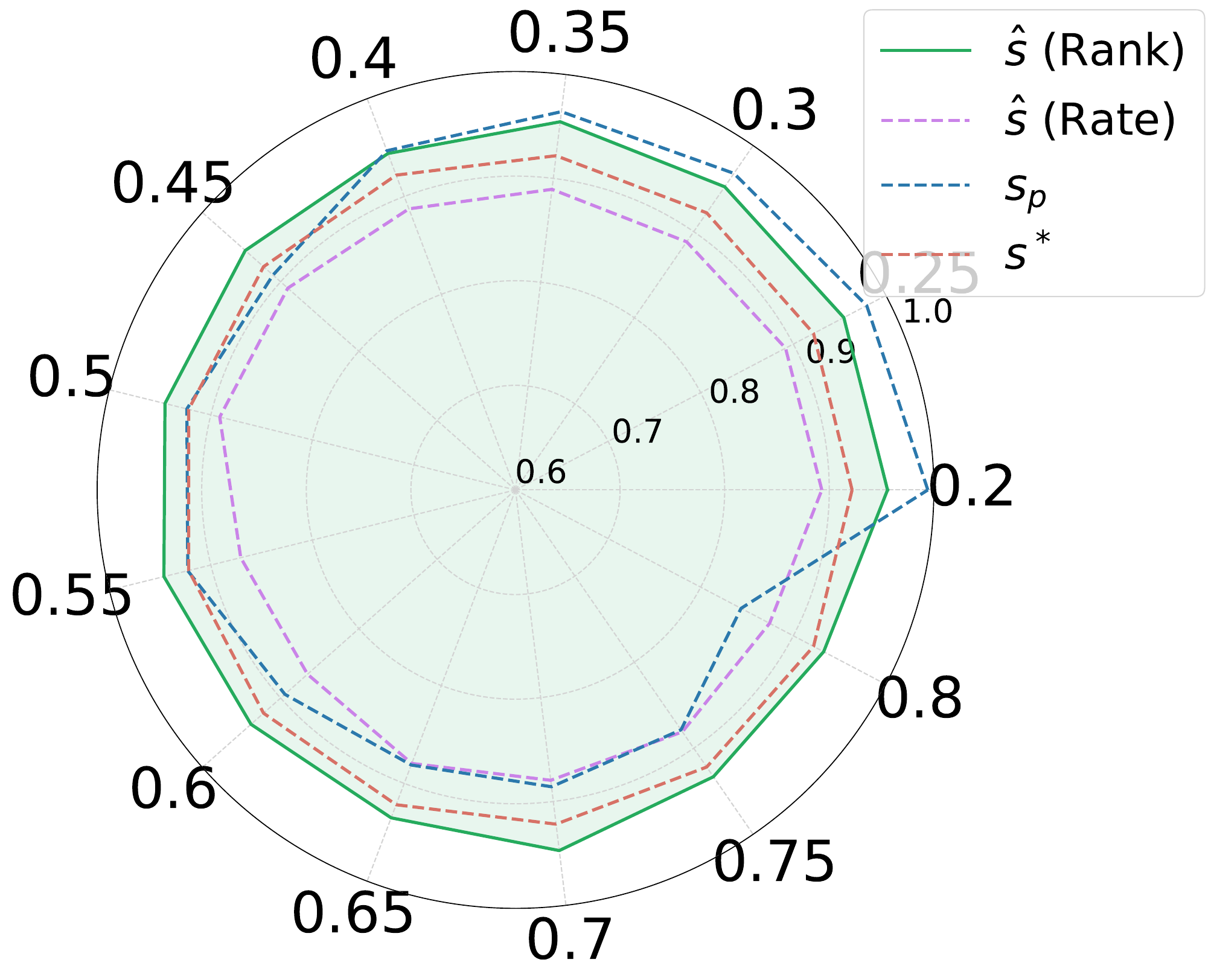}
    }
    \subfigure[LOCAL\_NOISE]{
        \includegraphics[width=0.23\textwidth]{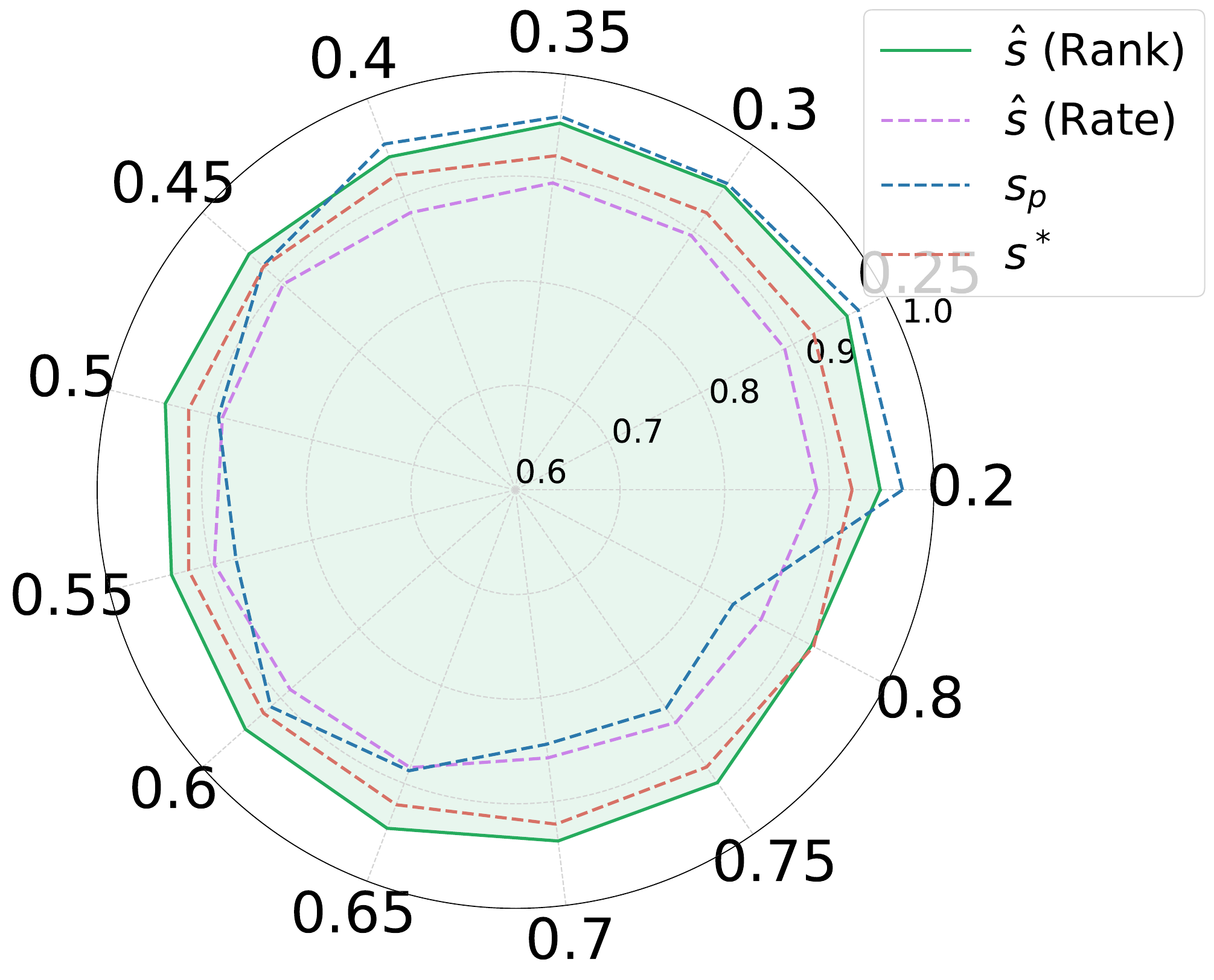}
    }
    \subfigure[WHITEOUT]{
        \includegraphics[width=0.23\textwidth]{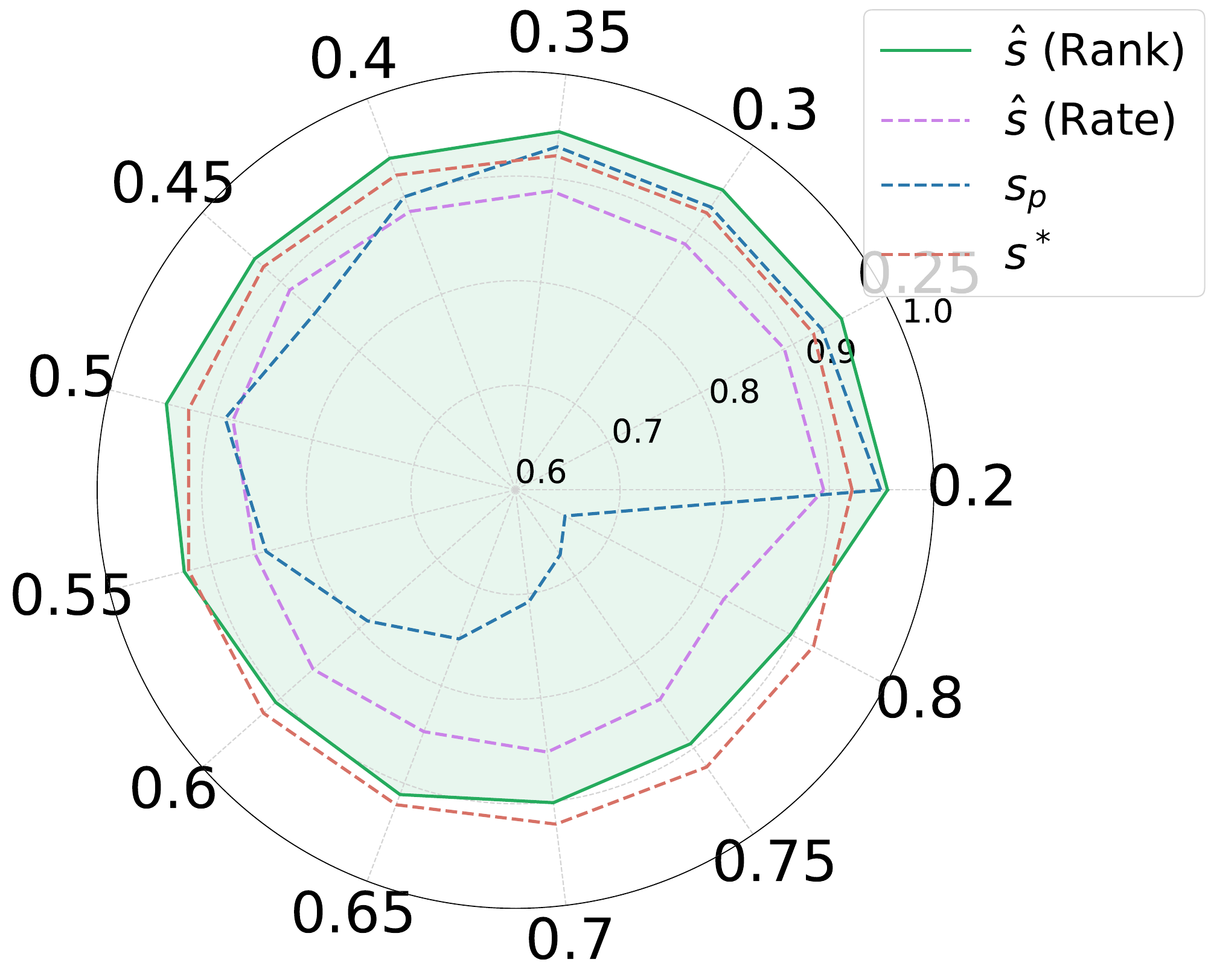}
    }
    % \vspace*{-.15in}
    \caption{Radar plots under different noise conditions (HRA-E).}
    % \vspace*{-.1in}
    \label{fig:hra-e-radars}
\end{figure}

\begin{figure}[htbp]
    \centering
    \subfigure[BLUR]{
        \includegraphics[width=0.23\textwidth]{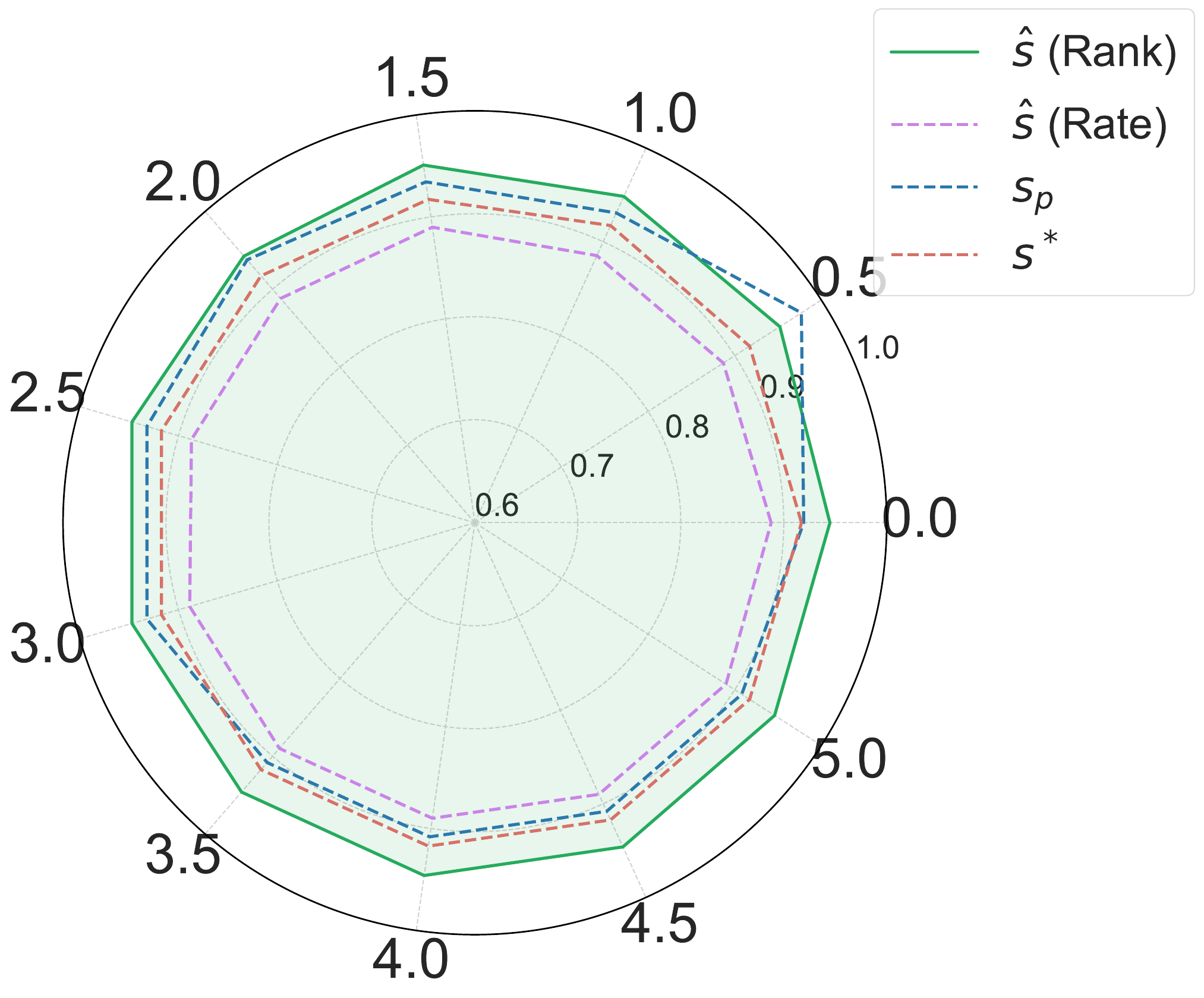}
    }
    \subfigure[LOCAL\_BLUR]{
        \includegraphics[width=0.23\textwidth]{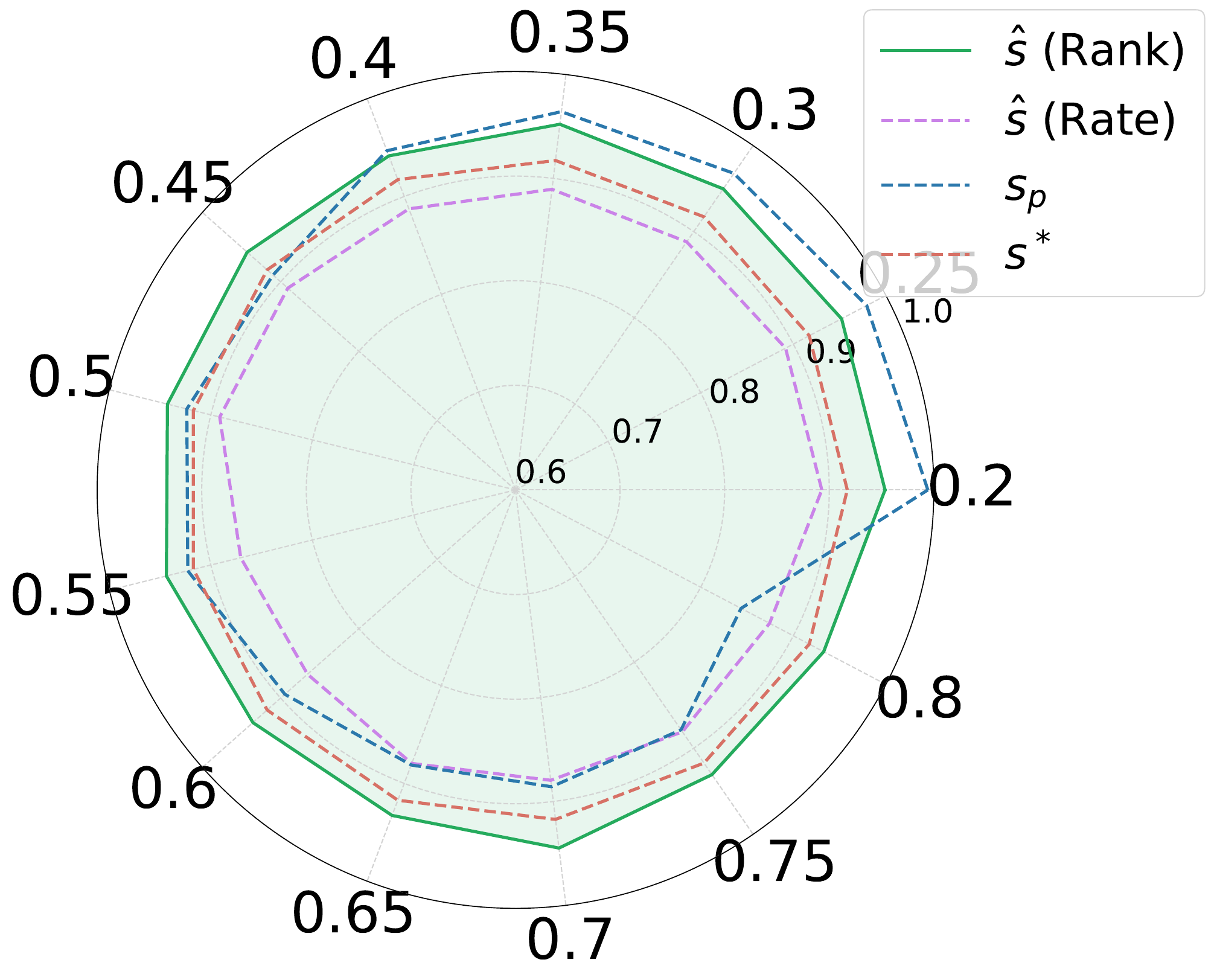}
    }
    \subfigure[LOCAL\_NOISE]{
        \includegraphics[width=0.23\textwidth]{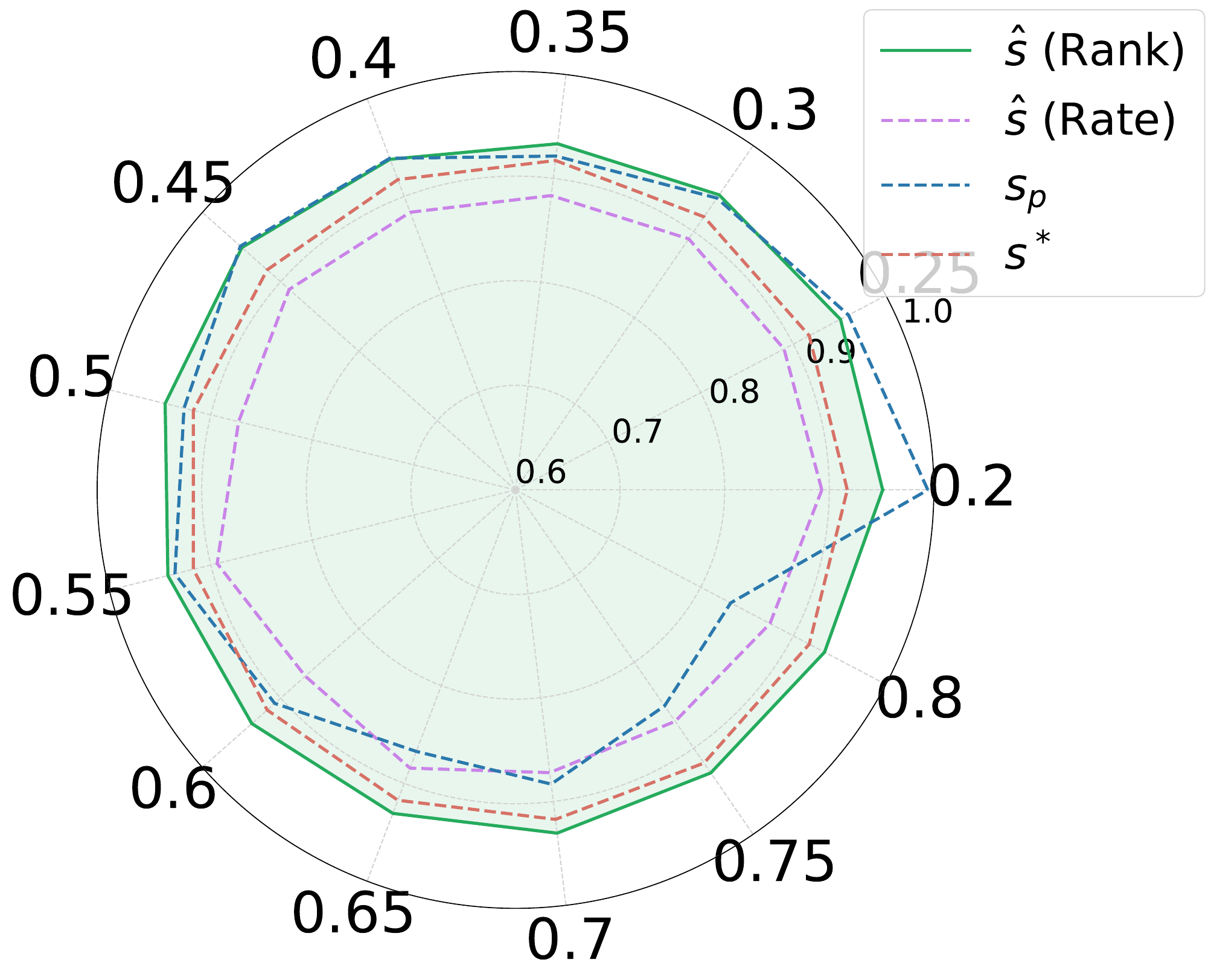}
    }
    \subfigure[WHITEOUT]{
        \includegraphics[width=0.23\textwidth]{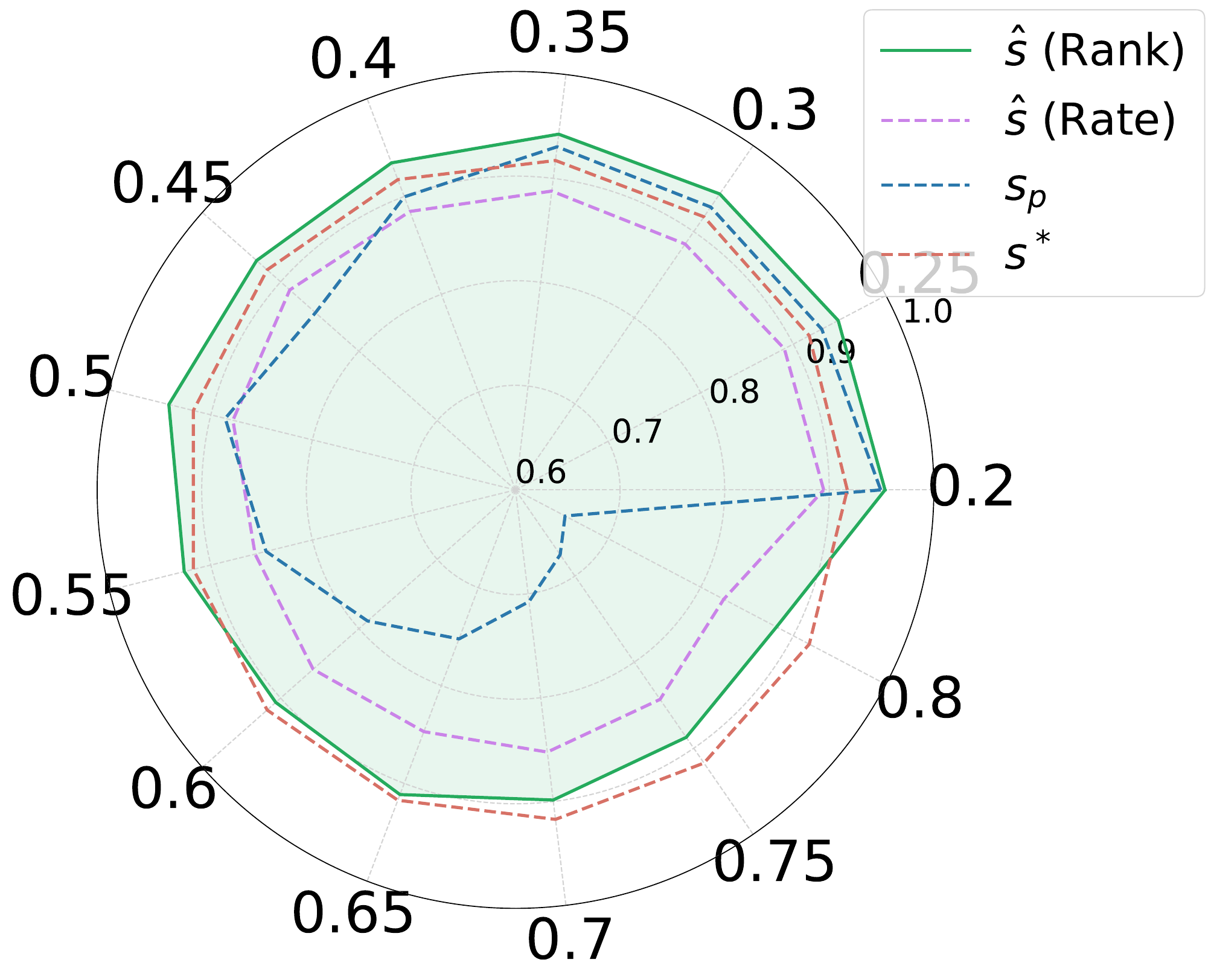}
    }
    \caption{Radar plots under different noise conditions (HRA-G).}
    \label{fig:radar_HRA-G}
\end{figure}

\begin{figure}[htbp]
    \centering
    \subfigure[BLUR]{
        \includegraphics[width=0.23\textwidth]{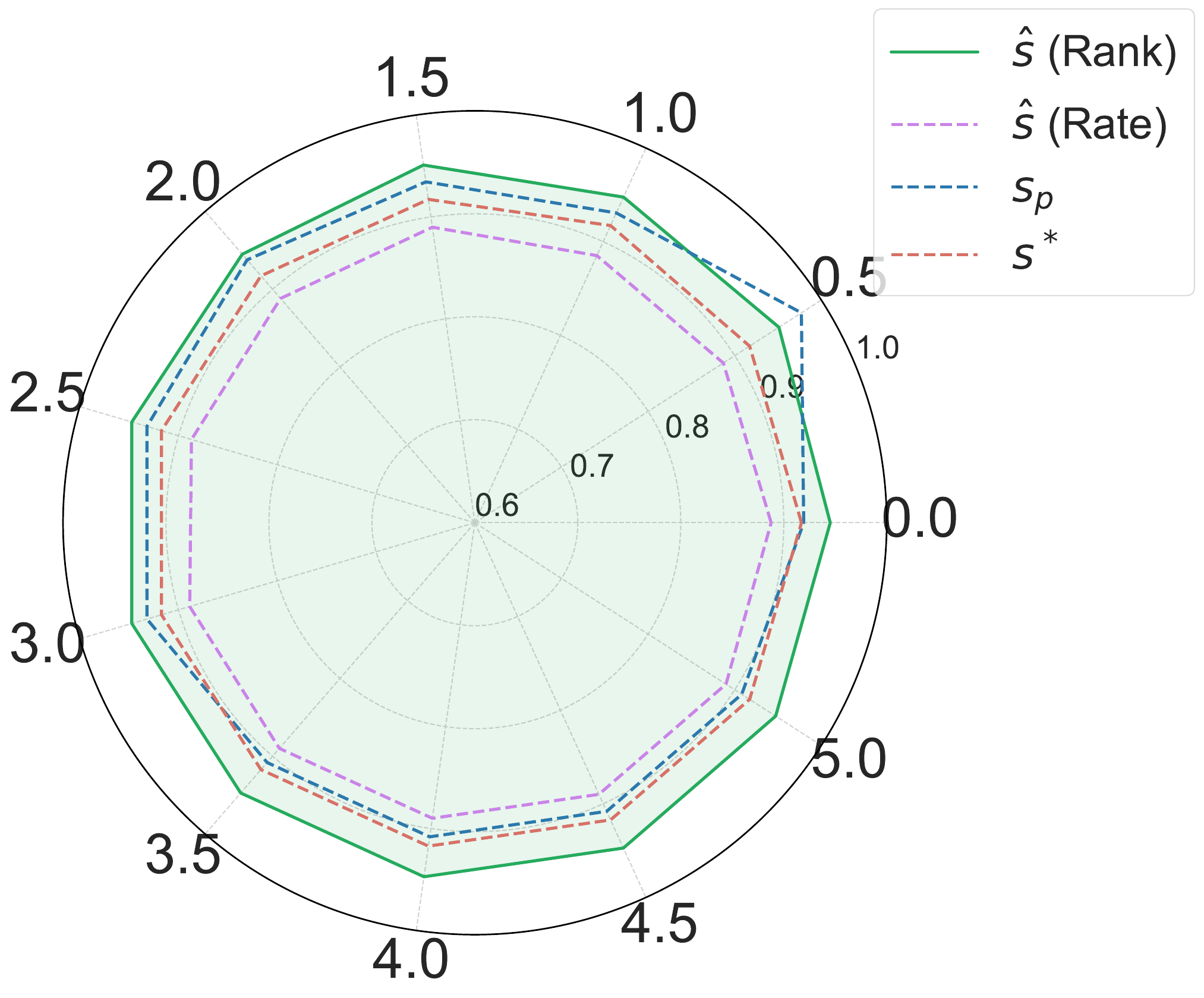}
    }
    \subfigure[LOCAL\_BLUR]{
        \includegraphics[width=0.23\textwidth]{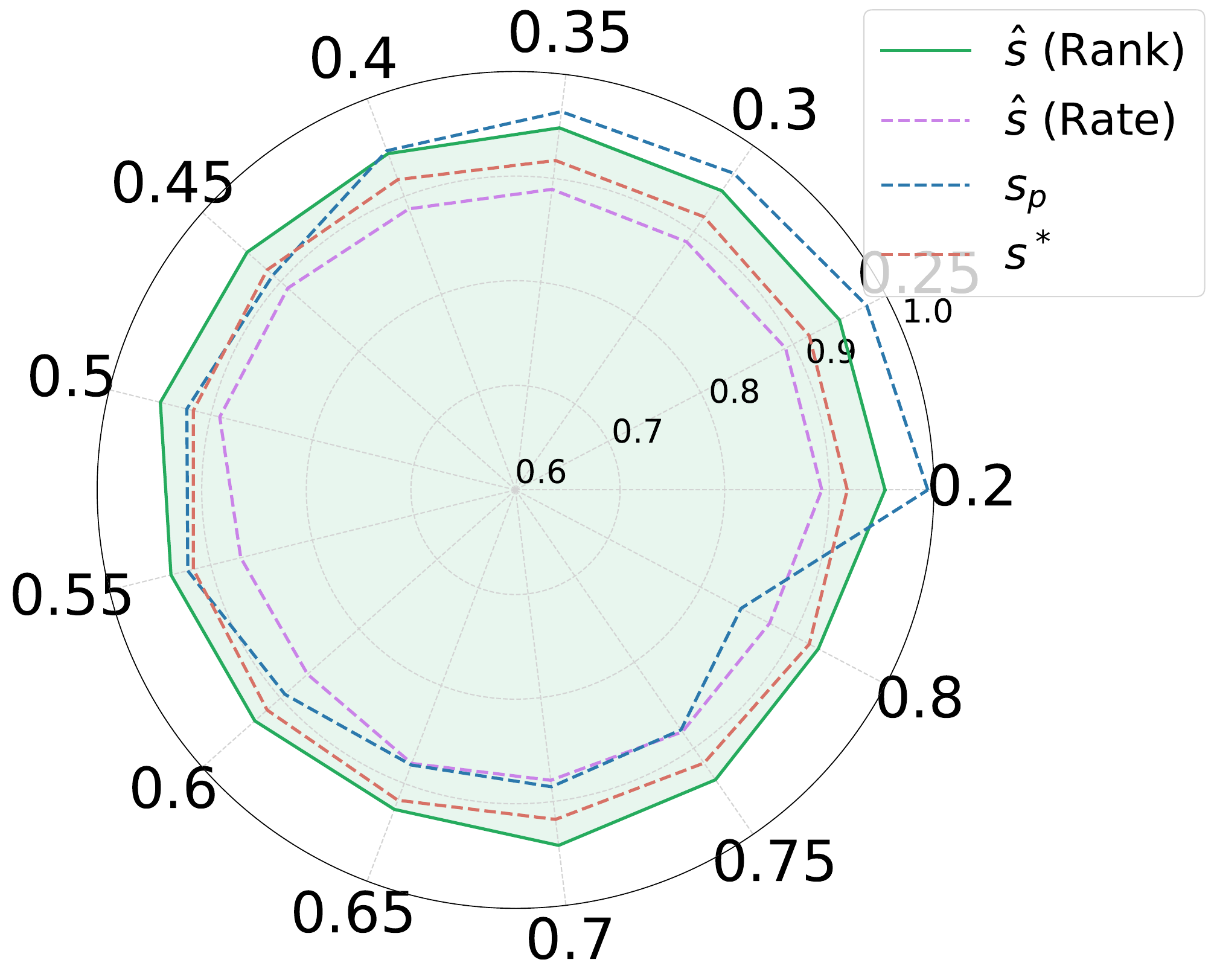}
    }
    \subfigure[LOCAL\_NOISE]{
        \includegraphics[width=0.23\textwidth]{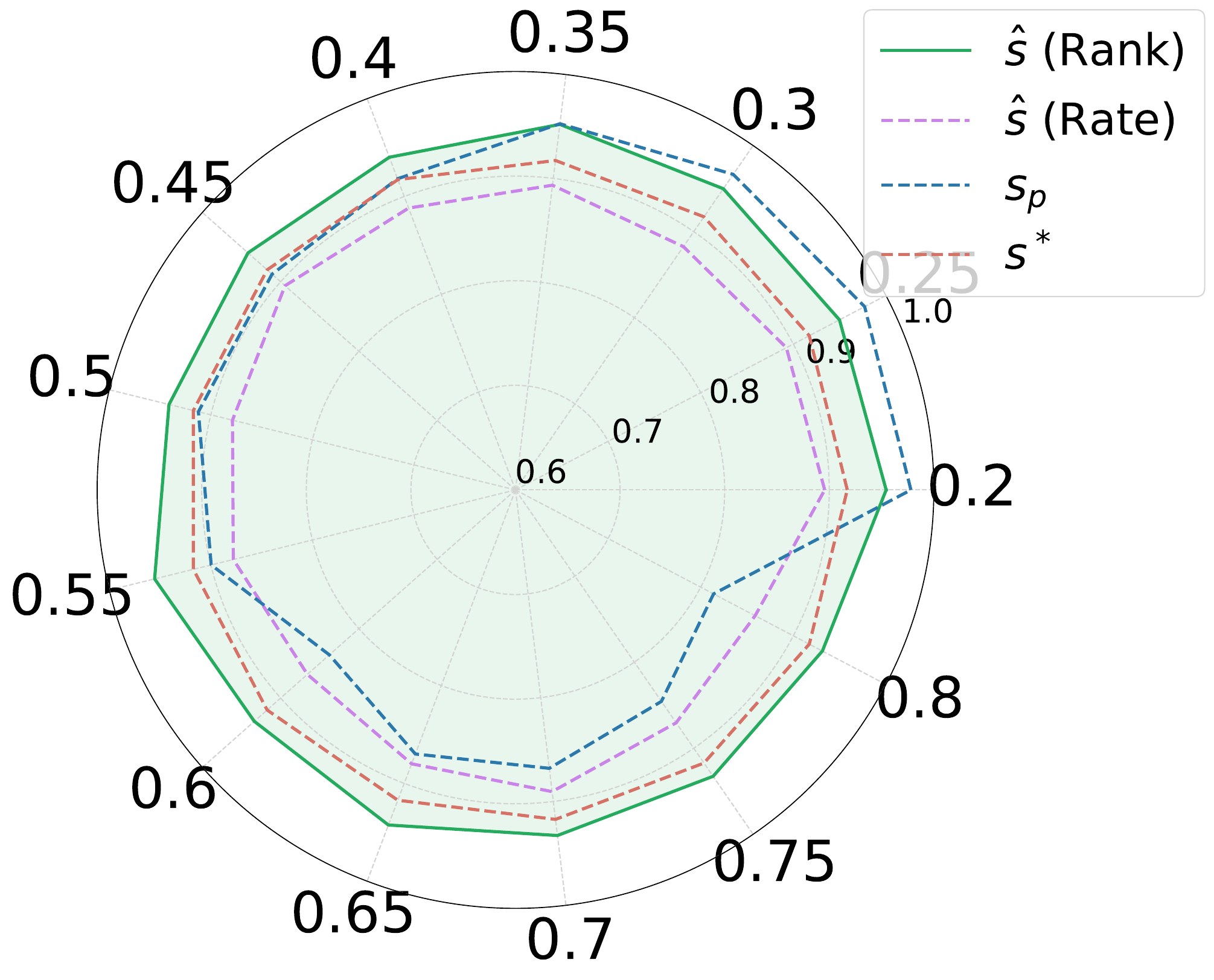}
    }
    \subfigure[WHITEOUT]{
        \includegraphics[width=0.23\textwidth]{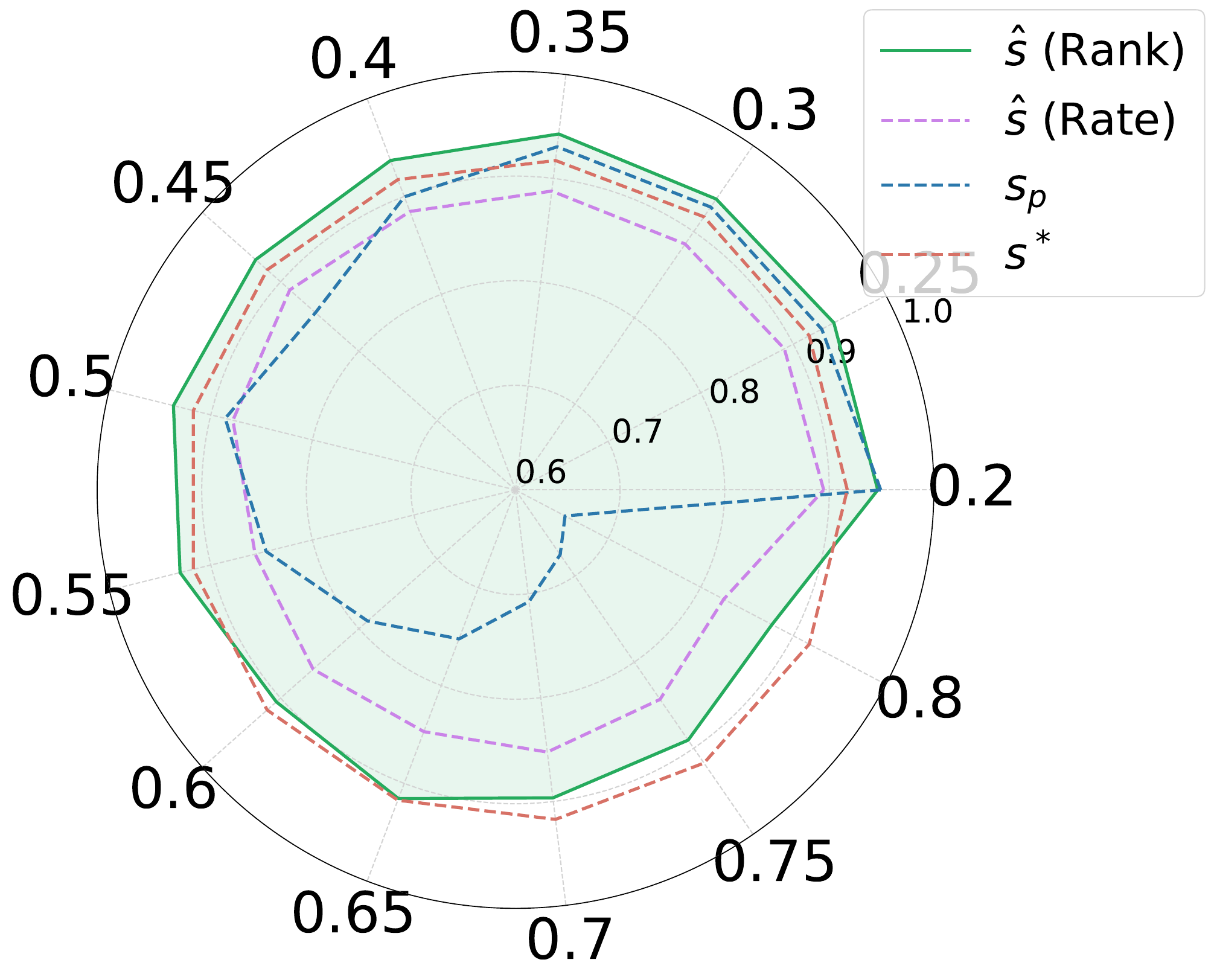}
    }
    \caption{Radar plots under different noise conditions (HRA-N).}
    \label{fig:radar_HRA-N}
\end{figure}

\section{\offer{Additional Related Work}}
\label{sec:related}
\offer{\textbf{Judgement Aggregation and Rank Aggregation.}
Inferring a consensus from multiple noisy annotators is a common problem in crowdsourcing and psychometrics~\citep{Murphy2003MixturesOD,Liu_Zhao_Liao_Lu_Xia_2019,hu2022explaining}. Classic models, like Dawid \& Skene’s ~\citep{bd9dfdb6-b296-3318-8bf2-0c827da00fd8} model estimate a latent true label or score while capturing annotator error rates, with extensions incorporating bias and expertise. 
Ranking models, such as Bradley–Terry or Thurstone–Mosteller variants, account for heterogeneous annotator reliability~\citep{JMLR:v23:20-315}. Our Stage-1 algorithm accounts for rater heterogeneity to estimate the consensus ranking $\mathbf s^{*}$, which is supported by Theorem~\ref{thm:cov-compare} and experiments, outperforms homogeneous annotator assumptions. Unlike prior methods focused solely on consensus, \AtC\ uses it as an ordinal scaffold for predictive model calibration, enhancing downstream efficiency with formal guarantees on predictive performance.}

\offer{\textbf{Model Calibration and Isotonic Regression.}
Calibrating model outputs to better align with empirical outcomes is a common practice in machine learning, especially for probabilistic predictions. Isotonic regression in particular is a non-parametric technique that enforces output monotonicity and has been used to calibrate scores to observed binary labels or probabilities~\citep{10.1214/11-AOAS504,baan-etal-2022-stop}. In \AtC, we novelly apply isotonic regression to calibrate a continuous model predictions to a rank-based target derived from human judgments. This usage differs from standard calibration, as we are mapping model scores to an ordering rather than absolute ground-truth values. Prior work on isotonic regression and calibration~\citep{NEURIPS2021_eaf76caa,chatterjee2015risk, bellec2018sharp} has not, to our knowledge, combined isotonic regression with a crowd-sourced ranking constraint. Theorem~\ref{thm:Robustness} provides calibration error analysis with imperfect rankings, demonstrating \AtC's ability to effectively calibrate using inferred consensus rather than fixed ground-truth labels. Unlike ~\citep{bellec2018sharp}, our innovation lies in modeling subjective optimum $\tilde{\mathbf s}$ as a random variable with allowable non-zero bias $\boldsymbol \nu$, enabling monotonic calibration when only relative judgments are available. We choose $L_2$ projection instead of $L_1$ or $L_\infty$ alternatives because it corresponds to maximum likelihood estimation under Gaussian noise and ensures unique solutions due to strict convexity~\citep{stout2023lpisotonicregressionalgorithms}.}

\textbf{Human-AI Complementarity.} Our work is also related to methods that integrate human and machine decision-making to improve prediction quality. Such approaches seek to optimally fuse human labels and algorithmic learning~\citep{JMLRraykar10a,yang2016hcomp,Li2018HybridMSTAH,bayesian-complementarity-2022}. Notably, calibration via human judgment has emerged as a practical entry point for real-world assessment, such as logistics area difficulty assessment~\citep{RAICA}. \AtC\ extends this line of work in two ways: it generalizes the framework beyond domain-specific settings, and provides an optimality theorem (Theorem~\ref{thm:optimality}) that explains when and why the two-stage pipeline outperforms either component alone, which gap is common across prior human-in-the-loop methods~\citep{NEURIPS2024_2375085c,alur2024human}. Unlike \citep{JMLR:v23:20-315,barcw}, which impose linear model constraints, \AtC\ places no restrictions on the aggregation method or the predictive model, making the design modular and broadly applicable.

\section{Conclusion}
\label{sec:conclusion}

We introduced \AtC, a two-stage framework that addresses the fundamental challenge of human-centered evaluation when ground truth is costly, unobservable, or future-dependent. Our theoretical analysis establishes three key results: (1) heterogeneous annotator modeling yields more efficient consensus estimation than homogeneous assumptions, (2) isotonic calibration provides risk bounds even under misspecified rankings, and (3) \AtC\ asymptotically outperforms model-only assessment. Empirically, \AtC\ demonstrates consistent improvements in accuracy and robustness across semi-synthetic and real-world datasets, particularly under data degradation conditions common in practice. 
The framework suits human-centered assessment tasks where model-only evaluation faces limitations due to unobservable or costly ground truth, such as when models receive degraded inputs while humans access complete information, or when collecting human comparisons is more cost-effective than measuring ground truth directly. 
Future work can extend \AtC\ in several directions: multi-dimensional assessments in which items are judged along multiple criteria, and domains with expert polarization that require multiple distinct consensus rankings.
Another direction is to extend our formulation to LLM-as-a-judge settings, where generalizing the heterogeneous annotator to language model evaluators would help build automated and adaptive assessment systems. We believe that addressing these challenges will broaden the impact of collaborative human-AI assessment systems.

\newpage
\section*{Statement}
This paper is the extended version of \citep{xie2026atc}, incorporating additional content that could not be included in the conference submission due to page limitations. We release code and datasets at \url{https://github.com/CAMELLIAxt/12500_AtC_supp}.
\normalem
\bibliography{iclr2026_conference}
\bibliographystyle{iclr2026_conference}

\newpage
\appendix
% \tableofcontents
% \input{appendix/A0}
\section{Asymptotic Efficiency of the Heterogeneous Thurstone Model Estimator}
\label{Appendix-of-cov}

This section details the asymptotic properties of the score estimators derived from the Heterogeneous Thurstone Model (HTM) and compares its efficiency against an estimator from a misspecified homogeneous model.

\subsection{Model Definitions and Problem Statement}

We consider the task of estimating item scores $\mathbf{s} \in \mathbb{R}^n$ from pairwise comparison data. The comparisons are provided by $m$ users, each user $u$ performs $k_u$ comparisons, leading to a total of $N = \sum_{u = 1}^m k_u$ observations $Y = \{Y_{ul}\}_{u=1..m, l=1..k_u}$. Each $Y_{ul} \in \{0,1\}$ represents the outcome of a comparison between items $i_l$ and $j_l$ by user $u$.

\subsubsection{The \textbf{Hetero}geneous Thurstone Model}
We assume the true data generating process follows a HTM. Under this model, the probability that user $u$ prefers item $i_l$ over item $j_l$ is given by:
\begin{align}
    P(Y_{ul}=1 | \mathbf{s}, \gamma_u) = F(\gamma_u a_{l,u}^\top \mathbf{s}), 
\end{align}
where $\mathbf{s} \in \mathbb{R}^n$ is the vector of true item scores, $\gamma_u \in \mathbb{R}^+$ is the accuracy parameter for user $u$, $a_{l,u} = e_{i_l} - e_{j_l}$ is a difference vector with $e_i$ being the $i$-th standard basis vector, and $F(\cdot)$ is a known link function, typically a Cumulative Distribution Function (CDF) such as the logistic function (leading to the Heterogeneous Bradley-Terry-Luce model, HBTL) or the Normal CDF (leading to the Heterogeneous Thurstone Case V model, HTCV).

The complete parameter vector for the HTM is $\theta = (\mathbf{s}, \gamma) \in \mathbb{R}^{n+m}$, where $\gamma = (\gamma_1, ..., \gamma_m)^\top$. The log-likelihood for a single observation $(Y_{ul}, a_{l,u})$ given parameters $(\mathbf{s}, \gamma_u)$ is denoted by $\ell(Y_{ul} | \mathbf{s}, \gamma_u) = \log P(Y_{ul} | \mathbf{s}, \gamma_u)$. The average log-likelihood for all $N$ observations is:
\begin{align}
    \mathcal{L}_{hete}(\theta; Y) = \frac{1}{N} \sum_{u=1}^m \sum_{l=1}^{k_u} \ell(Y_{ul} | \mathbf{s}, \gamma_u) .
\end{align}

The Maximum Likelihood Estimator (MLE) for the HTM is: 
\begin{align}
    \hat{\theta}_{hete} = (\hat{\mathbf{s}}_{hete}, \hat{\gamma}_{hete}) = \arg\max_{\theta} \mathcal{L}_{hete}(\theta; Y).
\end{align}
We are primarily interested in the score estimator $\hat{\mathbf{s}}_{hete}$.

\subsubsection{The \textbf{Homo}geneous Thurstone Model}
For comparison, we consider a misspecified HTM. This model incorrectly assumes that all users have the same accuracy parameter, $\gamma_u = \gamma_0$ for all $u$, where $\gamma_0$ is a fixed constant or a single parameter to be estimated. For simplicity and to highlight the effect of ignoring heterogeneity, we assume $\gamma_0$ is a known constant (e.g., $\gamma_0=1$).

 The probability of preference under this model is:
\begin{align}
    P(Y_{ul}=1 | \mathbf{s}, \gamma_0) = F(\gamma_0 a_{l,u}^\top \mathbf{s}) 
\end{align}
\begin{align}
     \mathcal{L}_{homo}(\mathbf{s}; Y, \gamma_0) = \frac{1}{N} \sum_{u=1}^m \sum_{l=1}^{k_u} \log P(Y_{ul} | \mathbf{s}, \gamma_0) 
\end{align}
The estimator obtained by maximizing $\hat{\mathbf{s}}_{homo} = \arg\max_{\mathbf{s}} \mathcal{L}_{homo}(\mathbf{s}; Y, \gamma_0)$, the misspecified likelihood, is a Quasi-Maximum Likelihood Estimator (QMLE)\citep{greene2003econometric}[Page 734].

% \subsubsection{Problem Statement}
The primary objective of Theorem~\ref{thm:cov-compare} is to demonstrate that, when the true data generating process is heterogeneous, the estimator $\hat{\mathbf s}_{hete}$ derived from the correctly specified HTM is asymptotically more efficient than the estimator $\hat{\mathbf{s}}_{homo}$ derived from the misspecified homogeneous model. This will be shown by comparing their asymptotic covariance matrices.

\subsection{Core Idea and Asymptotic Framework}
The efficiency comparison between $\hat{\mathbf{s}}_{hete}$ and $\hat{\mathbf{s}}_{homo}$ is grounded in the asymptotic theory of MLE and QMLE. The core idea is that an estimator derived from a correctly specified model should be asymptotically more efficient than an estimator derived from a misspecified model.

Under the stated assumptions, the MLE $\hat{\mathbf{s}}_{hete}$  from the correctly specified HTM is consistent for the true score vector $\mathbf{s}^*$, asymptotically normal, and asymptotically efficient. Its asymptotic covariance matrix, denoted $\Sigma_{\hat{\mathbf{s}}_{hete}}$, achieves the Cramér-Rao Lower Bound (CRLB) for estimating $\mathbf{s}^*$ under the HTM. This matrix is derived from the inverse (or pseudoinverse, to account for identifiability constraints) of the Fisher information matrix associated with the HTM, specifically from the Schur complement corresponding to the $\mathbf{s}$ parameters.

Conversely, the QMLE $\hat{\mathbf{s}}_{homo}$ from the misspecified homogeneous model will converge to a parameter vector $\mathbf{s}_*$ that minimizes the Kullback-Leibler divergence between the true data generating process and the misspecified model; generally, $\mathbf{s}_* \neq \mathbf{s}^*$ when true heterogeneity exists. The asymptotic covariance matrix of $\hat{\mathbf{s}}_{homo}$, denoted $\Sigma_{\hat{\mathbf{s}}_{homo}}$, is given by the ``sandwich" formula, which accounts for the misspecification. Due to the model misspecification, the standard information matrix equivalence does not hold for the homogeneous model, and its QMLE is typically not asymptotically efficient for $\mathbf{s}^*$. Moreover, the asymptotic covariance matrix of an estimator from a correctly specified model will be no larger than that of an estimator from a misspecified model in the Loewner sense. We aim to formalize this by showing that $\Sigma_{\hat{\mathbf{s}}_{homo}} - \Sigma_{\hat{\mathbf{s}}_{hete}}$ is a positive semi-definite matrix, and strictly positive definite in the identifiable subspace when true heterogeneity is present.

To formally compare the efficiency of the estimators $\hat{\mathbf{s}}_{hete}$ and $\hat{\mathbf{s}}_{homo}$, we first derive their respective asymptotic covariance matrices by deriving Lemma ~\ref{lem:hetero-cov} and Lemma ~\ref{lem:homo-cov}.

\subsection{Proof of Lemma ~\ref{lem:hetero-cov}}
\label{app:proof_lemma3.1}
The estimator $\hat{\mathbf{s}}_{hete}$ is derived from the HTM. The average negative log-likelihood function for the HTM, with parameters $\theta = (\mathbf{s}, \gamma)$, is given by
\begin{align}
    \mathcal{L}(\mathbf{s},\gamma) = -\frac{1}{N}\sum_{u=1}^{m} \sum_{l=1}^{k_u} \log P(Y_{ul} | \mathbf{s}, \gamma_u)
\end{align}
where $N = \sum_{u=1}^m k_u$ is the total number of observations, and $P(Y_{ul} | \mathbf{s}, \gamma_u) = F(\gamma_{u} a_{l,u}^{\top} \mathbf{s})$ if we denote $g(x; Y) = -\log P(Y|x)$ as the negative log-likelihood for a single observation. For simplicity, we adopt the notation from prior work where $g(x) = -\log F(x)$ is used when the outcome $Y_{ul}$ is $1$ or $F(x)$ represents the probability of a specific outcome that leads to the form in the loss function.

The score functions, which are the first-order partial derivatives of $\mathcal{L}(\mathbf{s},\gamma)$, are:
\begin{align}
    \nabla_{\mathbf{s}}\mathcal{L}(\mathbf{s},\gamma) &= \frac{1}{N}\sum_{u=1}^{m} \sum_{l=1}^{k_u} g'\left(\gamma_{u} a_{l,u}^{\top}\mathbf{s} \right)\gamma_{u}a_{l,u} \\
    \nabla_{\gamma}\mathcal{L}(\mathbf{s},\gamma) &= \frac{1}{N}\begin{bmatrix}
        \sum_{l=1}^{k_1}g '\left(\gamma_1 a_{l,1}^{\top} \mathbf{s} \right)a_{l,1}^{\top}\mathbf{s}\\
        \vdots\\
        \sum_{l=1}^{k_m}g'\left(\gamma_{m} a_{l,m}^{\top}\mathbf{s}\right)a_{l,m}^{\top}\mathbf{s}
        \end{bmatrix}
\end{align}

The Hessian matrix comprises the second-order partial derivatives:
\begin{align}
    \nabla_{\mathbf{s}}^2 \mathcal{L}(\mathbf{s},\gamma) &= \frac{1}{N}\sum_{u=1}^{m} \sum_{l=1}^{k_u} g''\left(\gamma_{u}a_{l,u}^{\top}\mathbf{s}\right)(\gamma_{u})^2a_{l,u}a_{l,u}^{\top} \\
    \nabla_{\gamma}^2 \mathcal{L}(\mathbf{s},\gamma) &= \frac{1}{N} \text{diag} \left( \left[ \sum_{l=1}^{k_u}g''\left(\gamma_{u} a_{l,u}^{\top}\mathbf{s}\right)(a_{l,u}^{\top}\mathbf{s})^2 \right]_{u=1}^m \right) \\
    (\nabla_{\mathbf{s}} \nabla_{\gamma} \mathcal{L}(\mathbf{s},\gamma))_{i,u} &= \frac{1}{N}\sum_{l=1}^{k_u}(a_{l,u})_i\left[g^{\prime\prime}\left(\gamma_ua_{l,u}^\top \mathbf{s}\right)(a_{l,u}^\top \mathbf{s})\gamma_u+g^{\prime}\left(\gamma_ua_{l,u}^\top \mathbf{s}\right)\right]
\end{align}
% where $(\nabla_{\mathbf{s}} \nabla_{\gamma} \mathcal{L}(\mathbf{s},\gamma))_{i,u}$ denotes the element in the $i$-th row and $u$-th column of the $n \times m$ matrix of mixed partial derivatives.

The Fisher information matrix for a single observation $I_{hete}(\theta_0)$, evaluated at the true parameters $\theta_0 = (\mathbf{s}^*, \gamma^*)$, is defined as the negative expectation of the Hessian of the log-likelihood for that observation. For the average log-likelihood $\mathcal{L}$, the corresponding expected Hessian (or information scaled by $1/N$) is $I(\mathbf{s}^*, \gamma^*) = -E_{Y|\mathbf{s}^*,\gamma^*}[\nabla^2 \mathcal{L}(\mathbf{s}^*,\gamma^*)]$. The blocks of this matrix are:
\begin{align}
    I_{ss} &= -E_{Y|\mathbf{s}^*,\gamma^*}[\nabla_{\mathbf{s}}^{2}\mathcal{L}(\mathbf{s}^*, \gamma^*)] \\
    (I_{\gamma\gamma})_{uu} &= -E_{Y|\mathbf{s}^*,\gamma^*}[(\nabla_{\gamma}^{2}\mathcal{L}(\mathbf{s}^*, \gamma^*))_{uu}] \\
    (I_{s\gamma})_{i,u} &= -E_{Y|\mathbf{s}^*,\gamma^*}[(\nabla_{\mathbf{s}}\nabla_{\gamma}\mathcal{L}(\mathbf{s}^*, \gamma^*))_{i,u}]
\end{align}
and $I_{\gamma s} = I_{s\gamma}^\top$.

The log-likelihood for a single comparison $Y_{ul}$ given $\mathbf{s}^*$ and $\gamma_u^*$ is:
\begin{align}
    \ell_{ul}(\mathbf{s}^*,\gamma_u^*) = Y_{ul} \log F(x^*_{ul}) + (1-Y_{ul}) \log(1-F(x^*_{ul})),
\end{align}
where $x^*_{ul} = \gamma_u^* a_{l,u}^\top \mathbf{s}^*$. Let $g(x; Y) = -[Y \log F(x) + (1-Y) \log(1-F(x))]$.
Under standard regularity conditions for MLE and assuming the model is correctly specified (Assumption A3):
\begin{align}
    E_{Y_{ul}|x^*_{ul}}[g'(x^*_{ul}; Y_{ul})] = 0,
\end{align} 
\begin{align}
    -E_{Y_{ul}|x^*_{ul}}[g''(x^*_{ul}; Y_{ul})] = W(x^*_{ul}), 
\end{align}
where $W(x) = \frac{(F'(x))^2}{F(x)(1-F(x))}$.

Substituting these expectations, the blocks of the Fisher information matrix (scaled by $1/N$) become:
\begin{align}
    I_{ss} &= \frac{1}{N}\sum_{u=1}^{m} \sum_{l=1}^{k_u} W(x^*_{ul}) (\gamma^*_u)^2 a_{l,u}a_{l,u}^{\top} \label{eq:Iss_final} \\
    (I_{\gamma\gamma})_{uu} &= \frac{1}{N}\sum_{l=1}^{k_u} W(x^*_{ul}) (a_{l,u}^{\top}\mathbf{s}^*)^2 \label{eq:Igg_final} \\
    (I_{s\gamma})_{i,u} &= \frac{1}{N}\sum_{l=1}^{k_u} W(x^*_{ul}) (a_{l,u}^{\top}\mathbf{s}^*) \gamma^*_u (a_{l,u})_i \label{eq:Isg_final}
\end{align}

The asymptotic covariance matrix of the MLE $\hat{\theta}_{hete} = (\hat{\mathbf{s}}_{hete}, \hat{\gamma}_{hete})$ is $N^{-1}[I_{hete}(\theta_0)]^{-1}$, where $I_{hete}(\theta_0)$ is the Fisher information for a single observation. The matrix $I(\mathbf{s}^*, \gamma^*)$ derived above corresponds to $N^{-1} I_{hete}(\theta_0)$ if we interpret the sums as averages per observation. We define:
\begin{align}
    S_{total} = N \cdot S = N \cdot (I_{ss} - I_{s\gamma}I_{\gamma\gamma}^{-1}I_{\gamma s}),
\end{align}
where $I_{ss}, I_{s\gamma}, I_{\gamma\gamma}$ are the blocks of $N \cdot I(\mathbf{s}^*, \gamma^*)$ (i.e., sums without the $1/N$ scaling).

The asymptotic covariance matrix for $\sqrt{N}(\hat{\mathbf{s}}_{hete} - \mathbf{s}^*)$ is then the top-left $(n \times n)$ block of $[I_{hete}(\theta_0)]^{-1}$. Using the formula for the inverse of a partitioned matrix, this block is:
\begin{align}
    (S_{total}/N)^{+} = S^{+}, 
\end{align}

where $S = I_{ss}' - I_{s\gamma}'(I_{\gamma\gamma}')^{-1}I_{\gamma s}'$, and $I_{ss}', I_{s\gamma}', I_{\gamma\gamma}'$ are the blocks of the Fisher information for a single observation (i.e., the expressions Eq.~\ref{eq:Iss_final}-Eq.~\ref{eq:Isg_final} without the $1/N$ factor and with sums replaced by expectations or a single representative term if all $k_u=k$ and $N=mk$).

Due to the identifiability constraint:
\begin{align}
    \mathbf{1}^\top \mathbf{s}^* = 0,
\end{align}
the score $\mathbf{s}$ is estimable only up to an additive constant. This implies that the Fisher information matrix $I_{hete}(\theta_0)$ is singular, and specifically, its block $S$ is singular with $S\mathbf{1} = \mathbf{0}$.
Let $S = U \Lambda U^\top$ be the spectral decomposition of $S$, where $U$ is an orthogonal matrix of eigenvectors and $\Lambda = \text{diag}(\lambda_1, ..., \lambda_{n-1}, 0)$ is the diagonal matrix of eigenvalues, with $\lambda_1, ..., \lambda_{n-1} > 0$. The Moore-Penrose pseudoinverse is:
\begin{align}
    S^{+} = U \Lambda^{+} U^\top = \sum_{i=1}^{n-1} \frac{1}{\lambda_i} u_i u_i^\top,
\end{align}
where $\Lambda^+ = \text{diag}(1/\lambda_1, 1/\lambda_2, ..., 1/\lambda_{n-1}, 0)$.
And this $S^{+}$ represents the inverse of $S$ restricted to the identifiable subspace:
\begin{align}
    V_{\perp\mathbf{1}} = \{ \mathbf{v} \in \mathbb{R}^n : \mathbf{1}^\top \mathbf{v} = 0 \}.
\end{align}
The asymptotic covariance matrix for $\sqrt{N}(\hat{\mathbf{s}}_{hete} - \mathbf{s}^*)$ is therefore $S^{+}$. This matrix is the Cramér-Rao Lower Bound for estimating $\mathbf{s}^*$ under the HTM, considering the estimation of $\gamma^*$ and the identifiability constraint on $\mathbf{s}^*$.
$\square$

\subsection{Proof of Lemma ~\ref{lem:homo-cov}}
\label{app:proof_lemma3.2}
The estimator $\hat{\mathbf{s}}_{homo}$ is obtained as the QMLE from the HTM, which assumes a common accuracy parameter $\gamma_0$ for all users. The average log-likelihood for this model is:
\begin{align}
    \mathcal{L}_{homo}(\mathbf{s}; Y, \gamma_0) = \frac{1}{N} \sum_{u=1}^m \sum_{l=1}^{k_u} \log P(Y_{ul} | \mathbf{s}, \gamma_0)
\end{align}
where $P(Y_{ul} | \mathbf{s}, \gamma_0) = F(\gamma_0 a_{l,u}^\top \mathbf{s})$.
Under Assumption (A2), the true data generating process is an HTM with heterogeneous accuracies $\gamma^* = (\gamma_1^*, ..., \gamma_m^*)^\top$, where not all $\gamma_u^*$ are equal to $\gamma_0$. Thus, the homogeneous model is misspecified.
According to \citep[Theorem 2.2]{white1982maximum}, the QMLE $\hat{\mathbf{s}}_{homo}$ converges in probability to a parameter vector $\mathbf{s}_*$, which is the unique minimizer of the Kullback-Leibler divergence between the true data generating process and the family of distributions defined by the misspecified homogeneous model. Equivalently, $\mathbf{s}_*$ maximizes the expected misspecified log-likelihood:
\begin{align}
    \mathbf{s}_* = \arg\max_{\mathbf{s}} E_{Y|\mathbf{s}^*,\gamma^*}[\log P(Y_{ul} | \mathbf{s}, \gamma_0)]
\end{align}
Due to the model misspecification , it is generally true that $\mathbf{s}_* \neq \mathbf{s}^*$.

Furthermore, according to \citep[Theorem 3.2]{white1982maximum}, the QMLE $\hat{\mathbf{s}}_{homo}$ is asymptotically normally distributed:
\begin{align}
    \sqrt{N}(\hat{\mathbf{s}}_{homo} - \mathbf{s}_*) \xrightarrow{d} N(\mathbf{0}, C_{homo}(\mathbf{s}_*))
\end{align}
where $C_{Homo}(\mathbf{s}_*)$ is the ``sandwich" covariance matrix given by
\begin{align}
    C_{homo}(\mathbf{s}_*) = [A_{homo}(\mathbf{s}_*)]^{-1} B_{homo}(\mathbf{s}_*) [A_{homo}(\mathbf{s}_*)]^{-1}
\end{align}
The matrices $A_{homo}(\mathbf{s}_*)$ and $B_{homo}(\mathbf{s}_*)$ are defined for a single observation under the true data generating process $(\mathbf{s}^*, \gamma^*)$, evaluated at $\mathbf{s}_*$:
\begin{align}
    A_{homo}(\mathbf{s}_*) &= E_{Y|\mathbf{s}^*,\gamma^*}[\nabla_{\mathbf{s}}^2 \log P(Y_{ul} | \mathbf{s}_*, \gamma_0)] \\
    B_{homo}(\mathbf{s}_*) &= E_{Y|\mathbf{s}^*,\gamma^*}[(\nabla_{\mathbf{s}} \log P(Y_{ul} | \mathbf{s}_*, \gamma_0))(\nabla_{\mathbf{s}} \log P(Y_{ul} | \mathbf{s}_*, \gamma_0))^\top]
\end{align}
The expectation $E_{Y|\mathbf{s}^*,\gamma^*}[\cdot]$ is taken with respect to the true probability distribution of $Y_{ul}$ determined by $\mathbf{s}^*$ and $\gamma^*$.
Due to the model misspecification, the information matrix equivalence \citep[Theorem 3.3]{white1982maximum} generally does not hold for the homogeneous model, meaning $A_{homo}(\mathbf{s}_*) \neq -B_{homo}(\mathbf{s}_*)$.

Moreover, to account for the identifiability constraint $\mathbf{1}^\top \mathbf{s} = 0$, the inverses in the sandwich formula are replaced by pseudoinverses. Thus, the asymptotic covariance matrix for $\sqrt{N}(\hat{\mathbf{s}}_{homo} - \mathbf{s}_*)$ is more precisely written as:
\begin{align}
    \text{AsyVar}(\sqrt{N}(\hat{\mathbf{s}}_{homo} - \mathbf{s}_*)) = [A_{homo}(\mathbf{s}_*)]^{+} B_{homo}(\mathbf{s}_*) [A_{homo}(\mathbf{s}_*)]^{+}
\end{align}

$\square$

\subsection{Proof of Theorem~\ref{thm:cov-compare}}
\label{app:proof_thm3.3}

We aim to compare the asymptotic efficiencies of $\hat{\mathbf{s}}_{hete}$ and $\hat{\mathbf{s}}_{homo}$ by examining their respective asymptotic covariance matrices.
From Lemma \ref{lem:hetero-cov}, the asymptotic covariance matrix for $\sqrt{N}(\hat{\mathbf{s}}_{hete} - \mathbf{s}^*)$ is $S^{+}$. This matrix represents the CRLB for estimating $\mathbf{s}^*$ under the correctly specified HTM, within the identifiable subspace defined by $\mathbf{1}^\top\mathbf{s}=0$. Thus,
\begin{align}
    \text{AsyVar}(\sqrt{N} \hat{\mathbf{s}}_{hete}) = S^{+} \label{eq:asyvar_hete}
\end{align}

From Lemma \ref{lem:homo-cov}, the asymptotic covariance matrix for $\sqrt{N}(\hat{\mathbf{s}}_{homo} - \mathbf{s}_*)$ is given by the sandwich formula:
\begin{align}
    \text{AsyVar}(\sqrt{N} \hat{\mathbf{s}}_{homo}) = [A_{homo}(\mathbf{s}_*)]^{+} B_{homo}(\mathbf{s}_*) [A_{homo}(\mathbf{s}_*)]^{+} \label{eq:asyvar_homo}
\end{align}
where $\mathbf{s}_*$ is the probability limit of $\hat{\mathbf{s}}_{homo}$ under the misspecified homogeneous model, and $A_{homo}(\mathbf{s}_*)$ and $B_{homo}(\mathbf{s}_*)$ are defined as in Lemma \ref{lem:homo-cov}.

To prove the first part of Theorem ~\ref{thm:cov-compare}, we compare the matrices from Eq.~\ref{eq:asyvar_homo} and Eq.~\ref{eq:asyvar_hete}.
According to the theory of estimation under misspecified models, the asymptotic covariance matrix of a QMLE from a misspecified model cannot be smaller (in the Loewner sense) than the CRLB achieved by an MLE from the correctly specified model, when both are estimating parameters related to the true data generating process. The matrix $S^{+}$ is the CRLB for $\mathbf{s}^*$ under the true HTM. Therefore, it must hold that:
\begin{align}
    [A_{homo}(\mathbf{s}_*)]^{+} B_{homo}(\mathbf{s}_*) [A_{homo}(\mathbf{s}_*)]^{+} \succeq S^{+} \label{eq:main_inequality}
\end{align}
This inequality establishes that:
\begin{align}
    N \cdot \Sigma_{\hat{\mathbf{s}}_{homo}} \succeq N \cdot \Sigma_{\hat{\mathbf{s}}_{hete}}.
\end{align}

For the second part of the theorem, concerning the strict inequality, Assumption (A2) states that true heterogeneity exists, meaning the true user accuracies $\gamma_u^*$ are not all equal to the fixed $\gamma_0$ used in the homogeneous model. This implies that the homogeneous model is genuinely misspecified.
Under such misspecification, the information matrix equivalence $A_{homo}(\mathbf{s}_*) = -B_{homo}(\mathbf{s}_*)$ does not hold. The QMLE $\hat{\mathbf{s}}_{homo}$ fails to incorporate user-specific accuracy information, leading to a loss of statistical efficiency compared to the MLE $\hat{\mathbf{s}}_{hete}$ from the correctly specified HTM. This efficiency loss manifests as a strict inequality in Eq.~\ref{eq:main_inequality} when considering the identifiable subspace $V_{\perp\mathbf{1}} = \{ \mathbf{v} \in \mathbb{R}^n : \mathbf{1}^\top \mathbf{v} = 0 \}$.
Specifically, for any non-zero vector $\mathbf{v} \in V_{\perp\mathbf{1}}$,
\begin{align}
    \mathbf{v}^\top \left( [A_{homo}(\mathbf{s}_*)]^{+} B_{homo}(\mathbf{s}_*) [A_{homo}(\mathbf{s}_*)]^{+} \right) \mathbf{v} > \mathbf{v}^\top S^{+} \mathbf{v}
\end{align}
This occurs because the misspecification prevents the sandwich covariance matrix from collapsing to the simpler inverse Fisher information form and results in a larger variance for any linear combination of parameters within the identifiable subspace, compared to the CRLB achieved by the correctly specified model.
Thus, $N \cdot \Sigma_{\hat{\mathbf{s}}_{homo}} - N \cdot \Sigma_{\hat{\mathbf{s}}_{hete}}$ is strictly positive definite on $V_{\perp\mathbf{1}}$.
$\square$

\section{Analysis of LSE under Model Misspecification and Non-zero Mean Noise}
\label{Appendix:th2}

\subsection{Preliminaries: Cones and Statistical Dimension}
\label{Appendix:th2_Preliminaries}
This section introduces fundamental concepts related to convex cones, tangent cones, and statistical dimension, which are essential for the subsequent analysis.

\begin{definition}[Convex Cone]
A set $K \subset \mathbb{R}^n$ is a \textbf{convex cone} if for any $\boldsymbol{v}_1, \boldsymbol{v}_2 \in K$ and any non-negative scalars $t_1, t_2 \ge 0$, the linear combination $t_1 \boldsymbol{v}_1 + t_2 \boldsymbol{v}_2$ is also in $K$. If, additionally, $K$ is a closed set, it is a closed convex cone.
\end{definition}

\begin{definition}[Tangent Cone]
Let $K \subset \mathbb{R}^n$ be a closed convex set and let $\boldsymbol{u} \in K$. The \textbf{tangent cone} to $K$ at $\boldsymbol{u}$, denoted by $\mathcal T_{K, \boldsymbol{u}}$, is defined as:
\begin{align}
     \mathcal T_{K, \boldsymbol{u}} = \overline{ \{ t (\boldsymbol{v} - \boldsymbol{u}) : t > 0, \boldsymbol{v} \in K \} }
\end{align}

where $\overline{\{\cdot\}}$ denotes the closure of the set.
Intuitively, the tangent cone $\mathcal {T}_{K, \boldsymbol{u}}$ comprises all feasible directions from $\boldsymbol{u}$ scaled by non-negative scalars, along which one can move infinitesimally while remaining within $K$.

If $K$ is a closed convex cone, the definition of the tangent cone at $\boldsymbol{u} \in K$ simplifies due to the cone property (i.e., if $\boldsymbol{x} \in K$, then $t\boldsymbol{x} \in K$ for $t \ge 0$). For a closed convex cone $K$ and $\boldsymbol{u} \in K$, the tangent cone is given by :
\begin{align}
     \mathcal T_{K, \boldsymbol{u}} = \overline{ K + \mathrm{span}(\{-\boldsymbol{u}\}) } = \overline{ \{ \boldsymbol{v} - t \boldsymbol{u} : \boldsymbol{v} \in K, t \ge 0 \} } 
\end{align}
where the sum is the Minkowski sum.
\end{definition}

\begin{definition}[Isotonic Cone]
The isotonic cone (or non-decreasing cone) in $\mathbb R^n$, denoted by $\mathcal M$, is defined as:
$$ \mathcal M := \{ \boldsymbol{x} = (x_1, \ldots, x_n)^T \in \mathbb R^n : x_1 \le x_2 \le \cdots \le x_n \} $$
The set $\mathcal M$ is a closed convex polyhedral cone. For any $\boldsymbol{u} \in \mathcal M$, the tangent cone $\mathcal T_{\mathcal M, \boldsymbol{u}}$ captures the local directional constraints at $\boldsymbol{u}$. If $u_i < u_{i+1}$ for some $i$, the tangent cone allows more freedom in the $i$-th and $(i+1)$-th coordinates compared to when $u_i = u_{i+1}$, where the constraint $x_i \le x_{i+1}$ becomes active for directions in the cone.
\end{definition}

\begin{definition}[Statistical Dimension]
Let $K \subset \mathbb R^n$ be a closed convex cone. Its statistical dimension, denoted by $\delta(K)$, is defined as:
\begin{align}
    \delta(K) := \mathbb{E} \left[ |\Pi_K(\mathbf{g})|_2^2 \right]
\end{align}
where $\mathbf{g} \sim \mathcal{N}(\mathbf{0}, I_{n})$ is a standard Gaussian vector in $\mathbb R^n$, and $ \Pi_K(\mathbf{g})$ is the Euclidean projection of $\boldsymbol g$ onto $K$.
Equivalently, the statistical dimension can also be expressed as:
\begin{align}
    \delta(K) = \mathbb{E} \left[ \mathbf{g}^T \Pi_K(\mathbf{g}) \right] = \mathbb{E} \left[ \left( \sup_{\boldsymbol{\theta} \in K : |\boldsymbol{\theta}|_2 \leq 1} \mathbf{g}^T \boldsymbol{\theta} \right)^2 \right]
\end{align}
The statistical dimension provides a measure of the ``complexity" of the cone from a statistical perspective, often related to the effective number of parameters in a model constrained to the cone.
\end{definition}

\subsection{Problem Setting and Relation to Prior Work}
Let $\mathbf{s} \in \mathbb{R}^n$ be an underlying ground truth score vector. The primary target of our estimation in the second stage is a related vector $\tilde{\mathbf{s}} \in \mathbb{R}^n$, which can be thought of as a ``subjective optimal point" or a noisy version of $\mathbf{s}$. Specifically, $\tilde{\mathbf{s}}$ is generated as:
\begin{align}
    \tilde{\mathbf{s}} = \mathbf{s} + \boldsymbol{\tilde{\epsilon}},
\end{align}
where $\boldsymbol{\tilde{\epsilon}}\sim \mathcal{N}(\mathbf{0}, \tilde{\sigma}^2 I_n)$. Thus, $\tilde{\mathbf{s}}$ is a random vector with $\mathbb{E}[\tilde{\mathbf{s}}] = \mathbf{s}$. We assume the true isotonic cone relevant to our target $\tilde{\mathbf{s}}$ is $\tilde{\mathcal{M}} := \mathcal{M}_{\pi(\tilde{\mathbf{s}})}$ (or $\mathcal{M}_{\pi(\mathbf{s})}$, if the true ordering is dictated by the mean $\mathbf{s}$; this distinction should be noted, but for now, we assume $\tilde{\mathbf{s}} \in \tilde{\mathcal{M}}$ in expectation or with high probability).

In the first stage of our overall procedure, a separate algorithm (e.g., \offer{Algorithm 1 from \citep{jin2020rank}}) processes auxiliary data to produce an estimate of latent scores, $\mathbf{s}^* \in \mathbb{R}^n$. The ranking induced by $\mathbf{s}^*$, denoted $\hat{\pi} := \pi(\mathbf{s}^*)$, defines a (random) isotonic cone $\hat{\mathcal{M}} := \mathcal{M}_{\hat{\pi}}$. This cone $\hat{\mathcal{M}}$ serves as the constraint set for the second stage.

The observations for the second stage, denoted by $\mathbf{s}_p \in \mathbb{R}^n$, are related to the ground truth $\mathbf{s}$ and a fixed, unknown systematic bias vector $\boldsymbol{\nu} \in \mathbb{R}^n$ as follows:
\begin{align}
    \mathbf{s}_p = \mathbf{s} + \boldsymbol{\nu} .
\end{align}
Using the relationship $\mathbf{s} = \tilde{\mathbf{s}} - \boldsymbol{\tilde{\epsilon}}$, we can express $\mathbf{s}_p$ in terms of our target $\tilde{\mathbf{s}}$:
$$ \mathbf{s}_p = (\tilde{\mathbf{s}} - \boldsymbol{\tilde{\epsilon}} ) + \boldsymbol{\nu} = \tilde{\mathbf{s}} + \boldsymbol{\nu} - \boldsymbol{\tilde{\epsilon}}$$
The final estimator for the target $\tilde{\mathbf{s}}$ is obtained by projecting $\mathbf{s}_p$ onto the estimated cone $\hat{\mathcal{M}}$:
$$ \hat{\mathbf{s}} := \Pi_{\hat{\mathcal{M}}}(\mathbf{s}_p) $$
Our objective is to analyze the risk of this estimator with respect to $\tilde{\mathbf{s}}$:
\begin{align}
    \mathbb{E}[\|\hat{\mathbf{s}} - \tilde{\mathbf{s}}\|_2^2].
\end{align}
The expectation is over all sources of randomness: $\boldsymbol{\tilde{\epsilon}} $ (which makes $\tilde{\mathbf{s}}$ random) and $\hat{\mathbf{s}}_{\text{alg1}}$ (which makes $\hat{\mathcal{M}}$ random).

The effective noise when estimating $\tilde{\mathbf{s}}$ from $\mathbf{s}_p$ is:
\begin{align}
    \boldsymbol{\xi} := \mathbf{s}_p - \tilde{\mathbf{s}} = \boldsymbol{\nu} - \boldsymbol{\tilde{\epsilon}} ,
\end{align}
in which $\boldsymbol{\xi} \sim \mathcal{N}(\boldsymbol{\nu}, \tilde{\sigma}^2 I_n)$, as $\mathbb{E}[\boldsymbol{\xi}] = \boldsymbol{\nu}$ and $\mathrm{Cov}(\boldsymbol{\xi}) = \mathrm{Cov}(-\boldsymbol{\tilde{\epsilon}} ) = \tilde{\sigma}^2 I_n$.

% \subsection{Oracle Inequality with Biased Effective Noise}
% \label{Appendix:th2-pre-proof}
\subsection{Proof of Theorem~\ref{thm:Robustness}}
\label{app:proof_thmRob}

% \begin{proof}
In stage-1, the target is $\tilde{\mathbf{s}}$, and our observation is $\mathbf{s}_p$. The effective noise is $\boldsymbol{\xi} = \mathbf{s}_p - \tilde{\mathbf{s}} \sim \mathcal{N}(\boldsymbol{\nu}, \tilde{\sigma}^2 I_n)$.

% \begin{lemma}[Oracle Inequality for LS Estimator with Biased Effective Noise] \citep[Proposition 2.1]{bellec2018sharp}
% \label{lem:adapted_bellec_prop2.1_final}
%  Let $\tilde{\mathbf{s}} \in \mathbb{R}^n$ be the target score vector (itself random, with $\mathbb{E}[\tilde{\mathbf{s}}] = \mathbf{s}$). Let $\hat{\mathcal{M}} \subset \mathbb{R}^n$ be a closed convex set (the estimated isotonic cone). The observations used for projection are $\mathbf{s}_p$, and the effective noise relating $\mathbf{s}_p$ to $\tilde{\mathbf{s}}$ is $\boldsymbol{\xi} = \mathbf{s}_p - \tilde{\mathbf{s}} \sim \mathcal{N}(\boldsymbol{\nu}, \tilde{\sigma}^2 I_n)$. The least squares estimator is $\hat{\mathbf{s}} = \Pi_{\hat{\mathcal{M}}}(\mathbf{s}_p)$.
% For any $\boldsymbol{u} \in \hat{\mathcal{M}}$, the conditional expected risk, given $\hat{\mathcal{M}}$ and $\tilde{\mathbf{s}}$, is bounded. We have:
% \begin{align}
%     \|\hat{\mathbf{s}} - \tilde{\mathbf{s}}\|_2^2 - \|\boldsymbol{u} - \tilde{\mathbf{s}}\|_2^2 \le \frac{1}{n} \left( \sup_{\boldsymbol{\theta} \in \mathcal{T}_{\hat{\mathcal{M}}, \boldsymbol{u}} : \|\boldsymbol{\theta}\|_2 \leq 1} \boldsymbol{\xi}^T \boldsymbol{\theta} \right)^2 
% \end{align}
% \end{lemma}

By Lemma ~\ref{lem:adapted_bellec_prop2.1_main}, we have:
\begin{align}
    \|\hat{\mathbf{s}} - \tilde{\mathbf{s}}\|_2^2 - \|\boldsymbol{u} - \tilde{\mathbf{s}}\|_2^2 \le \frac{1}{n} \left( \sup_{\boldsymbol{\theta} \in \mathcal{T}_{\hat{\mathcal{M}}, \boldsymbol{u}} : \|\boldsymbol{\theta}\|_2 \leq 1} \boldsymbol{\xi}^T \boldsymbol{\theta} \right)^2 
\end{align}

Taking expectation with respect to $\boldsymbol{\xi}$ (conditional on $\hat{\mathcal{M}}$ and $\tilde{\mathbf{s}}$), and choosing $\boldsymbol{u} = \Pi_{\hat{\mathcal{M}}}(\tilde{\mathbf{s}})$, we have:
\begin{align}
     \mathbb{E}_{\boldsymbol{\xi}} [\|\hat{\mathbf{s}} - \tilde{\mathbf{s}}\|_2^2 | \hat{\mathcal{M}}, \tilde{\mathbf{s}}] \le \|\Pi_{\hat{\mathcal{M}}}(\tilde{\mathbf{s}}) - \tilde{\mathbf{s}}\|_2^2 + \frac{1}{n} \mathbb{E}_{\boldsymbol{\xi}}\left[\left( \sup_{\boldsymbol{\theta} \in W'} \boldsymbol{\xi}^T \boldsymbol{\theta} \right)^2 \middle| \hat{\mathcal{M}}, \tilde{\mathbf{s}} \right] 
\end{align}
where $W' = \{ \boldsymbol{\theta} \in \mathcal{T}_{\hat{\mathcal{M}}, \Pi_{\hat{\mathcal{M}}}(\tilde{\mathbf{s}})} : \|\boldsymbol{\theta}\|_2 \leq 1 \}$.
Let $\boldsymbol{\epsilon} ' = \boldsymbol{\xi} - \boldsymbol{\nu} = - \boldsymbol{\tilde{\epsilon}}\sim \mathcal{N}(\mathbf{0}, \tilde{\sigma}^2 I_n)$.
The expectation term is bounded using $(a+b)^2 \le 2a^2 + 2b^2$:
\begin{align}
\mathbb{E}_{\boldsymbol{\xi}}\left[\left( \sup_{\boldsymbol{\theta} \in W'} ((\boldsymbol{\xi}-\boldsymbol{\nu})^T \boldsymbol{\theta} + \boldsymbol{\nu}^T \boldsymbol{\theta}) \right)^2 \middle| \hat{\mathcal{M}}, \tilde{\mathbf{s}} \right]
&\le 2 \mathbb{E}_{\boldsymbol{\epsilon} '}\left[\left( \sup_{\boldsymbol{\theta} \in W'} (\boldsymbol{\epsilon} ')^T \boldsymbol{\theta} \right)^2 \middle| \hat{\mathcal{M}}, \tilde{\mathbf{s}} \right] + 2 \left( \sup_{\boldsymbol{\theta} \in W'} \boldsymbol{\nu}^T \boldsymbol{\theta} \right)^2 \\
&= 2 \tilde{\sigma}^2 \delta(\mathcal{T}_{\hat{\mathcal{M}}, \Pi_{\hat{\mathcal{M}}}(\tilde{\mathbf{s}})}) + 2 \|\Pi_{\mathcal{T}_{\hat{\mathcal{M}}, \Pi_{\hat{\mathcal{M}}}(\tilde{\mathbf{s}})}}(\boldsymbol{\nu})\|_2^2
\end{align}
Thus, given $\hat{\mathcal{M}}$ and $\tilde{\mathbf{s}}$, the conditional risk is:
\begin{align}
    \mathbb{E}_{\boldsymbol{\xi}} [\|\hat{\mathbf{s}} - \tilde{\mathbf{s}}\|_2^2 | \hat{\mathcal{M}}, \tilde{\mathbf{s}}] \le \|\Pi_{\hat{\mathcal{M}}}(\tilde{\mathbf{s}}) - \tilde{\mathbf{s}}\|_2^2 + \frac{2\tilde{\sigma}^2}{n} \delta(\mathcal{T}_{\hat{\mathcal{M}}, \Pi_{\hat{\mathcal{M}}}(\tilde{\mathbf{s}})}) + \frac{2}{n} \|\Pi_{\mathcal{T}_{\hat{\mathcal{M}}, \Pi_{\hat{\mathcal{M}}}(\tilde{\mathbf{s}})}}(\boldsymbol{\nu})\|_2^2 
\end{align}
Taking the full expectation over $\hat{\mathcal{M}}$ and $\tilde{\mathbf{s}}$ :
\begin{align}
\mathbb{E}[\|\hat{\mathbf{s}} - \tilde{\mathbf{s}}\|_2^2] \le& \mathbb{E}_{\hat{\mathcal{M}}, \tilde{\mathbf{s}}} \left[ \|\Pi_{\hat{\mathcal{M}}}(\tilde{\mathbf{s}}) - \tilde{\mathbf{s}}\|_2^2 \right] \label{eq:term1_proj_error_final} \\
&+ \mathbb{E}_{\hat{\mathcal{M}}, \tilde{\mathbf{s}}} \left[ \frac{2\tilde{\sigma}^2}{n} \delta(\mathcal{T}_{\hat{\mathcal{M}}, \Pi_{\hat{\mathcal{M}}}(\tilde{\mathbf{s}})}) \right] \label{eq:term2_stat_dim_final} \\
&+ \mathbb{E}_{\hat{\mathcal{M}}, \tilde{\mathbf{s}}} \left[ \frac{2}{n} \|\Pi_{\mathcal{T}_{\hat{\mathcal{M}}, \Pi_{\hat{\mathcal{M}}}(\tilde{\mathbf{s}})}}(\boldsymbol{\nu})\|_2^2 \right] \label{eq:term3_bias_proj_final}
\end{align}
% This decomposition now reflects the risk of estimating the random target $\tilde{\mathbf{s}}$.
% \begin{enumerate}
%     \item Equation ~\ref{eq:term1_proj_error_final}: The error from projecting the target $\tilde{\mathbf{s}}$ onto the misspecified cone $\hat{\mathcal{M}}$. This now involves expectation over both $\hat{\mathcal{M}}$ and $\tilde{\mathbf{s}}$.
%     \item Equation ~\ref{eq:term2_stat_dim_final}: The statistical error from the zero-mean component $-\boldsymbol{\tilde{\epsilon}} $ of the effective noise.
%     \item Equation ~\ref{eq:term3_bias_proj_final}: The error from the fixed systematic bias $\boldsymbol{\nu}$ in the effective noise.
% \end{enumerate}
% Subsequent analysis would bound these terms. Term ~\ref{eq:term2_stat_dim_final} can be bounded by $2\tilde{\sigma}^2 \log(en)/n$. Term ~\ref{eq:term3_bias_proj_final} can be bounded by $2\|\boldsymbol{\nu}\|_2^2/n$. Term ~\ref{eq:term1_proj_error_final} is more complex due to the joint expectation.

We aim to bound the risk $\mathbb{E}[\|\hat{\mathbf{s}} - \tilde{\mathbf{s}}\|_2^2]$, where $\hat{\mathbf{s}} = \Pi_{\hat{\mathcal{M}}}(\mathbf{s}_p)$ is the estimator for the target $\tilde{\mathbf{s}}$. Recall the key relationships:
\begin{itemize}
    \item Target for estimation: $\tilde{\mathbf{s}} = \mathbf{s} + \boldsymbol{\tilde{\epsilon}} $, where $\boldsymbol{\tilde{\epsilon}}\sim \mathcal{N}(\mathbf{0}, \tilde{\sigma}^2 I_n)$.
    \item Observation: $\mathbf{s}_p = \mathbf{s} + \boldsymbol{\nu} = \tilde{\mathbf{s}} - \boldsymbol{\tilde{\epsilon}}+ \boldsymbol{\nu}$.
    \item Effective noise for estimating $\tilde{\mathbf{s}}$ from $\mathbf{s}_p$: $\boldsymbol{\xi} = \mathbf{s}_p - \tilde{\mathbf{s}} = \boldsymbol{\nu} - \boldsymbol{\tilde{\epsilon}}\sim \mathcal{N}(\boldsymbol{\nu}, \tilde{\sigma}^2 I_n)$.
    \item $\hat{\mathcal{M}} = \mathcal{M}_{\pi(\mathbf s^*)}$, where $\mathbf s^*$ is the output of a first-stage algorithm, estimating $\tilde {\mathbf s}$.
    \item We assume the expected ordering relevant for $\tilde{\mathbf{s}}$ is $\pi(\mathbb{E}[\tilde{\mathbf{s}}]) = \pi(\mathbf{s})$. Let $\tilde{\mathcal{M}} = \mathcal{M}_{\pi(\mathbf{\tilde s})}$. The error term arises if $\hat{\mathcal{M}} \ne \tilde{\mathcal{M}}$ and $\tilde{\mathbf{s}} \notin \hat{\mathcal{M}}$.
\end{itemize}
From Lemma~\ref{lem:adapted_bellec_prop2.1_main} (or Eq. ~\ref{eq:term1_proj_error_final}-~\ref{eq:term3_bias_proj_final} in previous discussions), the total risk is bounded by:
\begin{align}
\mathbb{E}[\|\hat{\mathbf{s}} - \tilde{\mathbf{s}}\|_2^2] \le& \underbrace{\mathbb{E}_{\hat{\mathcal{M}}, \tilde{\mathbf{s}}} \left[ \|\Pi_{\hat{\mathcal{M}}}(\tilde{\mathbf{s}}) - \tilde{\mathbf{s}}\|_2^2 \right]}_{\text{Term 1: Projection Error from Misspecified Cone}} \\
&+ \underbrace{\mathbb{E}_{\hat{\mathcal{M}}, \tilde{\mathbf{s}}} \left[ \frac{2\tilde{\sigma}^2}{n} \delta(\mathcal{T}_{\hat{\mathcal{M}}, \Pi_{\hat{\mathcal{M}}}(\tilde{\mathbf{s}})}) \right]}_{\text{Term 2: Statistical Error (Zero-Mean Noise Component)}} \\
&+ \underbrace{\mathbb{E}_{\hat{\mathcal{M}}, \tilde{\mathbf{s}}} \left[ \frac{2}{n} \|\Pi_{\mathcal{T}_{\hat{\mathcal{M}}, \Pi_{\hat{\mathcal{M}}}(\tilde{\mathbf{s}})}}(\boldsymbol{\nu})\|_2^2 \right]}_{\text{Term 3: Bias Error Component}}
\end{align}

We now bound each term.

\paragraph{Bounding Term 1: Projection Error from Misspecified Cone.}
\label{Appendix:th2-term1_bound}

Term 1 is $\mathbb{E}_{\hat{\mathcal{M}}, \tilde{\mathbf{s}}} \left[ \|\Pi_{\hat{\mathcal{M}}}(\tilde{\mathbf{s}}) - \tilde{\mathbf{s}}\|_2^2 \right]$. This term captures the error due to projecting the target $\tilde{\mathbf{s}}$ onto the cone $\hat{\mathcal{M}}$, which is estimated from $\mathbf s^*$ and may differ from the true cone associated with $\tilde{\mathbf{s}}$ (or its mean $\mathbf{s}$).

Let $\tilde \pi = \pi(\tilde {\mathbf s})$ be the true ranking corresponding to the target of the first-stage algorithm. Let $\mathcal{\tilde M} = \mathcal{M}_{\tilde \pi}$. The event of an incorrect cone $\hat{\mathcal{M}}$ (relative to the first stage's own target) is $A^c \equiv \{\hat{\pi} \ne \tilde \pi\}$, where $\hat{\pi} = \pi(\mathbf s^*)$.
\begin{align}
    \mathbb{E}_{\hat{\mathcal{M}}, \tilde{\mathbf{s}}} \left[ \|\Pi_{\hat{\mathcal{M}}}(\tilde{\mathbf{s}}) - \tilde{\mathbf{s}}\|_2^2 \right] = \mathbb{E}_{\tilde{\mathbf{s}}} \left[ \mathbb{E}_{\hat{\mathcal{M}} | \tilde{\mathbf{s}}} \left[ \|\Pi_{\hat{\mathcal{M}}}(\tilde{\mathbf{s}}) - \tilde{\mathbf{s}}\|_2^2 \right] \right] 
\end{align}
If $\tilde{\mathbf{s}} \in \hat{\mathcal{M}}$ (i.e., the random target happens to satisfy the random cone's constraints), then $\|\Pi_{\hat{\mathcal{M}}}(\tilde{\mathbf{s}}) - \tilde{\mathbf{s}}\|_2^2 = 0$. This occurs if $\pi(\mathbf s^*)$ is compatible with $\tilde{\mathbf{s}}$.
We follow the previous decomposition:
\begin{align}
    \mathbb{E}_{\hat{\mathcal{M}} | \tilde{\mathbf{s}}} \left[ \|\Pi_{\hat{\mathcal{M}}}(\tilde{\mathbf{s}}) - \tilde{\mathbf{s}}\|_2^2 \right] = \mathbb{E}_{\hat{\mathcal{M}} | \tilde{\mathbf{s}}} \left[ \|\Pi_{\hat{\mathcal{M}}}(\tilde{\mathbf{s}}) - \tilde{\mathbf{s}}\|_2^2 \middle| A^c \right] P(A^c) 
\end{align}
(assuming $\|\Pi_{\mathcal{\tilde M}}(\tilde{\mathbf{s}}) - \tilde{\mathbf{s}}\|_2^2$ is negligible or zero if $\pi(\mathbf{s}) = \tilde \pi$).
The term $\mathbb{E}_{\hat{\mathcal{M}} | \tilde{\mathbf{s}}} \left[ \|\Pi_{\hat{\mathcal{M}}}(\tilde{\mathbf{s}}) - \tilde{\mathbf{s}}\|_2^2 \middle| A^c \right]$ is bounded by 
\begin{align}
    \sup_{\hat{\mathcal{M}}} \|\Pi_{\hat{\mathcal{M}}}(\tilde{\mathbf{s}}) - \tilde{\mathbf{s}}\|_2^2 \le n \text{Var}(\tilde{\mathbf{s}}),
\end{align}
where $\text{Var}(\tilde{\mathbf{s}})$ is the sample variance of the components of the specific realization of $\tilde{\mathbf{s}}$.
Taking expectation over $\tilde{\mathbf{s}}$:
\begin{align}
     \mathbb{E}_{\hat{\mathcal{M}}, \tilde{\mathbf{s}}} \left[ \|\Pi_{\hat{\mathcal{M}}}(\tilde{\mathbf{s}}) - \tilde{\mathbf{s}}\|_2^2 \right] \le \mathbb{E}_{\tilde{\mathbf{s}}} [n \text{Var}(\tilde{\mathbf{s}})] P(A^c) 
\end{align}
The term $\mathbb{E}_{\tilde{\mathbf{s}}} [n \text{Var}(\tilde{\mathbf{s}})]$ is related to the expected dispersion of $\tilde{\mathbf{s}}$. Since $\tilde{\mathbf{s}} = \mathbf{s} + \boldsymbol{\tilde{\epsilon}} $, $\mathbb{E}[\tilde{\mathbf s}_i] = \mathbf s_i$.
\begin{align}
    \mathbb{E}_{\tilde{\mathbf{s}}} \left[ \sum_{i=1}^n (\tilde{\mathbf s}_i - \bar{\tilde{\mathbf s}})^2 \right] = \mathbb{E}_{\tilde{\mathbf{s}}} \left[ \sum_{i = 1}^n \tilde{\mathbf s}_i^2 - n \bar{\tilde{\mathbf s}}^2 \right]
\end{align}
This can be further analyzed, but for a simpler bound, we can use:
\begin{align}
    \mathbb{E}_{\tilde{\mathbf{s}}} [n \text{Var}(\tilde{\mathbf{s}})] \le \mathbb{E}_{\tilde{\mathbf{s}}} [\|\tilde{\mathbf{s}}\|_2^2] = \|\mathbf{s}\|_2^2 + n\tilde{\sigma}^2.
\end{align}
For a potentially looser but simpler worst-case, if the range of $\tilde{\mathbf s}_i$ is bounded, $n \text{Var}(\tilde{\mathbf{s}})$ can be bounded by a term related to the range, e.g., $n (R_{\tilde{\mathbf s}})^2/4$ if $\tilde{\mathbf s}_i \in [a,b]$ and $R_{\tilde{s}} = b-a$.
% Let's use the bound $M_{\tilde{\mathbf{s}}}^2 = \sup_{\tilde{\mathbf{s}}, \hat{\mathcal{M}}} \|\Pi_{\hat{\mathcal{M}}}(\tilde{\mathbf{s}}) - \tilde{\mathbf{s}}\|_2^2$.
A bound that depends on the properties of the ground truth $\mathbf{s}$ and noise $\tilde{\sigma}^2$ is needed here. For now, we denote it as $\mathbb{E}[n \text{Var}(\tilde{\mathbf{s}})]$.

The probability of a ranking error $P(A^c) = P(\pi(\mathbf s^*) \ne \pi(\tilde {\mathbf s}))$ is:
\begin{align}
    P(A^c) \le \sum_{j,k: \; \tilde{\mathbf s}_j < \tilde{\mathbf s}_k} P(\mathbf{s}^*_j > \mathbf{s}^*_k) 
\end{align}
Let $\eta = \mathbf s^* - \tilde {\mathbf s} \sim \mathcal{N}(0, \Sigma_{\mathbf s^*})$, where $\Sigma_{\mathbf s^*} = \frac{1}{mk} S^+$.
Let $\tilde{\Delta}_{kj} = \tilde{\mathbf s}_k - \tilde{\mathbf s}_j$,
and ${\sigma^*_{X_{jk}}}^2 = (\boldsymbol{e}_j - \boldsymbol{e}_k)^\top \Sigma_{\mathbf s^*} (\boldsymbol{e}_j - \boldsymbol{e}_k)$.
Using the Chernoff-Cramer bound:
\begin{align}
     P(\mathbf{s}^*_j > \mathbf{s}^*_k) \le \frac{\sigma^*_{X_{jk}}}{(\tilde{\Delta}_{kj})\sqrt{2\pi}} \exp\left( - \frac{(\tilde{\Delta}_{kj})^2}{2 {\sigma^*_{X_{jk}}}^2} \right) 
\end{align}
This completes the proof of Corollary ~\ref{cor:ranking_misspec_prob}. Thus,
\begin{align}
    \text{Term 1} \le \mathbb{E}_{\tilde{\mathbf{s}}} [n \text{Var}(\tilde{\mathbf{s}})] \sum_{j,k: \tilde{\mathbf s}_j < \tilde{\mathbf s}_k} \frac{\sigma^*_{X_{jk}}}{(\tilde{\Delta}_{kj})\sqrt{2\pi}} \exp\left( - \frac{(\tilde{\Delta}_{kj})^2}{2 {\sigma^*_{X_{jk}}}^2} \right) 
\end{align}
Alternatively, using the expected number of inversions $Inv(\pi^*, \tilde \pi)$ for the first stage:
\begin{align}
    \sum_{j,k: \tilde{\mathbf s}_j < \tilde{\mathbf s}_k} P(\mathbf{s}^*_j > \mathbf{s}^*_k) = \mathbb{E}[Inv(\pi^*, \tilde \pi)] 
\end{align}
Then,
\begin{align}
    \text{Term 1} \le \mathbb{E}_{\tilde{\mathbf{s}}} [n \text{Var}(\tilde{\mathbf{s}})] \cdot \mathbb{E}[Inv(\pi^*, \tilde \pi)] 
\end{align}

\paragraph{Bounding Term 2: Statistical Error.}
\label{subsec:term2_bound}

% Term 2 is $\mathbb{E}_{\hat{\mathcal{M}}, \tilde{\mathbf{s}}} \left[ \frac{2\tilde{\sigma}^2}{n} \delta(\mathcal{T}_{\hat{\mathcal{M}}, \Pi_{\hat{\mathcal{M}}}(\tilde{\mathbf{s}})}) \right]$.
% As established previously, for any isotonic cone $\hat{\mathcal{M}}$ and any point $\boldsymbol{u} \in \hat{\mathcal{M}}$, $\delta(\mathcal{T}_{\hat{\mathcal{M}}, \boldsymbol{u}}) \le \delta(\hat{\mathcal{M}}) = \delta(\mathcal{M}_n^\uparrow) \le \log(en)$.
% \begin{align}
%     \text{Term 2} \le \mathbb{E}_{\hat{\mathcal{M}}, \tilde{\mathbf{s}}} \left[ \frac{2\tilde{\sigma}^2}{n} \log(en) \right] = \frac{2\tilde{\sigma}^2 \log(en)}{n} 
% \end{align}

Term 2 from the main risk decomposition Eq. ~\ref{eq:term2_stat_dim_final} is given by:
\begin{align}
     \text{Term 2} = \mathbb{E}_{\hat{\mathcal{M}}, \tilde{\mathbf{s}}} \left[ \frac{2\tilde{\sigma}^2}{n} \delta(\mathcal{T}_{\hat{\mathcal{M}}, \Pi_{\hat{\mathcal{M}}}(\tilde{\mathbf{s}})}) \right] 
\end{align}
Here, $\hat{\mathcal{M}} = \mathcal{M}_{\pi(\mathbf s^*)}$ is the estimated isotonic cone, which is always a monotone cone . The point $\Pi_{\hat{\mathcal{M}}}(\tilde{\mathbf{s}})$ is an element of $\hat{\mathcal{M}}$.

A fundamental property of tangent cones is that for any closed convex set $K$ and any point $\boldsymbol{u} \in K$, the statistical dimension of the tangent cone $\mathcal{T}_{K, \boldsymbol{u}}$ is bounded by the statistical dimension of the set $K$ itself:
\begin{align}
    \delta(\mathcal{T}_{K, \boldsymbol{u}}) \le \delta(K) 
\end{align}
This result is standard in the analysis of such problems (see, e.g., discussions related to statistical dimension in \citep{amelunxen2014living}).
In our case, $K = \hat{\mathcal{M}}$. Since $\hat{\mathcal{M}}$ is always an isotonic cone, its statistical dimension is the same as that of the standard isotonic cone $\mathcal{M}_n^{\uparrow}$.
It is known (cf. \citep[Eq. (D.12)]{amelunxen2014living}) that the statistical dimension of the standard isotonic cone $\mathcal{M}_n^{\uparrow}$ is given by the $n$-th harmonic number $H_n$:
\begin{align}
     \delta(\mathcal{M}_n^{\uparrow}) = \sum_{k=1}^n \frac{1}{k} = H_n 
\end{align}
Therefore, for any realization of $\hat{\mathcal{M}}$ and $\tilde{\mathbf{s}}$:
\begin{align}
    \delta(\mathcal{T}_{\hat{\mathcal{M}}, \Pi_{\hat{\mathcal{M}}}(\tilde{\mathbf{s}})}) \le \delta(\hat{\mathcal{M}}) = \delta(\mathcal{M}_n^{\uparrow}) = H_n 
\end{align}
The harmonic number $H_n$ has well-known asymptotic expansions. For $n \ge 1$:
\begin{align}
     H_n = \ln(n) + \gamma_E + \frac{1}{2n} - \frac{1}{12n^2} + O\left(\frac{1}{n^4}\right) 
\end{align}
where $\gamma_E \approx 0.57721$ is the Euler-Mascheroni constant.
Thus, we can use the upper bound:
\begin{align}
    H_n \le \ln(n) + \gamma_E + \frac{1}{2n} 
\end{align}
For a simpler, more common bound in such analyses, $H_n \approx \ln(n) + \gamma_E$, or $H_n \le \ln(n) + 1$ for $n \ge 1$. The bound $\log(en) = \ln(n) + 1$ used in  \citep{bellec2018sharp} (Eq. 1.29) is a convenient and slightly looser upper bound for $H_n$.
Using $H_n \le \ln(n) + \gamma_E + O(1/n)$:
\begin{align}
     \text{Term 2} \le \mathbb{E}_{\hat{\mathcal{M}}, \tilde{\mathbf{s}}} \left[ \frac{2\tilde{\sigma}^2}{n} H_n \right] = \frac{2\tilde{\sigma}^2 H_n}{n} = \frac{2\tilde{\sigma}^2}{n} \left(\ln(n) + \gamma_E + O\left(\frac{1}{n}\right)\right) 
\end{align}
Alternatively, using the $\log(en)$ bound for consistency with some literature:
\begin{align}
    \text{Term 2} \le \frac{2\tilde{\sigma}^2 \log(en)}{n}
\end{align}
This provides a bound for the statistical error component that depends on the variance $\tilde{\sigma}^2$ of the zero-mean noise component $\boldsymbol{\tilde{\epsilon}} $ and logarithmically on $n$.

\paragraph{Bounding Term 3: Bias Error Component.}
\label{subsec:term3_bound}

% Term 3 is $\mathbb{E}_{\hat{\mathcal{M}}, \tilde{\mathbf{s}}} \left[ \frac{2}{n} \|\Pi_{\mathcal{T}_{\hat{\mathcal{M}}, \Pi_{\hat{\mathcal{M}}}(\tilde{\mathbf{s}})}}(\boldsymbol{\nu})\|_2^2 \right]$.
% Since projection does not increase the norm, $\|\Pi_{\mathcal{T}_{\hat{\mathcal{M}}, \Pi_{\hat{\mathcal{M}}}(\tilde{\mathbf{s}})}}(\boldsymbol{\nu})\|_2^2 \le \|\boldsymbol{\nu}\|_2^2$.
% \begin{align}
%     \text{Term 3} \le \mathbb{E}_{\hat{\mathcal{M}}, \tilde{\mathbf{s}}} \left[ \frac{2}{n} \|\boldsymbol{\nu}\|_2^2 \right] = \frac{2\|\boldsymbol{\nu}\|_2^2}{n} 
% \end{align}

Term 3 from the main risk decomposition Eq.~\ref{eq:term3_bias_proj_final} is given by:
\begin{align}
    \text{Term 3} = \mathbb{E}_{\hat{\mathcal{M}}, \tilde{\mathbf{s}}} \left[ \frac{2}{n} \|\Pi_{\mathcal{T}_{\hat{\mathcal{M}}, \Pi_{\hat{\mathcal{M}}}(\tilde{\mathbf{s}})}}(\boldsymbol{\nu})\|_2^2 \right] 
\end{align}
This term arises from the fixed systematic bias $\boldsymbol{\nu}$ in the effective noise $\boldsymbol{\xi} = \boldsymbol{\nu} - \boldsymbol{\tilde{\epsilon}} $. The expectation $\mathbb{E}_{\hat{\mathcal{M}}, \tilde{\mathbf{s}}}$ averages over the randomness of the estimated cone $\hat{\mathcal{M}}$ and the randomness of the target $\tilde{\mathbf{s}}$ , which influences the point $\Pi_{\hat{\mathcal{M}}}(\tilde{\mathbf{s}})$ at which the tangent cone is evaluated.

Let $\boldsymbol{u}_{\text{proj}} = \Pi_{\hat{\mathcal{M}}}(\tilde{\mathbf{s}})$. We need to bound $\|\Pi_{\mathcal{T}_{\hat{\mathcal{M}}, \boldsymbol{u}_{\text{proj}}}}(\boldsymbol{\nu})\|_2^2$.
A fundamental property of projection onto a closed convex set is that it does not increase the $L_2$ norm. Therefore,
\begin{align}
     \|\Pi_{\mathcal{T}_{\hat{\mathcal{M}}, \boldsymbol{u}_{\text{proj}}}}(\boldsymbol{\nu})\|_2^2 \le \|\boldsymbol{\nu}\|_2^2
\end{align}
This inequality holds for any realization of $\hat{\mathcal{M}}$ and $\tilde{\mathbf{s}}$.
Substituting this into the expression for Term 3:
\begin{align}
     \text{Term 3} \le \mathbb{E}_{\hat{\mathcal{M}}, \tilde{\mathbf{s}}} \left[ \frac{2}{n} \|\boldsymbol{\nu}\|_2^2 \right] 
\end{align}
As $\boldsymbol{\nu}$ is a fixed vector, $\|\boldsymbol{\nu}\|_2^2$ is a constant. Then the expectation operator does not change it:
\begin{align}
    \text{Term 3} \le \frac{2\|\boldsymbol{\nu}\|_2^2}{n}
\end{align}
This provides an upper bound for the bias error component, indicating that it scales with the squared norm of the fixed bias vector $\boldsymbol{\nu}$ and decreases with $1/n$. The factor of $2$ arises from the use of the inequality $(a+b)^2 \le 2a^2+2b^2$ when separating the zero-mean noise and bias components in the derivation of Lemma~\ref{lem:adapted_bellec_prop2.1_main}.

\paragraph{Combined Risk Bound.}
Combining the bounds for the three terms:
\begin{align*}
\mathbb{E}[\|\hat{\mathbf{s}} - \tilde{\mathbf{s}}\|_2^2] \le&  \mathbb{E}_{\tilde{\mathbf{s}}} [n \text{Var}(\tilde{\mathbf{s}})]\cdot \mathbb{E}[Inv(\pi^*, \tilde \pi)] \\
&+ \frac{2\tilde{\sigma}^2 \log(en)}{n} \\
&+ \frac{2\|\boldsymbol{\nu}\|_2^2}{n}
\end{align*}
where $\mathbb{E}[Inv(\pi^*, \tilde \pi)]$ is the expected number of inversions produced by the first-stage algorithm $\mathbf s^*$ with respect to its true target $\tilde {\mathbf s}$, and can be expressed as:
$$ \mathbb{E}[Inv(\pi^*, \tilde \pi)] = \sum_{j,k: \tilde{\mathbf s}_j < \tilde{\mathbf s}_k} P(\mathbf{s}^*_j > \mathbf{s}^*_k) $$
with each $P(\mathbf{s}^*_j > \mathbf{s}^*_k)$ bounded using the Chernoff-Cramer bound:
$$ P(\mathbf{s}^*_j > \mathbf{s}^*_k) \le \frac{\sqrt{(\boldsymbol{e}_j - \boldsymbol{e}_k)^\top \Sigma_{\mathbf s^*} (\boldsymbol{e}_j - \boldsymbol{e}_k)}}{(\tilde{\mathbf s}_k - \tilde{\mathbf s}_j)\sqrt{2\pi}} \exp\left( - \frac{(\tilde{\mathbf s}_k - \tilde{\mathbf s}_j)^2}{2 (\boldsymbol{e}_j - \boldsymbol{e}_k)^\top \Sigma_{\mathbf s^*} (\boldsymbol{e}_j - \boldsymbol{e}_k)} \right) $$
and $\Sigma_{\mathbf s^*}$ is the asymptotic covariance matrix of $\mathbf s^*$ in Lemma ~\ref{lem:hetero-cov}.
% \end{proof}

\textbf{Additional discussion.} The core distinctions remain similar to previous discussions, with the key aspect being that our target for estimation $\tilde{\mathbf{s}}$ is a noisy version of the underlying ground truth $\mathbf{s}$, our observations $\mathbf{s}_p$ are related to $\tilde{\mathbf{s}}$ via an effective noise term $\mathcal{N}(\boldsymbol{\nu}, \tilde{\sigma}^2 I_n)$, and the projection is onto a random cone $\hat{\mathcal{M}}$.
(1) Differs from traditional isotonic regression which assumes a fixed cone and zero-mean noise relative to its direct target.
(2) Differs from fixed model misspecification as our cone is random and our effective observation noise has a fixed bias $\boldsymbol{\nu}$ and variance $\tilde{\sigma}^2 I_n$ relative to the target $\tilde{\mathbf{s}}$.

\section{Probability of Estimation Improvement via Projection}
\label{sec:improvement_probability}

In this section, we establish a probabilistic bound demonstrating that, under certain conditions related to the accuracy of the estimated cone $\hat{\mathcal{M}}$, the projection-based estimator $\hat{\mathbf{s}}$ provides a strictly better estimate of the ground truth $\mathbf{s}$ than the direct observation $\mathbf{s}_p$.

\subsection{Proof of Corollary \ref{cor:ranking_misspec_prob}}

\begin{lemma}[Restatement of Corollary \ref{cor:ranking_misspec_prob}]
\label{lem:delta1_appendix}
Let $\mathbf s^*$ be the estimate from the first-stage algorithm for its true target $\tilde {\mathbf s}$. For any two items $j,k$ with $\tilde{\mathbf s}_j < \tilde{\mathbf s}_k$, let $\tilde{\Delta}_{kj}=\tilde{\mathbf s}_k - \tilde{\mathbf s}_j$ be the true score gap for the first-stage target, and let ${\sigma^*_{X_{jk}}}^2 = (\boldsymbol{e}_j - \boldsymbol{e}_k)^T \Sigma_{\mathbf s^*} (\boldsymbol{e}_j - \boldsymbol{e}_k)$ be the variance of the estimated score difference from the first stage, where $\Sigma_{\mathbf s^*}$ is the asymptotic covariance of $\mathbf s^* - \tilde {\mathbf s}$. Then the probability that the estimated cone $\hat{\mathcal{M}} = \mathcal{M}_{\pi(\mathbf s^*)}$ differs from the true cone for the first-stage target $\mathcal{\tilde M} = \mathcal{M}_{\pi(\tilde {\mathbf s})}$ is bounded by
\begin{align}
\delta_1 := P(\pi(\mathbf s^*) \neq \pi(\tilde {\mathbf s})) \;\le\; \sum_{j,k:\,\tilde{\mathbf s}_j < \tilde{\mathbf s}_k} \frac{\sigma^*_{X_{jk}}}{\tilde{\Delta}_{kj}\sqrt{2\pi}}\; \exp\!\left(-\,\frac{(\tilde{\Delta}_{kj})^2}{2\,{\sigma^*_{X_{jk}}}^2}\right)\,.
\end{align}
\end{lemma}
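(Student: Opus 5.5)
The plan is to reduce the event $\{\pi(\mathbf s^*)\neq\pi(\tilde{\mathbf s})\}$ to a finite union of pairwise inversion events, and then bound each one with a Gaussian (Mills-ratio) tail estimate. Throughout we condition on $\tilde{\mathbf s}$ and assume ties occur with probability zero, so $\pi(\tilde{\mathbf s})$ is well defined.

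\emph{Step 1 (union bound).} Two total orders coincide if and only if they agree on every pair. If $\pi(\mathbf s^*)\neq\pi(\tilde{\mathbf s})$, there must be some pair $(j,k)$ with $\tilde{\mathbf s}_j<\tilde{\mathbf s}_k$ but $\mathbf s^*_j\ge\mathbf s^*_k$. Hence
\begin{align}
\delta_1\le\sum_{j,k:\,\tilde{\mathbf s}_j<\tilde{\mathbf s}_k}P(\mathbf s^*_j-\mathbf s^*_k\ge 0).
\end{align}
The right-hand side equals the expected number of inversions $\mathbb{E}[\mathrm{Inv}(\hat\pi,\tilde\pi)]$, which ties this bound to Theorem~\ref{thm:Robustness}.

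\emph{Step 2 (Gaussian reduction).} Set $\eta=\mathbf s^*-\tilde{\mathbf s}$. Using Lemma~\ref{lem:hetero-cov} and the asymptotic normality of the HTM MLE, we treat $\eta\sim\mathcal N(0,\Sigma_{\mathbf s^*})$, where $\Sigma_{\mathbf s^*}=S^+/N$. Define $X_{jk}=(\boldsymbol e_j-\boldsymbol e_k)^\top\eta$. Then $X_{jk}\sim\mathcal N(0,{\sigma^*_{X_{jk}}}^2)$; this variance is positive because $\boldsymbol e_j-\boldsymbol e_k\perp\mathbf 1$ lies in the identifiable subspace where $S^+$ is positive definite. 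We can rewrite $\mathbf s^*_j-\mathbf s^*_k=-\tilde\Delta_{kj}+X_{jk}$, so each summand becomes $P(X_{jk}\ge\tilde\Delta_{kj})=\bar\Phi(\tilde\Delta_{kj}/\sigma^*_{X_{jk}})$.

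\emph{Step 3 (tail bound).} Apply the standard Mills-ratio inequality $\bar\Phi(t)\le \phi(t)/t=\frac{1}{t\sqrt{2\pi}}e^{-t^2/2}$ for $t>0$. One proves it by writing $\bar\Phi(t)=\int_t^\infty\phi(x)\,dx\le\int_t^\infty\frac{x}{t}\phi(x)\,dx$. Substituting $t=\tilde\Delta_{kj}/\sigma^*_{X_{jk}}$ gives exactly the summand in the claimed bound, and summing over pairs completes the argument.

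\emph{Main obstacle.} The delicate point is Step 2. Normality of $\eta$ holds only asymptotically, so the bound is an asymptotic, first-order approximation rather than a finite-sample inequality. Moreover, $\tilde{\mathbf s}$ is random while the Stage-1 estimator targets a fixed parameter. I would state the result conditionally on $\tilde{\mathbf s}$, identifying the Stage-1 target with $\tilde{\mathbf s}$ as the paper does. I would then invoke Lemma~\ref{lem:hetero-cov} for the limiting law, noting that a Berry--Esseen-type correction would be needed for a nonasymptotic version.
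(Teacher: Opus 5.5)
Your proposal is correct and follows the paper's proof essentially step for step: a union bound over pairs ordered by $\tilde{\mathbf s}$, the Gaussian assumption $\eta=\mathbf s^*-\tilde{\mathbf s}\sim\mathcal N(0,\Sigma_{\mathbf s^*})$ to reduce each inversion to $P(X_{jk}>\tilde\Delta_{kj})$, and the Mills-ratio bound $P(Z>t)\le e^{-t^2/2}/(t\sqrt{2\pi})$, summed over pairs. Your extra remarks (conditioning on $\tilde{\mathbf s}$, positivity of ${\sigma^*_{X_{jk}}}^2$ on the identifiable subspace, and the fact that normality of $\eta$ is only asymptotic) spell out what the paper simply assumes; the argument itself is the same.
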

\begin{proof}
The event $\pi(\mathbf s^*) \neq \pi(\tilde {\mathbf s})$ occurs if and only if there exists at least one pair of items $(j,k)$ such that their true order according to $\tilde {\mathbf s}$ is $\tilde{\mathbf s}_j < \tilde{\mathbf s}_k$, but their estimated order is $\mathbf{s}^*_j > \mathbf{s}^*_k$.
Using the union bound over all such pairs:
$$ P(\pi(\mathbf s^*) \neq \pi(\tilde {\mathbf s})) \le \sum_{j,k:\,\tilde{\mathbf s}_j < \tilde{\mathbf s}_k} P(\mathbf{s}^*_j > \mathbf{s}^*_k) $$
Let $\eta = \mathbf s^* - \tilde {\mathbf s}$. We assume $\eta \sim \mathcal{N}(\mathbf{0}, \Sigma_{\mathbf s^*})$.
The probability of a single incorrect pairwise ranking is $P(\mathbf{s}^*_j > \mathbf{s}^*_k) = P(\eta_{j} - \eta_{k} > \tilde{\mathbf s}_k - \tilde{\mathbf s}_j)$.
Let $X_{jk} = \eta_{j} - \eta_{k}$. Then $X_{jk} \sim \mathcal{N}(0, {\sigma^*_{X_{jk}}}^2)$.
The probability becomes $P(X_{jk} > \tilde{\Delta}_{kj})$.
Applying the Chernoff-Cramer bound (see Appendix~\ref{Appendix:th2-term1_bound} or a similar result), $P(Z > t) \le \frac{1}{t\sqrt{2\pi}}e^{-t^2/2}$ for $Z \sim \mathcal{N}(0,1)$ and $t>0$.
Let $t_{jk} = \tilde{\Delta}_{kj} / \sigma^*_{X_{jk}}$.
$$ P(X_{jk} > \tilde{\Delta}_{kj}) \le \frac{\sigma^*_{X_{jk}}}{\tilde{\Delta}_{kj}\sqrt{2\pi}}\; \exp\!\left(-\,\frac{(\tilde{\Delta}_{kj})^2}{2\,{\sigma^*_{X_{jk}}}^2}\right) $$
Summing over all relevant pairs yields the stated bound for $\delta_1$.
\end{proof}

\subsection{Proof of Lemma ~\ref{lem:delta2}}
\label{subsec:proof_lemma_delta2}

\begin{lemma}[Restatement of Lemma \ref{lem:delta2}]
\label{lem:delta2_appendix}
Let $\mathbf{s} \in \mathbb{R}^n$ be the ground truth score and $\tilde{\mathbf{s}} = \mathbf{s} + \boldsymbol{\tilde{\epsilon}}$, where $\boldsymbol{\tilde{\epsilon}} \sim \mathcal{N}(\mathbf{0}, \tilde{\sigma}^2 I_n)$. Let $B^c$ be the event $B^c := \{\pi(\tilde{\mathbf{s}}) \neq \pi(\mathbf{s})\}$. Then,
\begin{equation}
\delta_2 := P(B^{\mathsf{c}}) \;\le\; \sum_{j,k:\,s_k>s_j} \frac{\tilde{\sigma}}{(\Delta_{kj}(\mathbf{s}))\sqrt{\pi}} \exp\!\left(-\tfrac{(\Delta_{kj}(\mathbf{s}))^{2}}{4\tilde{\sigma}^{2}}\right)
\end{equation}
where $\Delta_{kj}(\mathbf{s}) = \mathbf s_k - \mathbf s_j$.
\end{lemma}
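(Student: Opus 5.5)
The plan mirrors the proof of Lemma~\ref{lem:delta1_appendix}: a union bound over pairs, a Gaussian reduction for each pair, and the Mills-ratio tail bound. The only new ingredient is that the pairwise noise now has variance $2\tilde\sigma^2$ instead of a covariance-dependent quantity.

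\textbf{Union bound.} The event $B^c=\{\pi(\tilde{\mathbf s})\neq\pi(\mathbf s)\}$ occurs only if some pair $(j,k)$ with $\mathbf s_j<\mathbf s_k$ is inverted, i.e.\ $\tilde{\mathbf s}_j>\tilde{\mathbf s}_k$. Ties $\tilde{\mathbf s}_j=\tilde{\mathbf s}_k$ have probability zero under the continuous Gaussian law, so they can be ignored. Summing over pairs gives
\begin{align}
P(B^c)\le\sum_{j,k:\,\mathbf s_k>\mathbf s_j}P(\tilde{\mathbf s}_j>\tilde{\mathbf s}_k).
\end{align}

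\textbf{Per-pair reduction.} Write $\tilde{\mathbf s}_j-\tilde{\mathbf s}_k=-\Delta_{kj}(\mathbf s)+(\tilde\epsilon_j-\tilde\epsilon_k)$. Because the entries of $\tilde{\boldsymbol\epsilon}$ are independent $\mathcal N(0,\tilde\sigma^2)$, the difference $X_{jk}:=\tilde\epsilon_j-\tilde\epsilon_k$ is $\mathcal N(0,2\tilde\sigma^2)$. This step uses the isotropic covariance $\tilde\sigma^2 I_n$; it is the only structural input and replaces the $\Sigma_{\mathbf s^*}$ of Corollary~\ref{cor:ranking_misspec_prob}. Therefore
\begin{align}
P(\tilde{\mathbf s}_j>\tilde{\mathbf s}_k)=P(X_{jk}>\Delta_{kj})=P\bigl(Z>\Delta_{kj}/(\sqrt2\,\tilde\sigma)\bigr),\qquad Z\sim\mathcal N(0,1).
\end{align}

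\textbf{Tail bound.} Apply the same Mills-ratio inequality used in Lemma~\ref{lem:delta1_appendix}, namely $P(Z>t)\le \frac{1}{t\sqrt{2\pi}}e^{-t^2/2}$ for $t>0$, with $t=\Delta_{kj}/(\sqrt2\tilde\sigma)>0$. Then
\begin{align}
\frac{\sqrt2\,\tilde\sigma}{\Delta_{kj}\sqrt{2\pi}}\exp\!\left(-\frac{\Delta_{kj}^2}{4\tilde\sigma^2}\right)=\frac{\tilde\sigma}{\Delta_{kj}\sqrt{\pi}}\exp\!\left(-\frac{\Delta_{kj}^2}{4\tilde\sigma^2}\right),
\end{align}
which is exactly the summand in \eqref{eq:delta2}. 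For completeness, the Mills bound follows from $\int_t^\infty e^{-x^2/2}dx\le\int_t^\infty \frac{x}{t}e^{-x^2/2}dx$. Summing over pairs finishes the proof.

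\textbf{Main obstacle.} The bookkeeping is routine, so the step most likely to go wrong is getting the variance right. One must use $2\tilde\sigma^2$ rather than $\tilde\sigma^2$; this is what produces the factor $4\tilde\sigma^2$ in the exponent and $\sqrt\pi$ in place of $\sqrt{2\pi}$ in the prefactor. The proof should also state explicitly that only pairs with $\mathbf s_k>\mathbf s_j$ strictly are summed, so that every $\Delta_{kj}$ is positive. If $\mathbf s$ has ties, $\pi(\mathbf s)$ is not unique, and the event $B^c$ should be read as a strict inversion among pairs with distinct true scores.
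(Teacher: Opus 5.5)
Your proof is correct and follows the paper's argument essentially line for line. Both use a union bound over pairs with $\mathbf s_j<\mathbf s_k$, reduce each pair to $\tilde\epsilon_j-\tilde\epsilon_k\sim\mathcal N(0,2\tilde\sigma^2)$, and apply the Mills-ratio tail bound with $t=\Delta_{kj}/(\sqrt2\,\tilde\sigma)$. Your extra remarks are sound refinements that the paper leaves implicit: ties in $\tilde{\mathbf s}$ have probability zero, and $\mathbf s$ needs distinct entries for the bound to hold as stated.
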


\begin{proof}
The event $B^c \equiv \{\pi(\tilde{\mathbf{s}}) \neq \pi(\mathbf{s})\}$ occurs if there exists at least one pair of items $(j,k)$ such that their true order according to $\mathbf{s}$ is (without loss of generality) $\mathbf s_j < \mathbf s_k$, but their order according to $\tilde{\mathbf{s}}$ is $\tilde{\mathbf s}_j > \tilde{\mathbf s}_k$.
Using the union bound over all such pairs $(j,k)$ where $\mathbf s_j < \mathbf s_k$:
$$ P(B^c) \le \sum_{j,k:\,\mathbf s_j < \mathbf s_k} P(\tilde{\mathbf s}_j > \tilde{\mathbf s}_k) $$
Consider a single pair $(j,k)$ such that $\mathbf s_j < \mathbf s_k$.
The event $\tilde{\mathbf s}_j > \tilde{\mathbf s}_k$ is equivalent to $(\mathbf s_j + \tilde{\epsilon}_{j}) > (\mathbf s_k + \tilde{\epsilon}_{k})$, which simplifies to:
$$ \tilde{\epsilon}_{j} - \tilde{\epsilon}_{k} > \mathbf s_k - \mathbf s_j $$
Let $Y_{jk} = \tilde{\epsilon}_{j} - \tilde{\epsilon}_{k}$. Since $\tilde{\epsilon}_{j}$ and $\tilde{\epsilon}_{k}$ are independent and identically distributed as $\mathcal{N}(0, \tilde{\sigma}^2)$, their difference $Y_{jk}$ follows a normal distribution:
$$ Y_{jk} \sim \mathcal{N}(0, \mathrm{Var}(\tilde{\epsilon}_{j}) + \mathrm{Var}(\tilde{\epsilon}_{k})) = \mathcal{N}(0, 2\tilde{\sigma}^2) $$
Let $\Delta_{kj}(\mathbf{s}) = \mathbf s_k - \mathbf s_j$. Since $\mathbf s_j < \mathbf s_k$, we have $\Delta_{kj}(\mathbf{s}) > 0$.
The probability of interest for a single pair is $P(Y_{jk} > \Delta_{kj}(\mathbf{s}))$.
Let $Z = Y_{jk} / \sqrt{2\tilde{\sigma}^2}$. Then $Z \sim \mathcal{N}(0,1)$.
$$ P(Y_{jk} > \Delta_{kj}(\mathbf{s})) = P\left(Z > \frac{\Delta_{kj}(\mathbf{s})}{\sqrt{2\tilde{\sigma}^2}}\right) $$
Applying the Chernoff-Cramer bound $P(Z > t) \le \frac{1}{t\sqrt{2\pi}}e^{-t^2/2}$ with $t = \frac{\Delta_{kj}(\mathbf{s})}{\sqrt{2\tilde{\sigma}^2}}$:
\begin{align}
P(Y_{jk} > \Delta_{kj}(\mathbf{s})) &\le \frac{\sqrt{2\tilde{\sigma}^2}}{\Delta_{kj}(\mathbf{s})\sqrt{2\pi}} \exp\left( - \frac{(\Delta_{kj}(\mathbf{s}))^2}{2 \cdot (2\tilde{\sigma}^2)} \right) \\
&= \frac{\tilde{\sigma}}{\Delta_{kj}(\mathbf{s})\sqrt{\pi}} \exp\left( - \frac{(\Delta_{kj}(\mathbf{s}))^2}{4\tilde{\sigma}^2} \right)
\end{align}
Summing this probability over all pairs $(j,k)$ such that $\mathbf s_j < \mathbf s_k$ (which is equivalent to summing over pairs $j,k$ such that $\mathbf s_k > \mathbf s_j$ by swapping indices if necessary, ensuring $\Delta_{kj}(\mathbf{s})$ remains positive) gives the stated bound for $\delta_2$.
\end{proof}

\subsection{Proof of Theorem ~\ref{thm:optimality}}
\label{sec:proof_thm_optimality}

\begin{theorem}[Restatement of Theorem~\ref{thm:optimality}]
\label{thm:optimality_appendix}
Let $\delta_1 = P(\pi(\mathbf s^*) \neq \pi(\tilde {\mathbf s}))$ be the probability of a ranking error in the first-stage estimation relative to its true target $\tilde {\mathbf s}$ (as defined in Corollary~\ref{cor:ranking_misspec_prob}).
Let $\delta_2 = P(\pi(\tilde{\mathbf{s}}) \neq \pi(\mathbf{s}))$ be the probability that the ranking of the subjective optimal point $\tilde{\mathbf{s}}$ differs from the ranking of the ground truth $\mathbf{s}$ (as defined in Lemma~\ref{lem:delta2_appendix}).
Assume that the true ranking for the first-stage target is consistent with the ground truth ranking, i.e., $\pi(\tilde {\mathbf s}) = \pi(\mathbf{s})$.
Then, with probability at least $1-\delta_1-\delta_2$, 
\begin{align}
\label{eq:optimality_ineq}
\|\hat{\mathbf{s}} - \mathbf{s}\|_2^{2} < \|\mathbf{s}_p - \mathbf{s}\|_2^{2},
\end{align}
provided that $\mathbf{s}_p \notin \mathcal{M}_{\pi(\mathbf{s})}$.
\end{theorem}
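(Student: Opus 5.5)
The plan is to work on the good event $A\cap B$, where $A=\{\pi(\mathbf s^*)=\pi(\tilde{\mathbf s})\}$ and $B=\{\pi(\tilde{\mathbf s})=\pi(\mathbf s)\}$. The argument has three steps.

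\textbf{Step 1: the good event is likely.} By Corollary~\ref{cor:ranking_misspec_prob} (restated as Lemma~\ref{lem:delta1_appendix}), $P(A^c)\le\delta_1$. By Lemma~\ref{lem:delta2_appendix}, $P(B^c)\le\delta_2$. A union bound then gives $P(A\cap B)\ge 1-\delta_1-\delta_2$. This step needs no independence between the two stages, so the independence remark made before the theorem is not required.

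\textbf{Step 2: on $A\cap B$, the truth lies in the cone.} On this event $\hat\pi=\pi(\mathbf s^*)=\pi(\tilde{\mathbf s})=\pi(\mathbf s)$, so $\hat{\mathcal M}=\mathcal M_{\pi(\mathbf s)}$. Since $\mathbf s$ is sorted by its own ranking, $\mathbf s\in\hat{\mathcal M}$. Ties in $\mathbf s$ are harmless: any ordering consistent with $\mathbf s$ yields a cone containing $\mathbf s$.

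\textbf{Step 3: projection strictly improves the error.} $\hat{\mathcal M}$ is a closed convex cone and $\hat{\mathbf s}=\Pi_{\hat{\mathcal M}}(\mathbf s_p)$. The variational characterization of Euclidean projection states that for every $\mathbf y\in\hat{\mathcal M}$,
\begin{align}
\langle \mathbf s_p-\hat{\mathbf s},\,\mathbf y-\hat{\mathbf s}\rangle\le 0 .
\end{align}
Take $\mathbf y=\mathbf s$ and expand $\|\mathbf s_p-\mathbf s\|_2^2=\|(\mathbf s_p-\hat{\mathbf s})+(\hat{\mathbf s}-\mathbf s)\|_2^2$. The cross term is $2\langle \mathbf s_p-\hat{\mathbf s},\hat{\mathbf s}-\mathbf s\rangle\ge 0$, which gives the Pythagorean inequality
\begin{align}
\|\mathbf s_p-\mathbf s\|_2^2\ \ge\ \|\mathbf s_p-\hat{\mathbf s}\|_2^2+\|\hat{\mathbf s}-\mathbf s\|_2^2 .
\end{align}
By hypothesis $\mathbf s_p\notin\mathcal M_{\pi(\mathbf s)}=\hat{\mathcal M}$. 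Hence $\hat{\mathbf s}\neq\mathbf s_p$, so $\|\mathbf s_p-\hat{\mathbf s}\|_2^2>0$, and the strict inequality follows.

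The ``approaches $1$'' clause in the main-text version follows from Proposition~\ref{lem:separation} for $\delta_1$. For $\delta_2$, each summand of its bound tends to $0$ as $\Delta_{\min}/\tilde\sigma\to\infty$.

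\textbf{Main obstacle: definitional consistency.} Nothing in the argument is technically hard; the risk lies in making the definitions line up. The theorem's extra hypothesis $\pi(\tilde{\mathbf s})=\pi(\mathbf s)$ makes $B$ hold deterministically, so $\delta_2$ only enters as slack; I would note this explicitly. The orderings $\pi(\cdot)$ must be taken consistently, using the same sorting convention as $\hat\pi$, so that equality of permutations really implies equality of cones. Finally, $\mathbf s_p\notin\hat{\mathcal M}$ must hold on the event, and this is guaranteed because the cone coincides with $\mathcal M_{\pi(\mathbf s)}$ there.
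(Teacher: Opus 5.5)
Your proposal is correct and follows essentially the same route as the paper. A union bound gives probability at least $1-\delta_1-\delta_2$ that $\hat{\mathcal M}=\mathcal M_{\pi(\mathbf s)}$, hence $\mathbf s\in\hat{\mathcal M}$; then the Pythagorean inequality for projection onto a closed convex set gives the result, with strictness from $\mathbf s_p\notin\hat{\mathcal M}$. Deriving that inequality from the variational characterization, and noting that the hypothesis $\pi(\tilde{\mathbf s})=\pi(\mathbf s)$ makes $\delta_2$ mere slack, are small clarifications of the paper's argument rather than departures from it.
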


\begin{proof}

Consider the following events:
\begin{itemize}
    \item Let $A$ be the event that the estimated cone $\hat{\mathcal{M}}$ correctly identifies the cone $\mathcal{M}_{\tilde{\mathbf{s}}}$ derived from the subjective optimal point $\tilde{\mathbf{s}}$, i.e., $A := \{\hat{\mathcal{M}} = \mathcal{M}_{\tilde{\mathbf{s}}}\}$.
    \item Let $B$ be the event that the cone $\mathcal{M}_{\tilde{\mathbf{s}}}$ correctly identifies the true cone $\mathcal{M}_{\mathbf{s}}$ derived from the ground truth $\mathbf{s}$, i.e., $B := \{\mathcal{M}_{\tilde{\mathbf{s}}} = \mathcal{M}_{\mathbf{s}}\}$.
\end{itemize}
The probability $P(A^c) = P(\hat{\mathcal{M}} \neq \mathcal{M}_{\tilde{\mathbf{s}}}) = P(\pi(\mathbf s^*) \neq \pi(\tilde{\mathbf{s}})) = 1 - \delta_1$ Thus, $P(A) = 1-\delta_1$.
And $P(B^c) = P(\mathcal{M}_{\pi(\tilde{\mathbf{s}})} \neq \mathcal{M}_{\pi(\mathbf{s})}) = P(\pi(\tilde{\mathbf{s}}) \neq \pi(\mathbf{s})) = \delta_2$, as defined in Lemma~\ref{lem:delta2_appendix}. Thus,$P(B) = 1-\delta_2.$
Now we let $C$ be the event that $\hat{\mathcal{M}} = \mathcal{M}_{\mathbf{s}}$. Event $C$ occurs with probability at least $1 - \delta_1 - \delta_2$.

When event $C$ occurs, since $\mathbf{s}$ defines the cone $\mathcal{M}_{\mathbf{s}}$, it follows that $\mathbf{s} \in \mathcal{M}_{\mathbf{s}}$.Thus, 
\begin{align}
    \mathbf{s} \in \hat{\mathcal{M}}.
\end{align}
By the fundamental property of projection onto a closed convex set $\hat{\mathcal{M}}$, for any vector $\boldsymbol{x} \in \hat{\mathcal{M}}$, we have:
\begin{align}
    \|\mathbf{s}_p - \boldsymbol{x}\|_2^2 \ge \|\mathbf{s}_p - \hat{\mathbf{s}}\|_2^2 + \|\boldsymbol{x} - \hat{\mathbf{s}}\|_2^2 
\end{align}
Then we can set $\boldsymbol{x} = \mathbf{s}$:
\begin{align}
     \|\mathbf{s}_p - \mathbf{s}\|_2^2 \ge \|\mathbf{s}_p - \hat{\mathbf{s}}\|_2^2 + \|\mathbf{s} - \hat{\mathbf{s}}\|_2^2
\end{align}
This inequality holds when event $C$ occurs.
For the strict inequality $\|\mathbf{s} - \hat{\mathbf{s}}\|_2^2 < \|\mathbf{s}_p - \mathbf{s}\|_2^2$ to hold, we require that the term $\|\mathbf{s}_p - \hat{\mathbf{s}}\|_2^2$ is strictly positive.
This occurs if and only if $\mathbf{s}_p \neq \hat{\mathbf{s}}$, which is true if $\mathbf{s}_p \notin \hat{\mathcal{M}}$.
Thus, if event $C$ occurs and it is also true that $\mathbf{s}_p \notin \mathcal{M}_{\mathbf{s}}$, then $\|\mathbf{s}_p - \hat{\mathbf{s}}\|_2^2 > 0$.
In this situation, we have:
\begin{align}
     \|\mathbf{s} - \hat{\mathbf{s}}\|_2^2 < \|\mathbf{s}_p - \mathbf{s}\|_2^2
\end{align}
This strict inequality holds when event $C$ occurs and the condition $\mathbf{s}_p \notin \hat{\mathcal{M}}$ is met. The condition $\mathbf{s}_p \notin \mathcal{M}_{\mathbf{s}}$ means that the biased observation $\mathbf{s} + \boldsymbol{\nu}$ violates the true ordering of $\mathbf{s}$, which is generally true if $\boldsymbol{\nu}$ is non-zero and not perfectly aligned with the cone $\mathcal{M}_{\mathbf{s}}$ in a way that preserves membership.
\end{proof}

\begin{remark}[Conditions for Small Error Probabilities $\delta_1$ and $\delta_2$]
The overall risk bound depends critically on the magnitudes of $\delta_1$ and $\delta_2$. These probabilities become small under favorable conditions:
\begin{itemize}
    \item \textbf{For $\delta_1$ (First-stage ranking accuracy):}
    The probability $P(\mathbf{s}^*_j > \mathbf{s}^*_k)$ for $\tilde{\mathbf s}_j < \tilde{\mathbf s}_k$ is small if the ratio $\frac{(\tilde{\Delta}_{kj})^2}{{\sigma^*_{X_{jk}}}^2}$ is large. This occurs when:
        \begin{enumerate}
            \item The true gaps $\tilde{\Delta}_{kj} = \tilde{\mathbf s}_k - \tilde{\mathbf s}_j$ between scores (for the first-stage target $\tilde {\mathbf s}$) are large, making items more distinguishable.
            \item The variance of the score difference estimates ${\sigma^*_{X_{jk}}}^2 = (\boldsymbol{e}_j - \boldsymbol{e}_k)^T \Sigma_{\mathbf s^*} (\boldsymbol{e}_j - \boldsymbol{e}_k)$ is small. This implies that the first-stage algorithm provides precise estimates of score differences. The matrix $\Sigma_{\mathbf s^*} = \frac{1}{mk} S^+$ indicates that precision increases with more users ($m$) or more comparisons per user ($k$).
        \end{enumerate}
    Thus, $\delta_1$ is small if the first-stage estimation is based on sufficient data and the items it aims to rank are well-separated.

    \item \textbf{For $\delta_2$ (Ranking consistency between $\tilde{\mathbf{s}}$ and its mean $\mathbf{s}$):}
    The probability $P(\tilde{\mathbf s}_j > \tilde{\mathbf s}_k)$ for $\mathbf s_j < \mathbf s_k$ is small if the ratio $\frac{(\Delta_{kj}(\mathbf{s}))^2}{4\tilde{\sigma}^2}$ is large. This occurs when:
        \begin{enumerate}
            \item The true gaps $\Delta_{kj}(\mathbf{s}) = \mathbf s_k - \mathbf s_j$ in the ground truth scores $\mathbf{s}$ are large.
            \item The variance $\tilde{\sigma}^2$ of the subjective noise $\boldsymbol{\tilde{\epsilon}}$ (which makes $\tilde{\mathbf{s}}$ deviate from $\mathbf{s}$) is small.
        \end{enumerate}
    Thus, $\delta_2$ is small if the ground truth scores $\mathbf{s}$ are well-separated and the process generating the subjective optimal point $\tilde{\mathbf{s}}$ from $\mathbf{s}$ has low noise.
\end{itemize}
\end{remark}

\section{Additional Information of Experiments}\label{sec:appendix-exp}

\subsection{Datasets and Settings}

\noindent \textbf{Reading Level:} This dataset contains pairwise comparisons of text documents based on reading difficulty. The dataset includes 490 documents with known reading levels, serving as ground truth difficulty scores. A total of 624 annotators provided 12728 pairwise judgments, with each comparison indicating which document is easier to read. The dataset structure includes judge information, judgment outcomes (where ``A" indicates document A is easier than document B), document identification numbers, and the corresponding reading levels for each document pair.
We construct a semi-synthetic setting by generating pairwise comparisons from an underlying Bradley-Terry/Thurstone model using the true item scores, while adding annotator-specific noise to simulate heterogeneity. This provides us with realistic but controlled pairwise preference data. We then obtain an ``oracle” objective score $\mathbf s_p$ for each item by perturbing its true score with additional noise or bias, mimicking the output of an imperfect predictive model. This noisy $\mathbf s_p$ serves as the initial model assessment for Stage~2 of \AtC. The availability of ground truth in these tasks allows us to quantitatively evaluate $\mathbf s^*$, $\mathbf s_p$, and $\hat{\mathbf s}$ under various metrics.

\noindent \offer{\textbf{Dots-activity:} This dataset~\citep{kemmer2020enhancing} contains judgments from 300 participants on 30 distinct images, yielding 8700 pairwise comparisons for estimating dot counts. We transformed this into a human-centered assessment scenario as follows: humans observe \textbf{original complete images} to provide comparative judgments, while the objective model (OpenCV's contour detection) processes \textbf{corrupted images} with varying degradation levels (global blur, localized blur/noise/whiteout). The task is to estimate dot counts in the \textbf{original images}, so ground truth remains unchanged regardless of corruption applied to model inputs. This setup simulates realistic scenarios where human judgment helps calibrate models that receive degraded inputs (e.g., low-quality sensors, occluded views), while humans access complete information through experience or alternative channels. To our knowledge, this represents the first open-source study using real human data in this field; given the scarcity of suitable datasets, this transformation provides our best validation approach on authentic data.}

\subsection{Rank Aggregation Baselines}

We compare several methods for aggregating the crowdsourced pairwise judgments (Stage-1), including both homogeneous models and heterogeneity-aware models:

\begin{itemize}[leftmargin=*]
\item \textbf{BTL (Bradley-Terry-Luce):} A classic model for pairwise comparisons that assumes each item $i$ has a latent score $\mathbf s_i$ such that the probability of $i$ beating $j$ is $\Pr(i \succ j) = \frac{\exp(\mathbf s_i)}{\exp(\mathbf s_i)+\exp(\mathbf s_j)}$. We fit the scores $s$ by maximum likelihood, given all pairwise preferences. BTL assumes all annotators are identical and noise is homogeneous (every comparison is an independent sample from the same underlying distribution).

\item \textbf{Thurstone Case V (TCV):} An alternative pairwise comparison model proposed by Thurstone, which assumes differences in item scores are normally distributed. In Case V, $\Pr(i \succ j) = \Phi(\mathbf s_i - \mathbf s_j)$ where $\Phi$ is the standard normal CDF. Like BTL, the homogeneous TCV model assumes a single common noise level for all annotators. We fit item scores by MLE under this probit-based model. BTL and TCV typically produce similar rankings; TCV can be slightly more robust to outliers in some cases (due to the normal vs. logistic noise assumption).

\item \textbf{CrowdBT:} A variant of the Bradley-Terry model that incorporates annotator-specific reliability parameters. In CrowdBT~\citep{10.1145/2433396.2433420}, each annotator $u$ has an associated consistency weight or bias parameter that influences the comparisons they provide. Intuitively, this model down-weights votes from inconsistent annotators and up-weights those from reliable annotators, yielding a better aggregate score $\mathbf s^*$. We implement CrowdBT following the approach of~\citep{10.1145/2433396.2433420}, including a ``virtual annotator"regularization strategy to stabilize the reliability estimates.

\item \textbf{CrowdTCV:} A heterogeneity-aware extension of Thurstone Case V that incorporates annotator-specific precision parameters. Each annotator $u$ is assumed to have their own precision $\gamma_u$ in the Thurstone model, such that $\Pr(u: i \succ j) = \Phi(\gamma_u (\mathbf s_i - \mathbf s_j))$. \offer{This model was introduced by ~\citep{jin2020rank}.} We refer to our implementation simply as CrowdTCV. Like CrowdBT, it learns which annotators are more consistent (higher $\gamma_u$) and which are noisier (lower $\gamma_u$), improving the quality of the aggregated $\mathbf s^*$.

\item \textbf{Heterogeneous Rank Aggregation (HRA) variants:} We adopt the heterogeneous rank aggregation framework of~\citet{jin2020rank}, which explicitly models annotator-specific noise in the Bradley-Terry and Thurstone settings. Specifically, in the heterogeneous Bradley-Terry-Luce (HBTL) variant, $\Pr(u: i \succ j) = \sigma(\gamma_u (\mathbf s_i - \mathbf s_j))$ where $\sigma$ is the logistic CDF, whereas in the heterogeneous Thurstone Case V (HTCV) variant, the logistic is replaced by the normal CDF $\Phi$. Both item scores $\mathbf s$ and annotator precisions $\{\gamma_u\}$ are jointly estimated from data via iterative optimization. These methods achieve strong aggregation performance by accounting for individual differences: an annotator with a very small $\gamma_u$ (unreliable) contributes nearly random comparisons, which the model down-weights, whereas an annotator with large $\gamma_u$ is given more influence. We use the term ``HRA'' to refer to this class of heterogeneity-aware aggregators, with HBTL and HTCV as representative implementations (corresponding to HRA-E and HRA-N in Table~\ref{tab:semi-synthetic}).
\end{itemize}

Each aggregation method produces an output consensus score vector $\mathbf s^{*}$ (determined up to an arbitrary scale, which we normalize for evaluation). The $\mathbf s^{*}$'s rank $\hat{\pi}$ serve as inputs to Stage-2 of \AtC.

\subsection{Evaluation Metrics}
\label{app:metrics}
We evaluate the quality of the predicted scores and rankings using the following metrics. Let $\hat{\mathbf s} = (\hat{\mathbf s}_1,\dots,\hat{\mathbf s}_n)$ be the scores produced by a given method and $\mathbf s = (\mathbf s_1,\dots,\mathbf s_n)$ be the ground-truth scores (when available). For distribution-based metrics, let $p$ denote the true distribution of scores (or true outcome values) and $q$ denote the distribution derived from $\hat{\mathbf s}$.

\begin{itemize}[leftmargin=*]
\item \textbf{Kendall’s Tau ($\tau$):} A rank correlation coefficient between the ordering induced by $\hat{\mathbf s}$ and the true ordering. It is defined as $\displaystyle \tau = \frac{C - D}{\binom{n}{2}},$ where $C$ is the number of concordant item pairs and $D$ is the number of discordant pairs when comparing the rankings of $\hat{\mathbf s}$ and $\mathbf s$. We have $\tau=1$ if the two rankings are identical, $\tau=0$ if the rankings are uncorrelated, and $\tau=-1$ if one ranking is the exact reverse of the other. Higher $\tau$ indicates better agreement with the ground-truth ordering.

\item \textbf{MSE:} The root-mean-squared error of the scores. We compute $\displaystyle \text{MSE} = \sqrt{\frac{1}{n}\sum_{i=1}^n (\hat{\mathbf s}_i - \mathbf s_i)^2}\,. $ This is the square root of the mean squared error (MSE), expressed in the same units as the scores. A lower L2 error indicates that the predicted scores $\hat{\mathbf s}$ are numerically closer to the true scores $\mathbf s$.

\item \textbf{Wasserstein Distance:} Also known as the Earth Mover’s Distance, this measures the distance between two distributions. Treating the set of predicted scores $\{\hat{\mathbf s}_i\}$ and true scores $\{\mathbf s_i\}$ as empirical distributions, the Wasserstein-1 distance is defined as $\displaystyle W_1(\hat{\mathbf s},\,\mathbf s) = \int_{-\infty}^{\infty} \big| F_{\hat{\mathbf s}}(x) - F_\mathbf s(x) \big|\,dx,$ where $F_{\hat{\mathbf s}}$ and $F_\mathbf s$ are the cumulative distribution functions (CDFs) of the predicted and true score distributions. In practice, we compute $W_1$ by sorting the scores and finding the area between the two empirical CDF curves. Smaller $W_1$ implies that the distribution of predicted scores is closer to that of the true scores.

\item \textbf{Kolmogorov–Smirnov (KS) Statistic:} Another measure of distributional discrepancy between $\hat{\mathbf s}$ and $\mathbf s$. The KS statistic is $\displaystyle D_{\text{KS}} = \sup_x \big| F_{\hat{\mathbf s}}(x) - F_\mathbf s(x) \big|\,,$ the maximum absolute difference between the CDF of $\hat{\mathbf s}$ and the CDF of $\mathbf s$. This represents the largest gap between the two cumulative distributions. Lower KS values (closer to 0) indicate a better alignment of the predicted score distribution with the true distribution.

\item \textbf{Kullback–Leibler (KL) Divergence:} For tasks where we compare probability distributions, we use KL divergence to measure how one distribution diverges from another. Let $p = \{p_i\}$ be the true distribution and $q = \{q_i\}$ be the predicted distribution derived from $\hat{\mathbf s}$ (e.g. by normalizing or exponentiating the $\hat{\mathbf s}$ values). The KL divergence of $q$ from $p$ is defined as $\displaystyle D_{\text{KL}}(p \parallel q) = \sum_{i=1}^n p_i \log \frac{p_i}{q_i}\,. $ We treat lower KL as better (with $D_{\text{KL}}=0$ indicating $q$ exactly equals $p$). Note that KL is an asymmetric measure and is undefined if there exists some $i$ with $p_i>0$ but $q_i=0$; in our implementation we add a small $\epsilon$ to predicted probabilities to avoid zeros.

\end{itemize}
\offer{\paragraph{Remark.} An important phenomenon in \AtC\ is that $\hat{\mathbf{s}}$ can achieve higher Kendall's $\tau$ than $\mathbf{s}^{*}$ despite both inducing the same ranking $\hat{\pi}$. The key is that $\hat{\mathbf{s}}$ often contains \textbf{ties} introduced by the Pool-Adjacent-Violators (PAVA) algorithm, which are absent in $\mathbf{s}^{*}$. When $\mathbf{s}_p$ violates the consensus ranking $\hat{\pi}$, PAVA resolves conflicts by averaging violating scores, creating ties. This improves $\tau$ by converting discordant pairs into tied pairs, which Kendall's $\tau$ penalizes less severely than discordant pairs~\citep{raties}.}

\offer{Consider an illustrative example where ground truth is $\mathbf{s} = [10, 20, 30, 40]$ with order A $<$ B $<$ C $<$ D. Stage-1 produces $\mathbf{s}^{*} = [1, 2, 5, 4]$, inducing the ranking A $<$ B $<$ D $<$ C where items C and D are inverted. The model generates $\mathbf{s}_p = [12, 22, 29, 31]$ with the correct order. PAVA detects the violation at positions C and D, then averages them to produce $\hat{\mathbf{s}} = [12, 22, 30, 30]$. Computing Kendall's $\tau$ against ground truth yields $\tau(\mathbf{s}^{*}, \mathbf{s}) = (5-1)/6 \approx 0.67$ but $\tau(\hat{\mathbf{s}}, \mathbf{s}) = (5-0)/6 \approx 0.83$. The tie at positions C and D eliminates the discordant pair, improving correlation despite identical ordinal constraints from Stage-1.}

\subsection{Estimated Score Distributions on Reading Level Dataset}
\label{sec:score_visualizations}

This section provides visualizations of the estimated item scores obtained by different methods on the Reading Level dataset. We present violin plots to illustrate the distribution of scores for each item.

\begin{figure}[H]
    \centering

    \subfigure[BTL-MLE]{
        \includegraphics[width=0.3\textwidth]{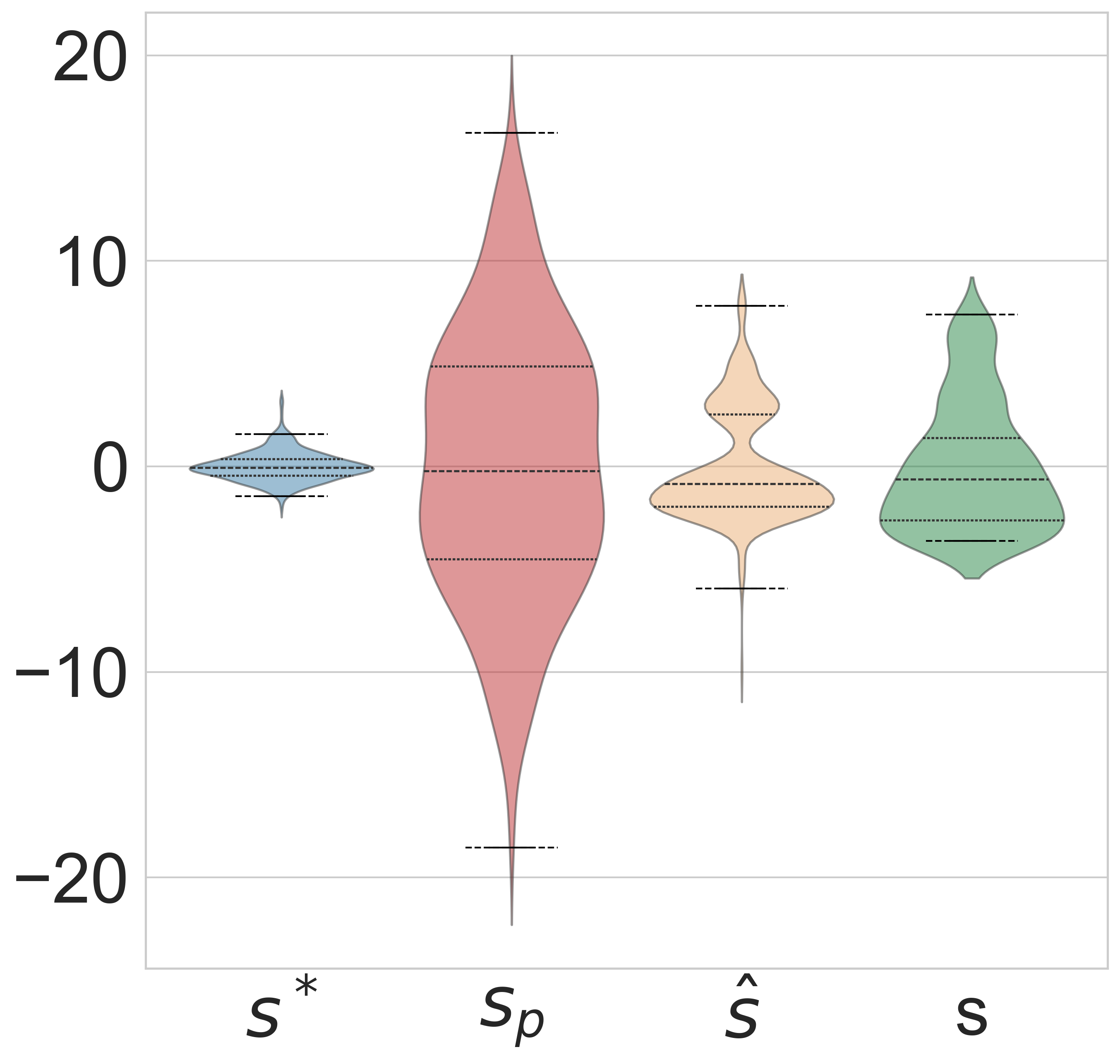}
        \label{fig:btl_mle_scores_single}
    }
    \subfigure[CrowdBT]{
        \includegraphics[width=0.3\textwidth]{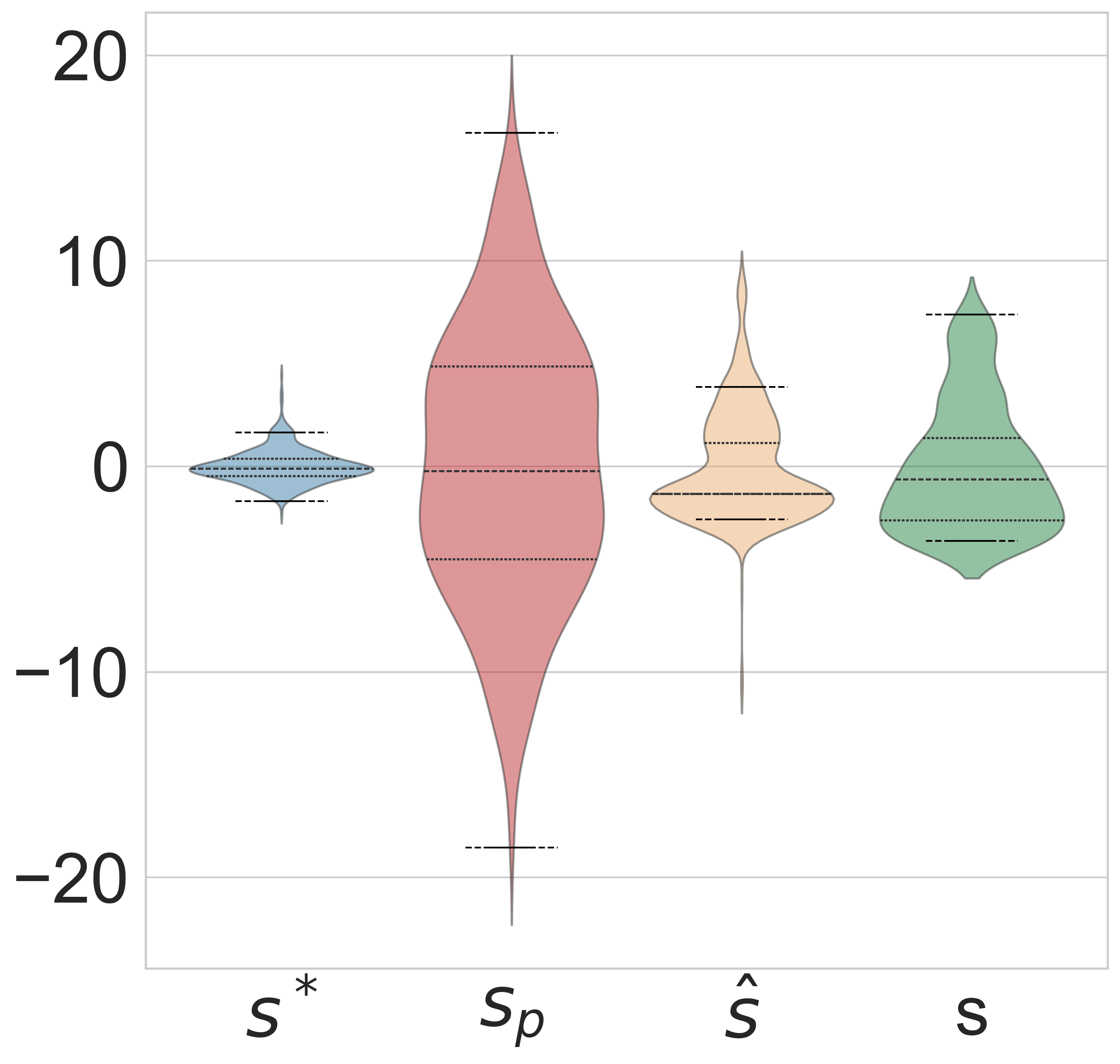}
        \label{fig:crowdbt_scores_single}
    }
    \subfigure[TCV-MLE]{
        \includegraphics[width=0.3\textwidth]{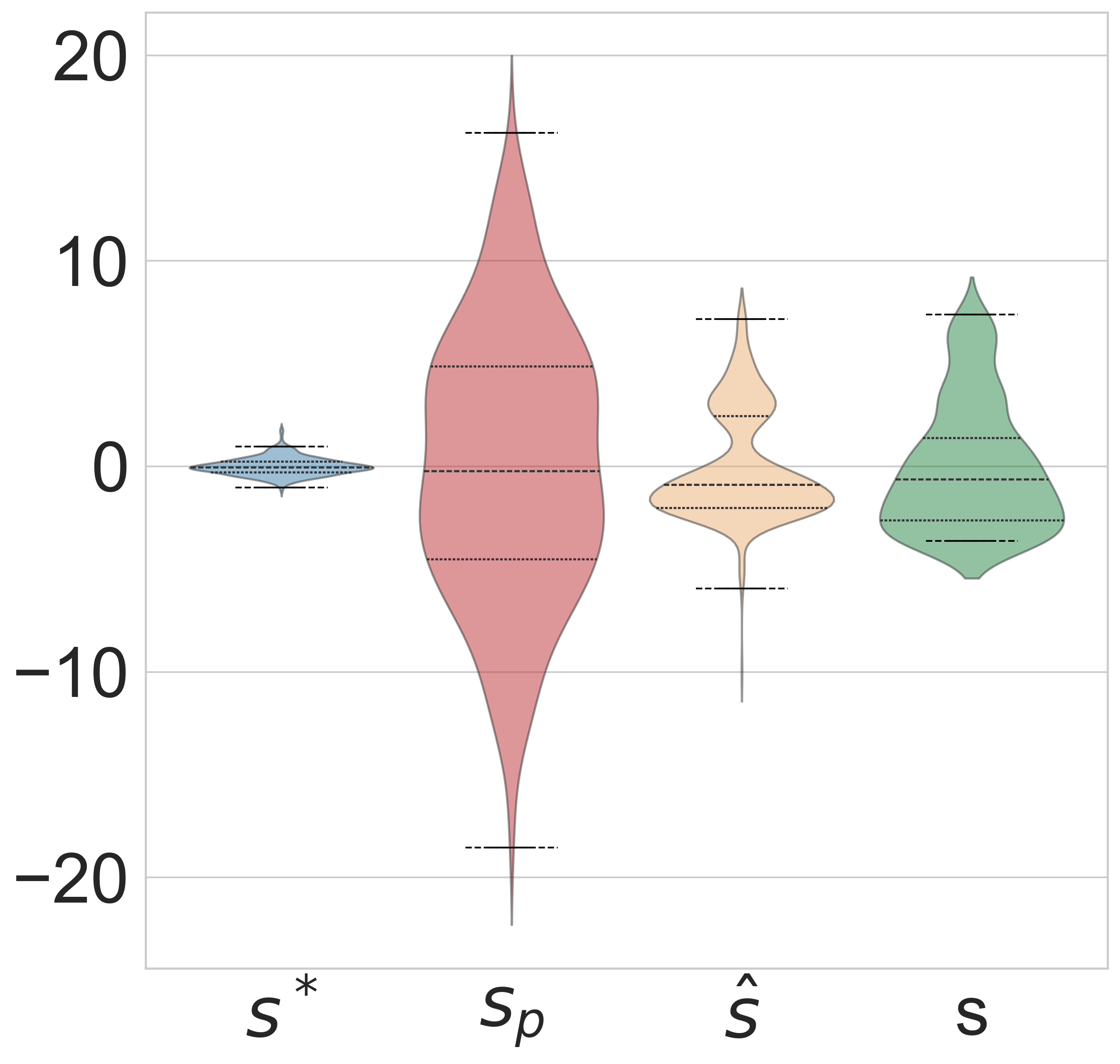}
        \label{fig:tcv_mle_scores_single}
    }

    \subfigure[CrowdTCV]{
        \includegraphics[width=0.3\textwidth]{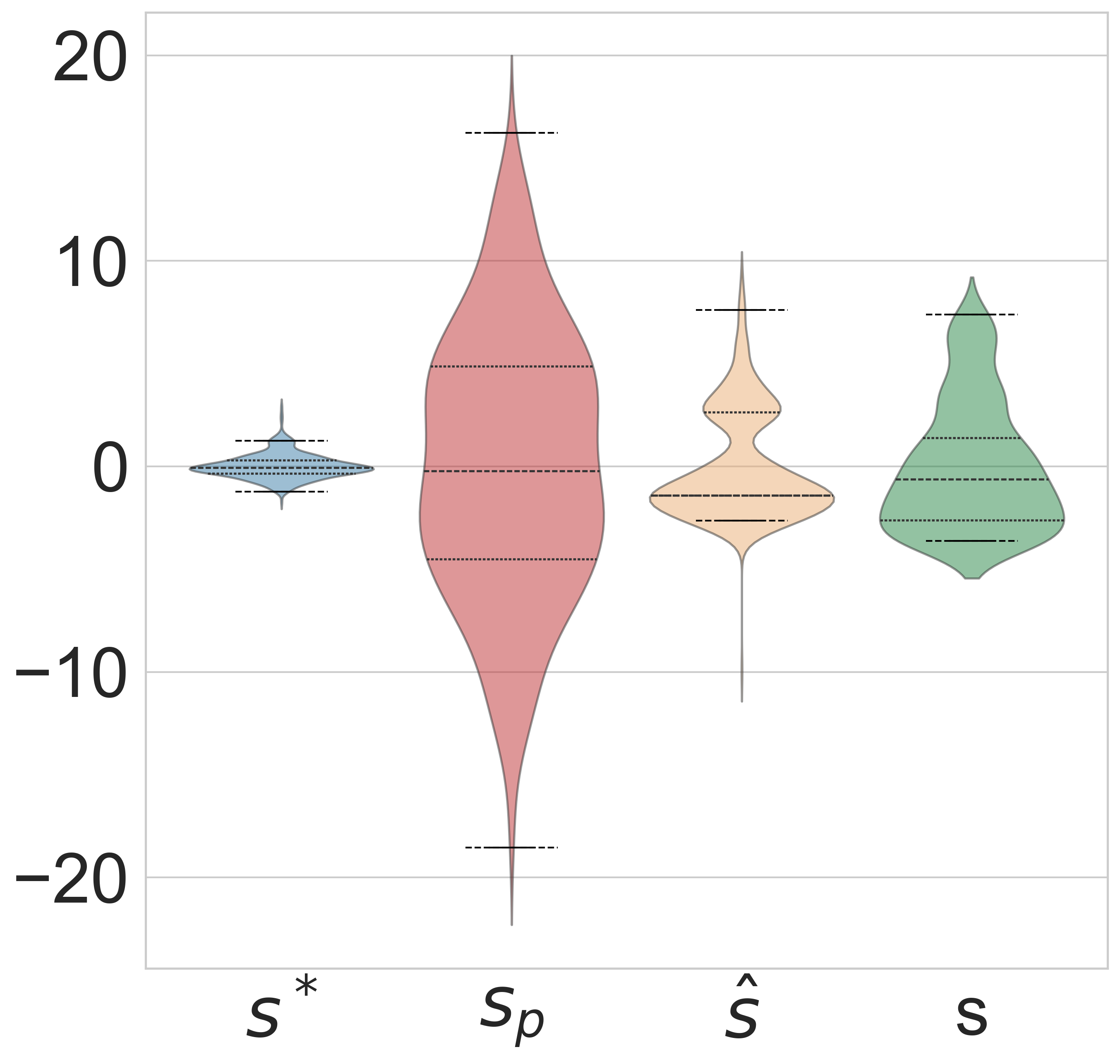}
        \label{fig:crowdtcv_scores_single}
    }
    \subfigure[HRA-N]{
        \includegraphics[width=0.3\textwidth]{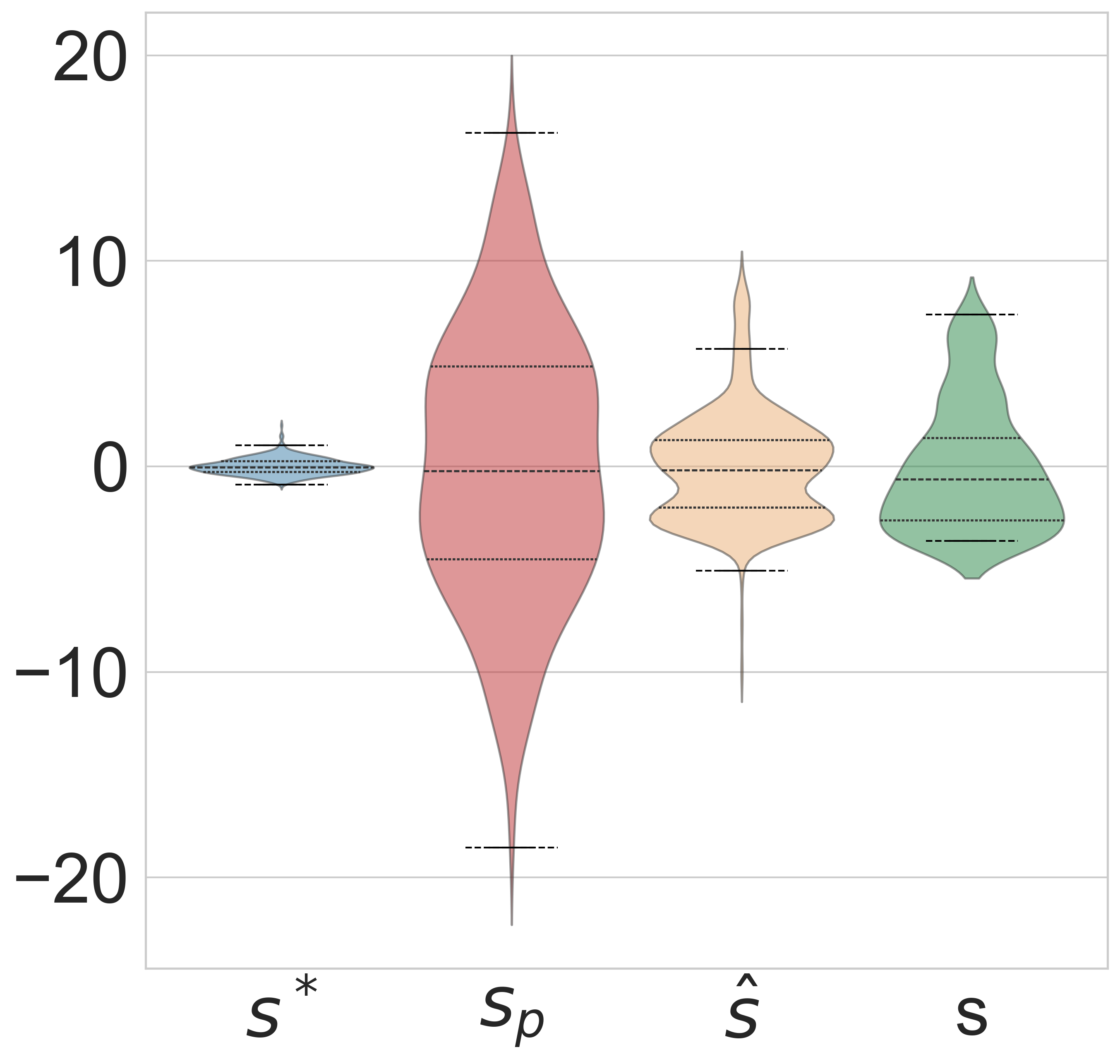}
        \label{fig:hra_e_scores_single}
    }
    \subfigure[HRA-G]{
        \includegraphics[width=0.3\textwidth]{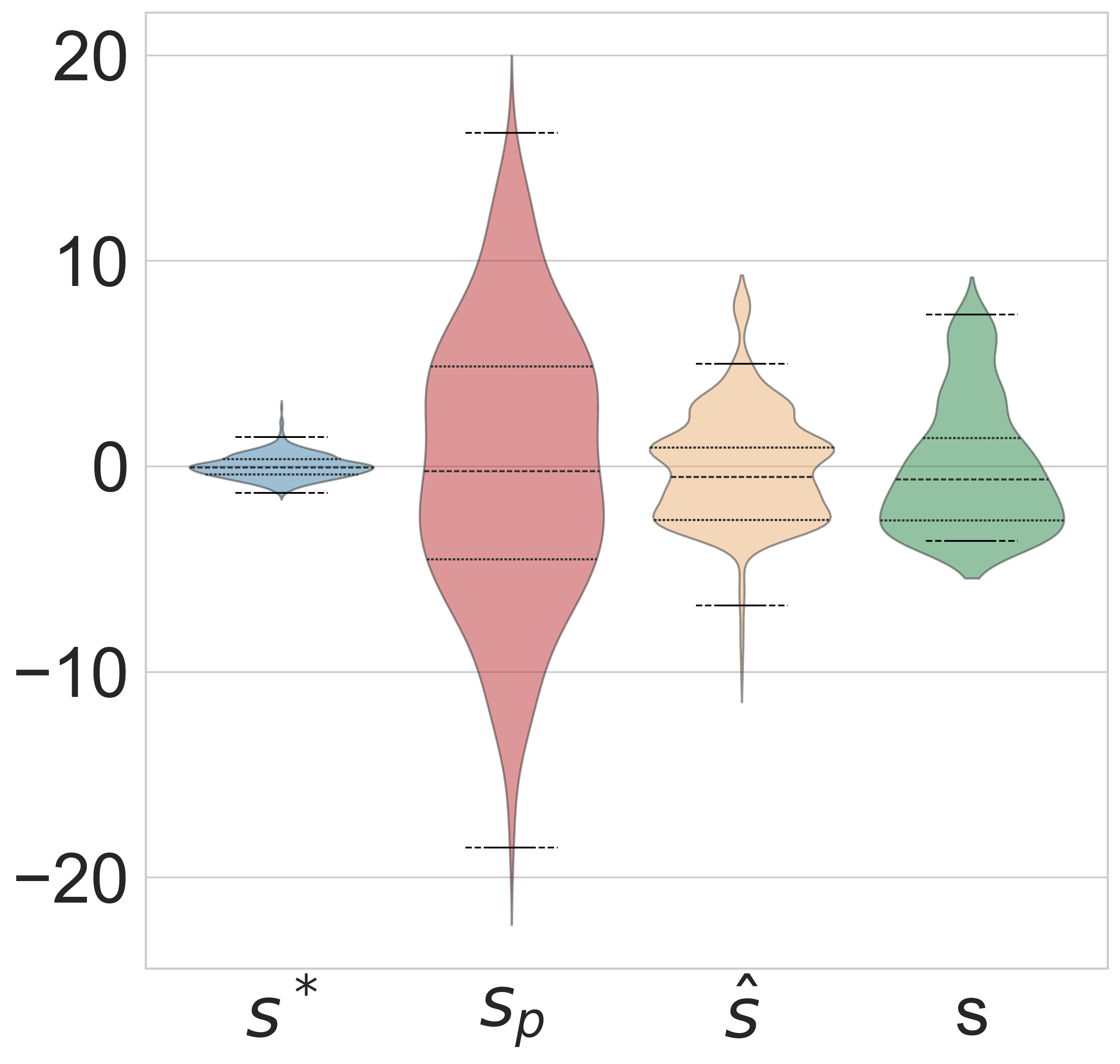}
        \label{fig:hra_g_scores_single}
    }
    
    \caption{Violin plots of estimated scores for different baseline algorithms.}
    \label{fig:violin_all_baselines}
\end{figure}

\subsection{\offer{Impact of Scale Inconsistency on Reading Level Dataset}}
\label{app:scaling-analysis}

\offer{A common concern in human judgment aggregation is whether scale inconsistency between consensus scores and ground truth inflates distributional divergence metrics. To address this, we conducted experiments scaling $\mathbf{s}^{*}$ to match the range of ground truth $\mathbf{s}$ using linear transformation: $$\mathbf{s}^{*}_{scaled} = \frac{\mathbf{s}^{*} - \min(\mathbf{s}^{*})}{\max(\mathbf{s}^{*}) - \min(\mathbf{s}^{*})} \cdot (\max(\mathbf{s}) - \min(\mathbf{s})) + \min(\mathbf{s}).$$}

\offer{Table~\ref{tab:scaling-analysis} presents results on the Reading Level dataset comparing four variants: unscaled consensus ($\mathbf{s}^{*}$), scaled consensus ($\mathbf{s}^{*}_{scaled}$), model predictions ($\mathbf{s}_p$), and \AtC\ output ($\hat{\mathbf{s}}$). The results show that $\mathbf{s}^{*}_{scaled}$ substantially improves performance over $\mathbf{s}^{*}$ (e.g., Wasserstein distance drops from 2.25 to 1.44 for HRA-G), confirming that scale inconsistency is a major issue. However, $\hat{\mathbf{s}}$ still outperforms $\mathbf{s}^{*}_{scaled}$ across all metrics and methods, demonstrating that isotonic calibration provides value beyond simple rescaling. This validates that projection onto $\widehat{\mathcal{M}}$'s surface yields superior calibration compared to linear transformations that only address scale but not ordinal alignment with model predictions.}

\begin{table}[!bht]\small
\centering
\caption{\offer{Scale Inconsistency Analysis on Reading Level Dataset.} \\
\small We compare unscaled consensus ($\mathbf{s}^{*}$), scaled consensus ($\mathbf{s}^{*}_{scaled}$), model predictions ($\mathbf{s}_p$), and \AtC\ output ($\hat{\mathbf{s}}$). \textbf{Bold} indicates the best performance among $\mathbf{s}^{*}_{scaled}$, $\mathbf{s}^{*}$, $\mathbf{s}_p$, and $\hat{\mathbf{s}}$. The \textcolor{red}{\textbf{red}} denotes optimal performance across all methods.}
\label{tab:scaling-analysis}
\begin{tabular}{p{1.5cm}cc}
\toprule
\multirow{2}{*}{\textbf{Stage-1}} 
& \textbf{Wasserstein}$\downarrow$ 
& \textbf{KS}$\downarrow$ \\
\cmidrule(lr){2-2}\cmidrule(lr){3-3}
\textbf{Method} 
& $\mathbf{s}^{*}_{scaled}/\ \mathbf{s}^{*}/\ \mathbf{s}_p/\ \hat{\mathbf{s}}$ 
& $\mathbf{s}^{*}_{scaled}/\ \mathbf{s}^{*}/\ \mathbf{s}_p/\ \hat{\mathbf{s}}$ \\
\midrule
HRA-G   & 1.44 / 2.25 / 2.83 / \textcolor{black}{\textbf{0.84}} & 0.31 / 0.50 / 0.30 / \textcolor{red}{\textbf{0.16}} \\
HRA-E   & 1.46 / 2.24 / 2.83 / \textcolor{black}{\textbf{0.83}} & 0.34 / 0.50 / 0.30 / \textcolor{red}{\textbf{0.16}} \\
HRA-N   & 1.45 / 2.35 / 2.83 / \textcolor{red}{\textbf{0.74}}   & 0.31 / 0.56 / 0.30 / \textcolor{black}{\textbf{0.19}} \\
\midrule
BTL-MLE & 1.78 / 2.19 / 2.83 / \textcolor{black}{\textbf{0.89}} & 0.42 / 0.46 / \textbf{0.30} / \textcolor{black}{\textbf{0.30}} \\
CrowdBT & 1.72 / 2.15 / 2.83 / \textcolor{black}{\textbf{0.84}} & 0.40 / 0.46 / \textbf{0.30} / \textcolor{black}{\textbf{0.27}} \\
CrowdTCV& 1.76 / 2.27 / 2.83 / \textcolor{black}{\textbf{1.02}} & 0.42 / 0.51 / 0.30 / \textcolor{black}{\textbf{0.31}} \\
TCV-MLE & 1.78 / 2.34 / 2.83 / \textcolor{black}{\textbf{0.91}} & 0.43 / 0.55 / \textbf{0.30} / \textcolor{black}{\textbf{0.30}} \\
\bottomrule
\end{tabular}
% \vspace*{-.3in}
\end{table}
\subsection{\offer{Comprehensive Baseline Comparison on Real-World Dataset}}
\label{app:full-results}

\offer{We present two evaluations on the Dots-activity dataset: (1) score-level performance analysis comparing human-only ($\mathbf{s}^{*}$), model-only ($\mathbf{s}_p$), and \AtC\ ($\hat{\mathbf{s}}$) assessments across all Stage-1 aggregation methods, and (2) end-to-end comparison against three established human-centered calibration baselines:}

\begin{itemize}[leftmargin=*]
    \item \offer{\textbf{GPPL}~\citep{10.1145/1102351.1102369}: Gaussian Process Preference Learning models pairwise preferences using GP regression with specialized covariance functions, learning latent utility functions from comparative judgments.}
    \item \offer{\textbf{Rank-SVM}~\citep{ranksvm}: Support Vector Machine adaptation for ranking problems that learns from pairwise preferences by optimizing margin-based objectives, commonly used in information retrieval and clickthrough data analysis.}
    \item \offer{\textbf{BARCW}~\citep{barcw}: Bayesian Approach for Ranking with Comparison Weights, a recent method designed to handle inconsistent preferences by modeling annotator-specific biases and reliabilities using Gaussian processes.}
\end{itemize}

\offer{Table~\ref{tab:real-world-dots-full} demonstrates that \AtC\ consistently outperforms both $\mathbf{s}_p$ and $\mathbf{s}^{*}$ across all seven aggregation methods. While the three baseline methods achieve competitive ranking accuracy (Kendall's $\tau \approx 0.92\text{--}0.94$), they exhibit catastrophic failures in cardinal calibration: MSE deteriorates by 400--450$\times$ and Wasserstein distance by 24--26$\times$ compared to \AtC. This stark contrast validates \AtC's unique strength—simultaneously achieving accurate absolute score calibration and maintaining ordinal consistency with human consensus, a capability not demonstrated by existing human-centered assessment approaches.}

\begin{table}[hbt]\small
\centering
\caption{\offer{Real-world results on Dots dataset.}\\
\small \textbf{Bold} indicates the best performance among human-only assessment ($\mathbf{s}^{*}$), model-only assessment ($\mathbf{s}_p$), and \AtC\ assessment ($\hat{\mathbf{s}}$), respectively. The \textcolor{red}{\textbf{red}} denotes the optimal performance across all methods for the given metric.
}
\label{tab:real-world-dots-full}
\begin{tabular}{p{0.3cm}p{1.15cm}p{2.55cm}p{2.1cm}p{2.7cm}p{2.55cm}}
\toprule
& \multirow{2}{*}{\textbf{Stage-1}} & \textbf{Kendall $\tau$}$\uparrow$ & \textbf{Wasserstein}$\downarrow$ & \textbf{KL}$\downarrow$ & \textbf{MSE}$\downarrow$ \\
\cmidrule(lr){3-3}\cmidrule(lr){4-4}\cmidrule(lr){5-5}\cmidrule(lr){6-6}
& \textbf{Method} & $\mathbf{s}^{*}/\ \mathbf{s}_p/\ \hat{\mathbf{s}}$ & $\mathbf{s}^{*}/\ \mathbf{s}_p/\ \hat{\mathbf{s}}$ & $\mathbf{s}^{*}/\ \mathbf{s}_p/\ \hat{\mathbf{s}}$ & $\mathbf{s}^{*}/\ \mathbf{s}_p/\ \hat{\mathbf{s}}$ \\
\midrule
\multirow{7}{*}{\textbf{AtC}}
& HRA-G & 0.917 / 0.923 / \textbf{0.940} & 6.97 / 2.53 / \textcolor{red}{\textbf{2.53}} & 123.13 / 0.881 / \textbf{0.860} & 65.08 / 11.75 / \textbf{9.61} \\
& HRA-E & 0.922 / 0.922 / \textcolor{red}{\textbf{0.943}} & 6.98 / 2.53 / \textcolor{red}{\textbf{2.53}} & 123.16 / 0.881 / \textbf{0.861} & 65.16 / 11.75 / \textcolor{red}{\textbf{9.59}} \\
& HRA-N & 0.917 / 0.923 / \textbf{0.934} & 7.05 / 2.53 / \textcolor{red}{\textbf{2.53}} & 125.50 / 0.881 / \textcolor{red}{\textbf{0.859}} & 66.44 / 11.75 / \textbf{9.75} \\
& CrowdBT & 0.894 / 0.923 / \textbf{0.927} & 5.32 / \textcolor{red}{\textbf{2.53}} / 2.55 & 72.84 / 0.881 / \textbf{0.865} & 39.28 / 11.75 / \textbf{10.24} \\
& CrowdTCV & 0.899 / 0.923 / \textbf{0.926} & 5.73 / \textcolor{red}{\textbf{2.53}} / 2.55 & 84.74 / 0.881 / \textbf{0.865} & 44.97 / 11.75 / \textbf{10.24} \\
& BTL & 0.917 / 0.923 / \textbf{0.937} & 6.72 / 2.53 / \textcolor{red}{\textbf{2.53}} & 115.37 / 0.881 / \textbf{0.860} & 60.51 / 11.75 / \textbf{9.73} \\
& TCV & 0.917 / 0.923 / \textbf{0.933} & 6.81 / \textcolor{red}{\textbf{2.53}} / 2.54 & 117.81 / 0.881 / \textbf{0.878} & 62.09 / 11.75 / \textbf{10.18} \\
\midrule
\multicolumn{2}{l}{GPPL} & -- / -- / 0.931 & -- / -- / 64.50 & -- / -- / 126.62 & -- / -- / 4220.36 \\
\multicolumn{2}{l}{Rank-SVM} & -- / -- / 0.923 & -- / -- / 61.20 & -- / -- / 133.70 & -- / -- / 3814.49 \\
\multicolumn{2}{l}{BARCW} & -- / -- / 0.940 & -- / -- / 64.62 & -- / -- / 24.09 & -- / -- / 4236.16 \\
\bottomrule
\end{tabular}
% \vspace*{-.2in}
\end{table}

\paragraph{Remark.} Our work is related to methods that integrate human and machine decision-making to improve prediction quality~\citep{zhao2025redone,Li_Li_Zhou_2025,xie-etal-2025-coalign, li2025spatiotemporalhierarchicalcausalmodels}. Beyond the assessment scenarios examined in this paper, the \texttt{AtC} framework connects to a range of applied domains where reliable evaluation requires reconciling heterogeneous human expertise with automated predictions. In logistics and operations research, platforms must continuously assess latent service quality metrics such as route difficulty, courier workload, and delivery reliability~\citep{A01lyu2025inco, A02lyu2024towards, A03lyu2023rede, A2510.1145/3746252.3761550, A1410.1145/3690624.3709425, A2110.1145/3746252.3761560, A32zhou2024cross, A33zhou2024multi}, where ground-truth measurements are expensive to obtain at scale, making principled fusion of operational data with frontline worker judgments a natural fit for the aggregate-then-calibrate paradigm. In urban and spatial computing, evaluating the effectiveness of intelligent transportation strategies~\citep{A1310.1145/3627673.3679605, A08yang2024behavior, A09yang2023carpg}, infrastructure robustness~\citep{A07fang2025cellular, A2210437846, A31ZhangCWLHD26}, and public health outcomes~\citep{A27LI2025104278, A26hong2026wed, LI2025104278} similarly involves latent targets that are only partially observable through both sensor-driven models and domain expert assessments. The growing adoption of foundation models in sensing and perception tasks~\citep{A0410.1145/3557915.3560944, A0610.1145/3659597, A20liu2025towards, A119209704, A30lu2024scaneru, A34zhou2026fullpromptingdepthdepth,A35yu2025finesat,A36wang2025wicg, li2025singlemodelsmitigatingmultimodal, 10.1145/3774904.3793059} and in large-scale graph mining systems~\citep{A15Outside-in, A16paths2pair, A17hang2024complex, A18feng2025neighsqueeze, A19feng2025hierarchical, A10zhang2020revisitinggraphconvolutionalnetwork} further amplifies the demand for systematic human-AI evaluation frameworks, as these powerful models require calibrated human oversight to ensure trustworthy deployment. More broadly, scientific disciplines such as computational biology~\citep{A23genes13040568, A28Liu2025.11.25.690439, A29unknown, 10.1145/3803851} and statistical learning under privacy constraints~\citep{A12wang2023finite} increasingly rely on combining expert curation with computational evidence, calling for principled statistical and computational methods that can aggregate diverse signals under uncertainty. These settings share the core structure that \texttt{AtC} addresses: ground truth is costly, delayed, or partially observable, and human judgments and model outputs carry complementary but imperfect information.

\subsection{Robustness Results for All Baseline Methods on Real-world Dataset.}
\label{app:appendix-radars}

This section provides supplementary results to complement the main analysis presented in Section~\ref{sec:experiments}. We present the complete set of radar plots for the six additional baseline aggregation methods: HRA-G, HRA-N, BTL, TCV, CrowdBT, and CrowdTCV. The experimental methodology, including the four types of image corruption and the evaluation metric (Kendall's Tau), is identical to the one used for the HRA-E analysis in the main body of the paper.

As illustrated in Figures~\ref{fig:radar_HRA-G} ~\ref{fig:radar_HRA-N} ~\ref{fig:radar_CrowdTCV}  ~\ref{fig:radar_BTL} ~\ref{fig:radar_TCV} ~\ref{fig:radar_CrowdBT}, a consistent trend emerges across all baselines. The primary finding from the main text is strongly reinforced: the \AtC\ framework, when calibrated using aggregated rankings ($\hat{\mathbf s}$ (Rank)), demonstrates superior robustness compared to all alternatives. In most cases, the performance of $\hat{\mathbf s}$ (Rank) remains high and degrades gracefully, while the raw objective model scores ($\mathbf s_p$) are significantly more vulnerable to input corruptions. This consistent outperformance across a diverse set of aggregation algorithms underscores the general applicability and reliability of our proposed ranking-based calibration strategy.

% \begin{figure}[htbp]
%     \centering
%     \subfigure[BLUR]{
%         \includegraphics[width=0.23\textwidth]{figure/radar_plots/radar_plot_HRA-G_GAUSSIAN_BLUR.pdf}
%     }
%     \subfigure[LOCAL\_BLUR]{
%         \includegraphics[width=0.23\textwidth]{figure/radar_plots/radar_plot_HRA-G_LOCAL_BLUR.pdf}
%     }
%     \subfigure[LOCAL\_NOISE]{
%         \includegraphics[width=0.23\textwidth]{figure/radar_plots/radar_plot_HRA-G_LOCAL_NOISE.pdf}
%     }
%     \subfigure[WHITEOUT]{
%         \includegraphics[width=0.23\textwidth]{figure/radar_plots/radar_plot_HRA-G_WHITEOUT.pdf}
%     }
%     \caption{Radar plots under different noise conditions (HRA-G).}
%     \label{fig:radar_HRA-G}
% \end{figure}

% \begin{figure}[htbp]
%     \centering
%     \subfigure[BLUR]{
%         \includegraphics[width=0.23\textwidth]{figure/radar_plots/radar_plot_HRA-N_GAUSSIAN_BLUR.pdf}
%     }
%     \subfigure[LOCAL\_BLUR]{
%         \includegraphics[width=0.23\textwidth]{figure/radar_plots/radar_plot_HRA-N_LOCAL_BLUR.pdf}
%     }
%     \subfigure[LOCAL\_NOISE]{
%         \includegraphics[width=0.23\textwidth]{figure/radar_plots/radar_plot_HRA-N_LOCAL_NOISE.pdf}
%     }
%     \subfigure[WHITEOUT]{
%         \includegraphics[width=0.23\textwidth]{figure/radar_plots/radar_plot_HRA-N_WHITEOUT.pdf}
%     }
%     \caption{Radar plots under different noise conditions (HRA-N).}
%     \label{fig:radar_HRA-N}
% \end{figure}

\begin{figure}[htbp]
    \centering
    \subfigure[BLUR]{
        \includegraphics[width=0.23\textwidth]{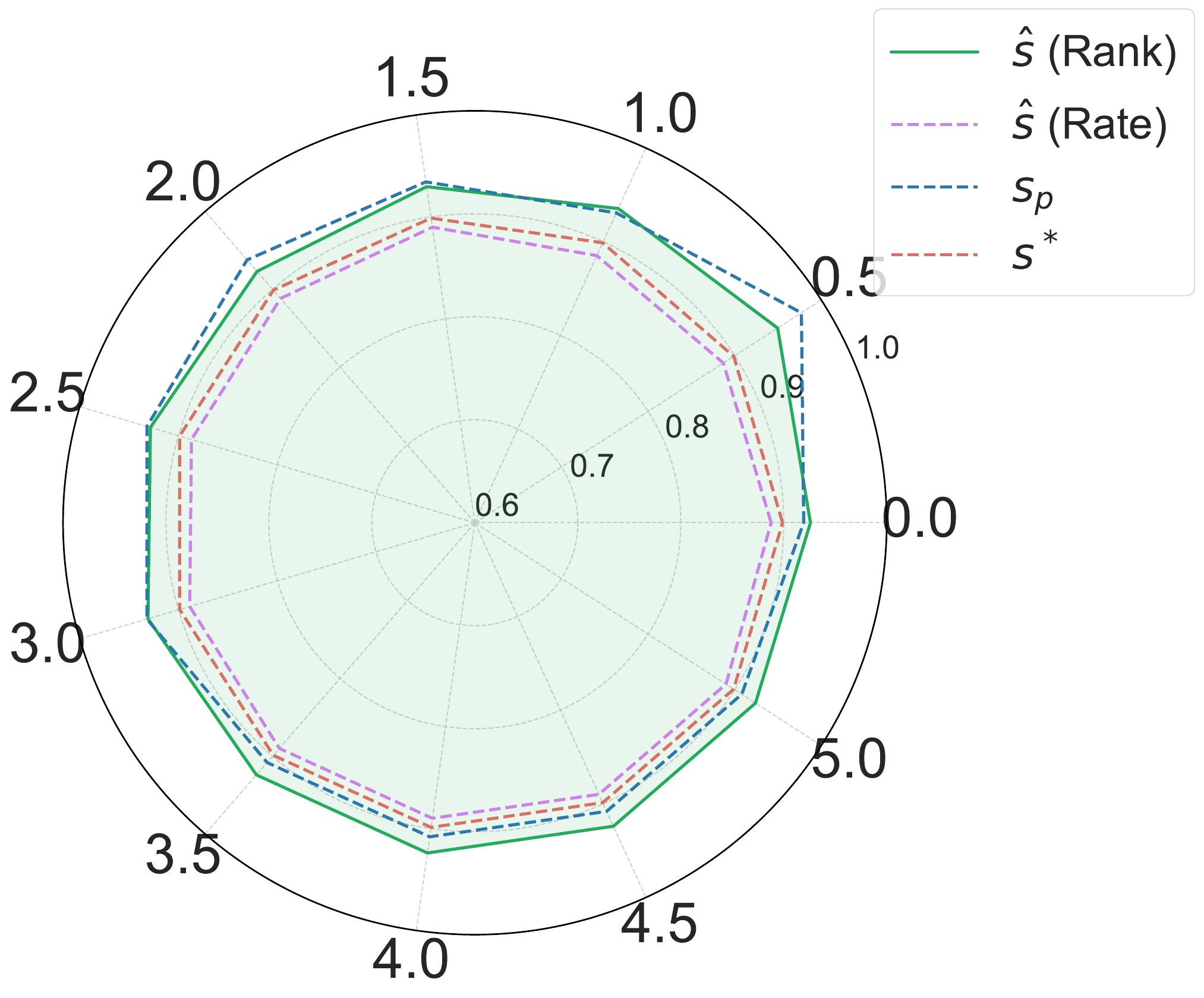}
    }
    \subfigure[LOCAL\_BLUR]{
        \includegraphics[width=0.23\textwidth]{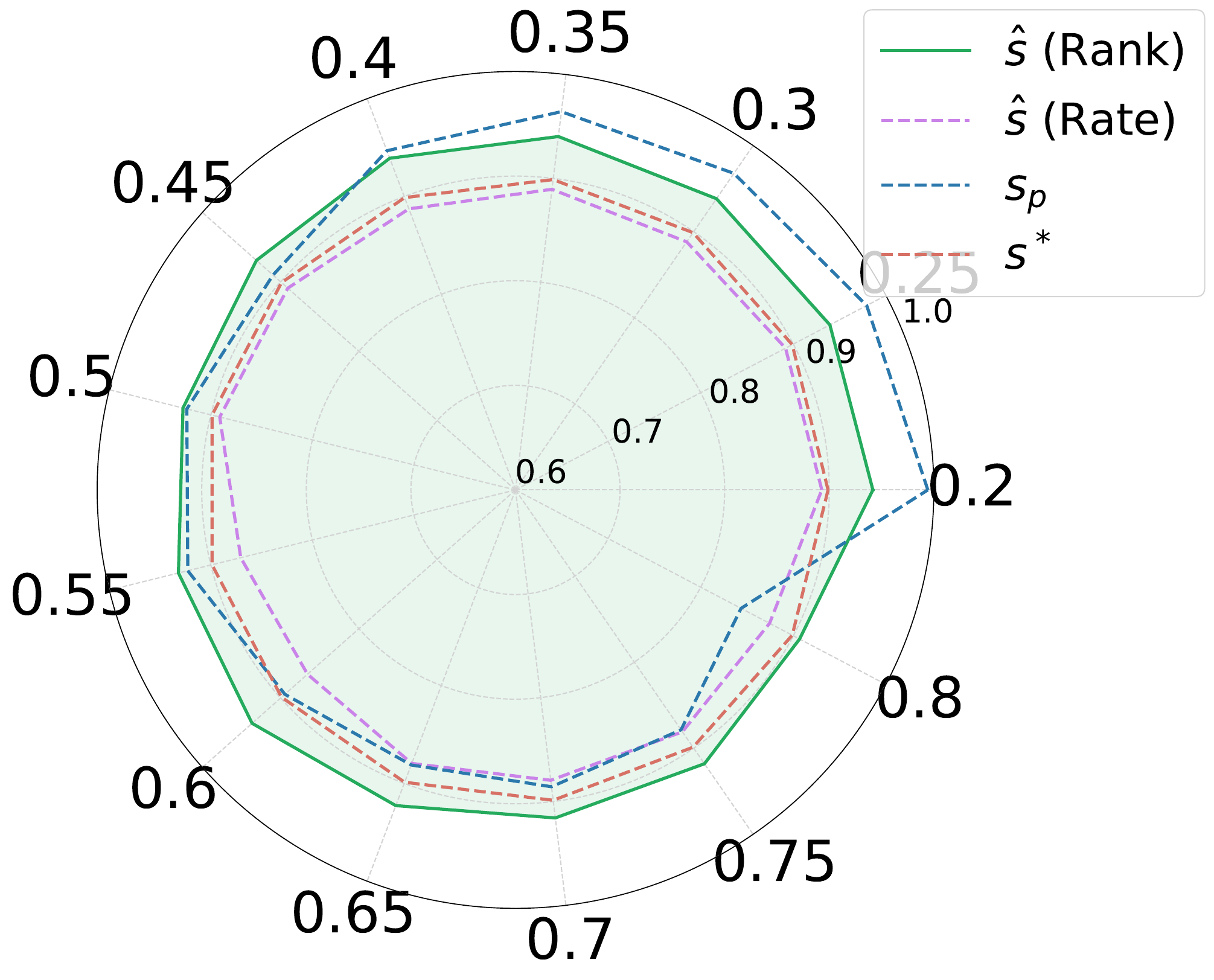}
    }
    \subfigure[LOCAL\_NOISE]{
        \includegraphics[width=0.23\textwidth]{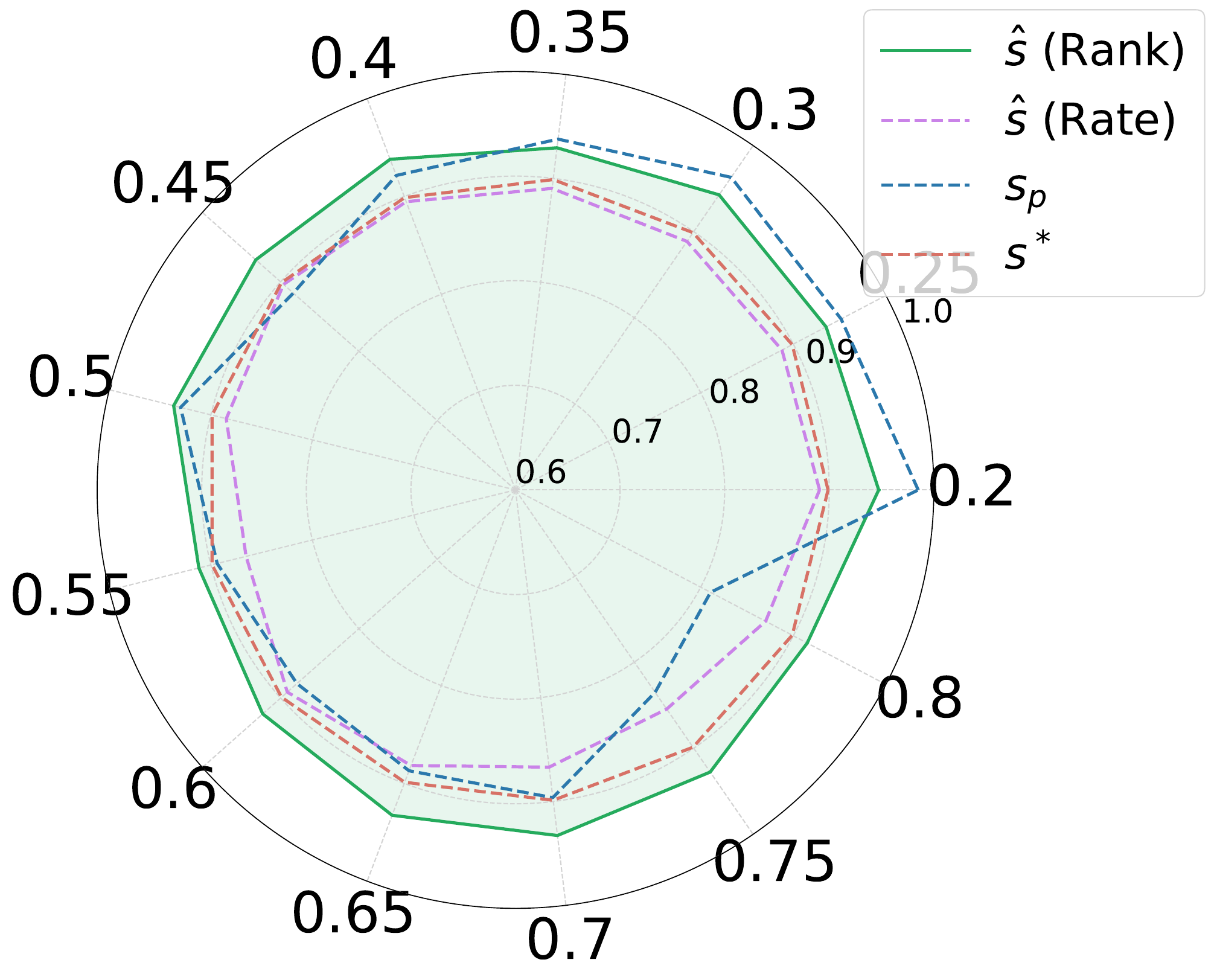}
    }
    \subfigure[WHITEOUT]{
        \includegraphics[width=0.23\textwidth]{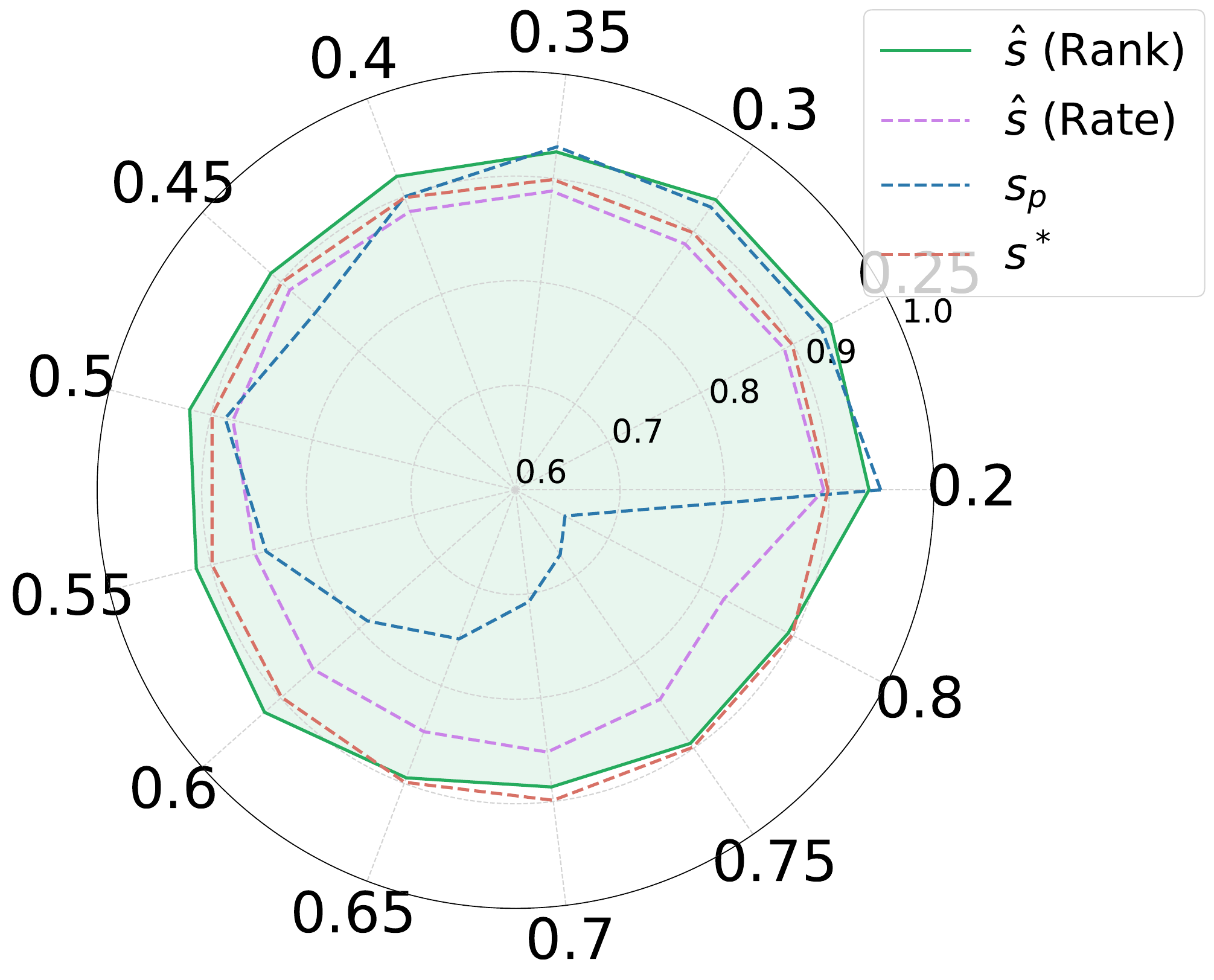}
    }
    \caption{Radar plots under different noise conditions (CrowdTCV).}
    \label{fig:radar_CrowdTCV}
\end{figure}

\begin{figure}[htbp]
    \centering
    \subfigure[BLUR]{
        \includegraphics[width=0.23\textwidth]{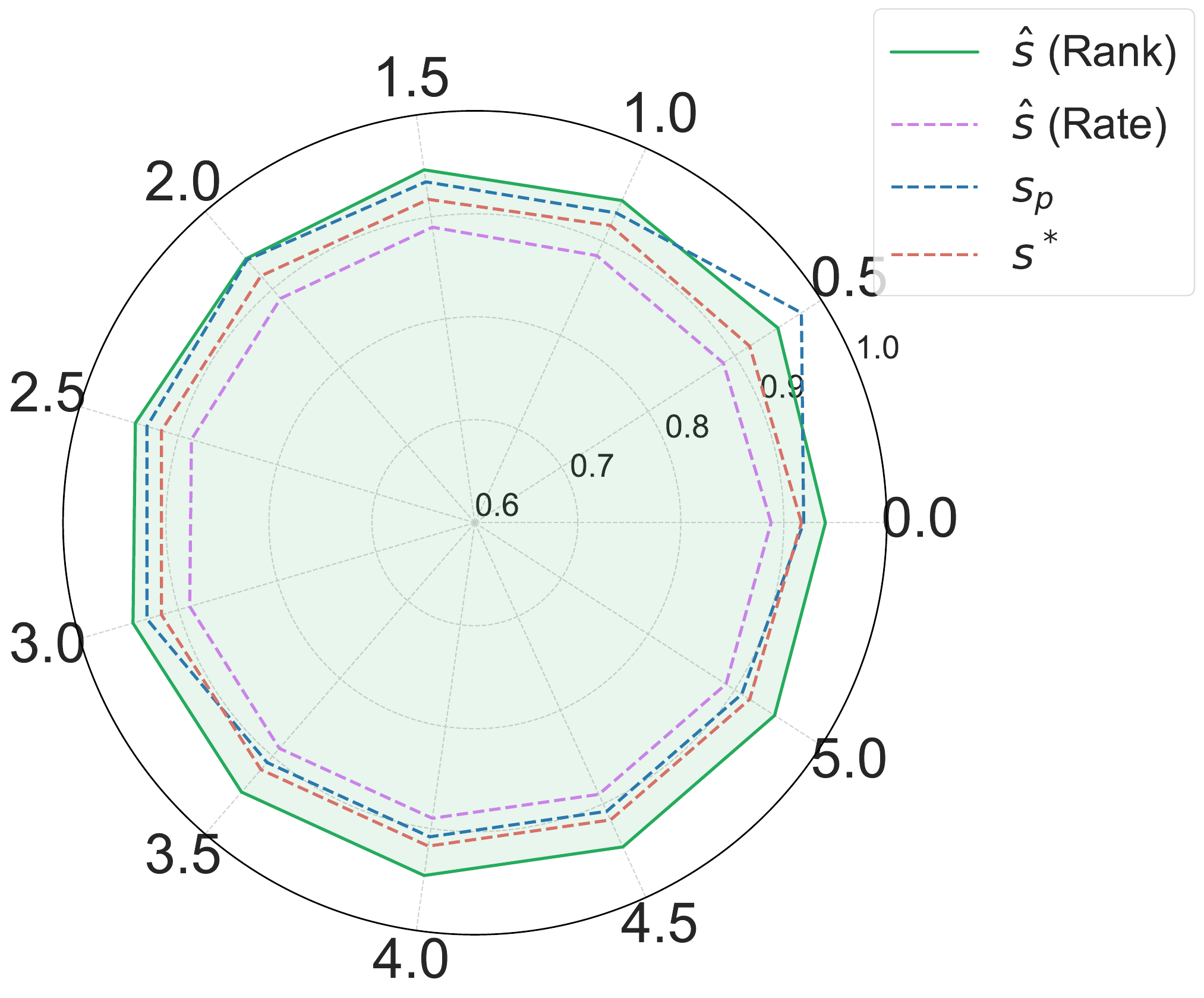}
    }
    \subfigure[LOCAL\_BLUR]{
        \includegraphics[width=0.23\textwidth]{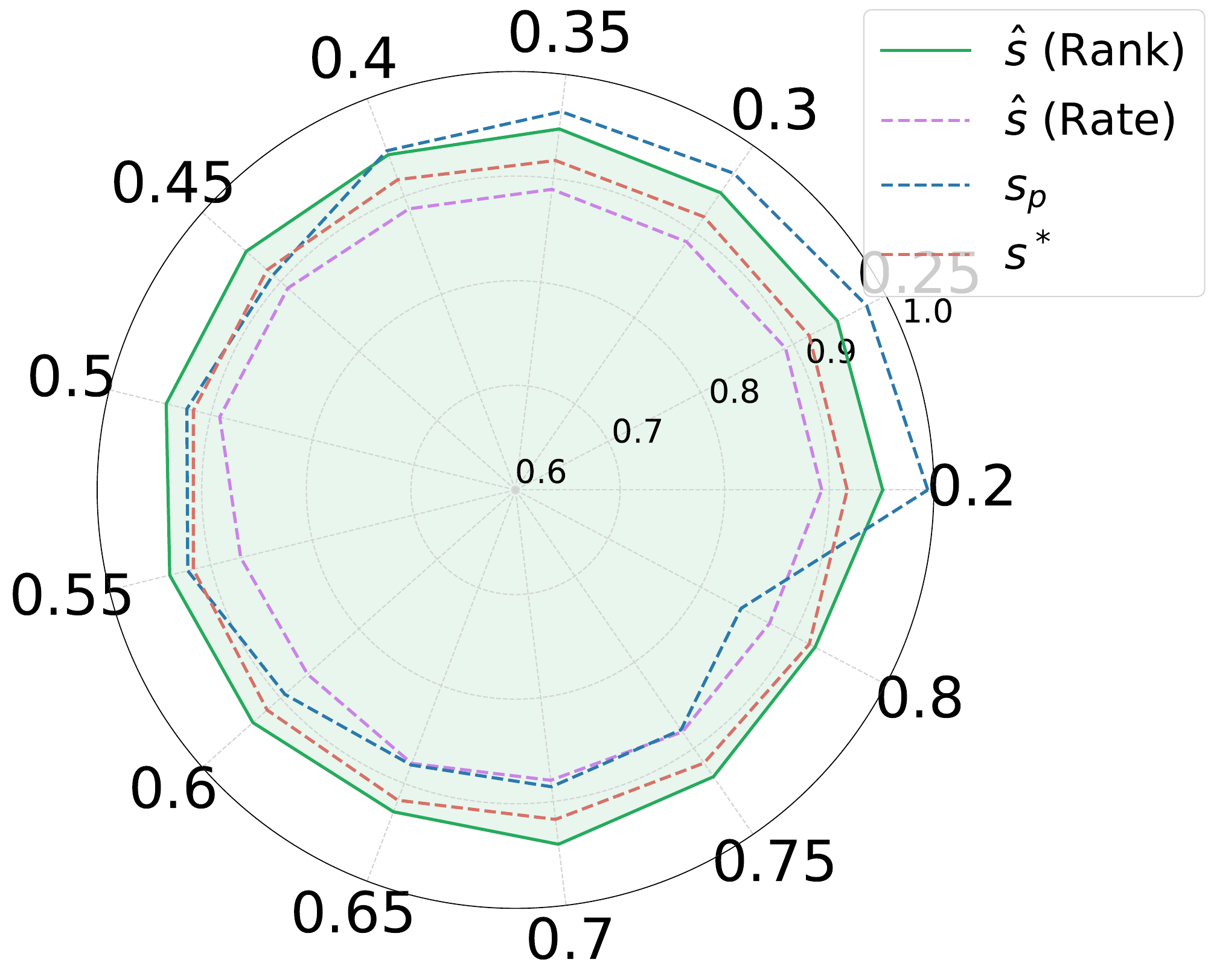}
    }
    \subfigure[LOCAL\_NOISE]{
        \includegraphics[width=0.23\textwidth]{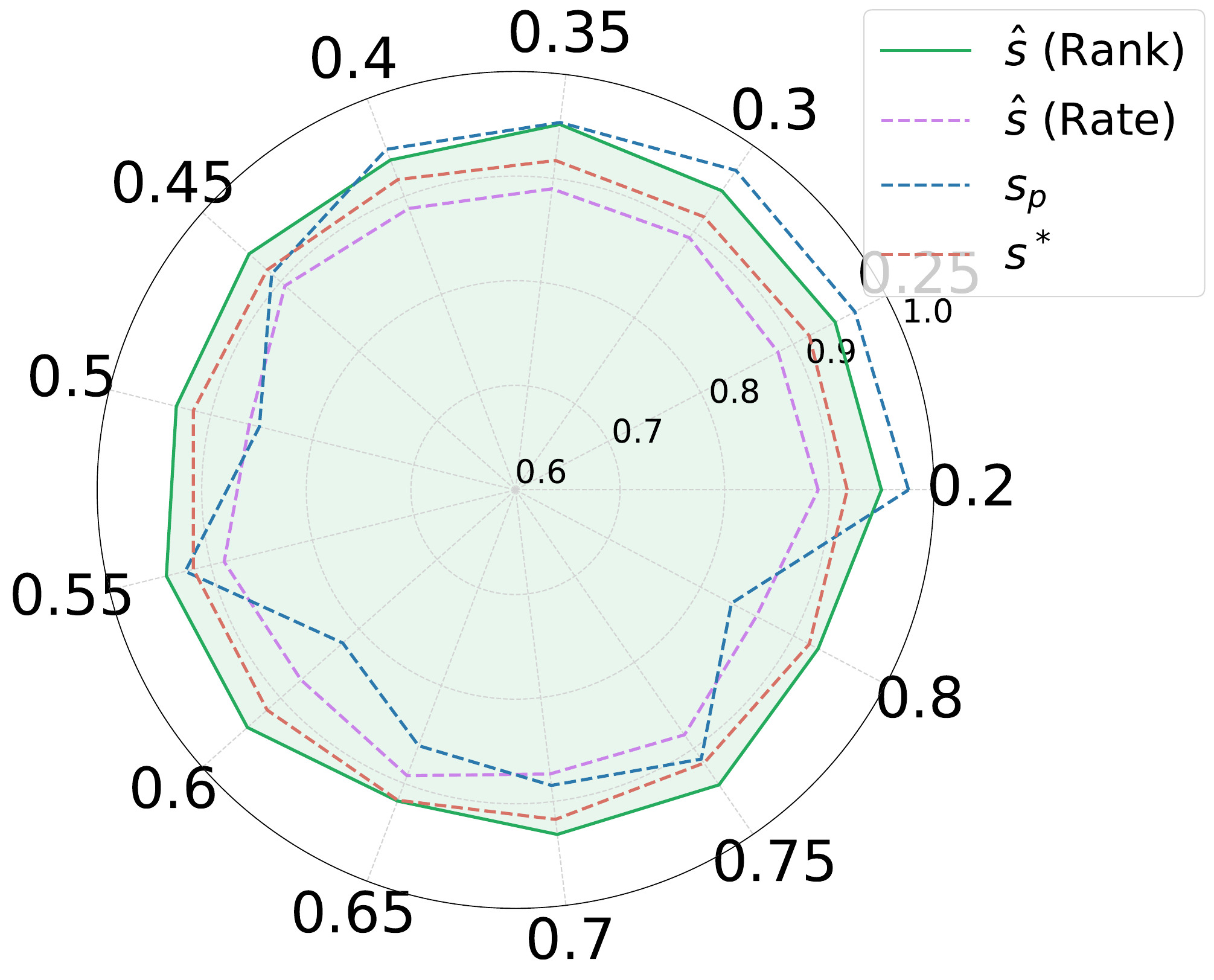}
    }
    \subfigure[WHITEOUT]{
        \includegraphics[width=0.23\textwidth]{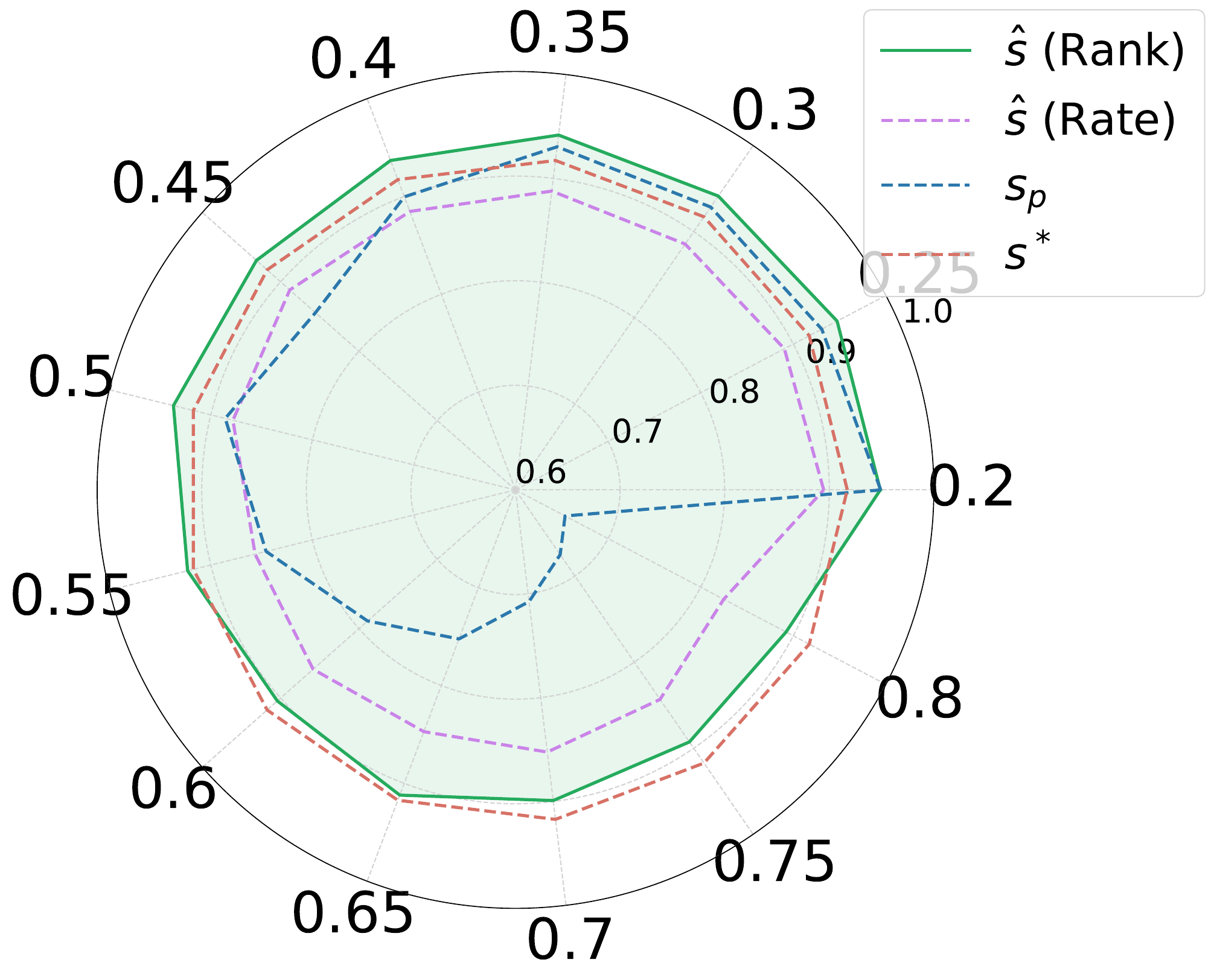}
    }
    \caption{Radar plots under different noise conditions (BTL).}
    \label{fig:radar_BTL}
\end{figure}

\begin{figure}[htbp]
    \centering
    \subfigure[BLUR]{
        \includegraphics[width=0.23\textwidth]{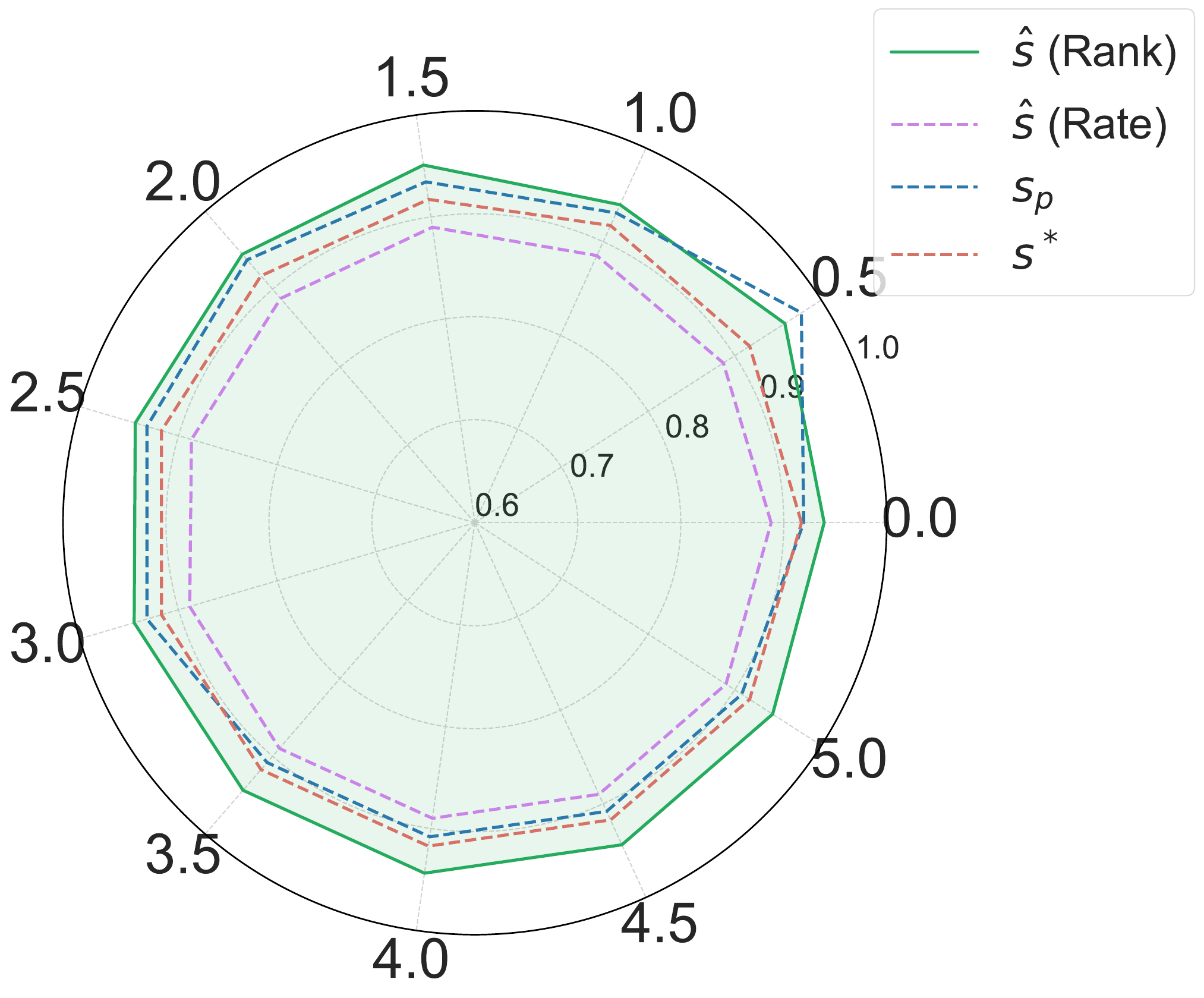}
    }
    \subfigure[LOCAL\_BLUR]{
        \includegraphics[width=0.23\textwidth]{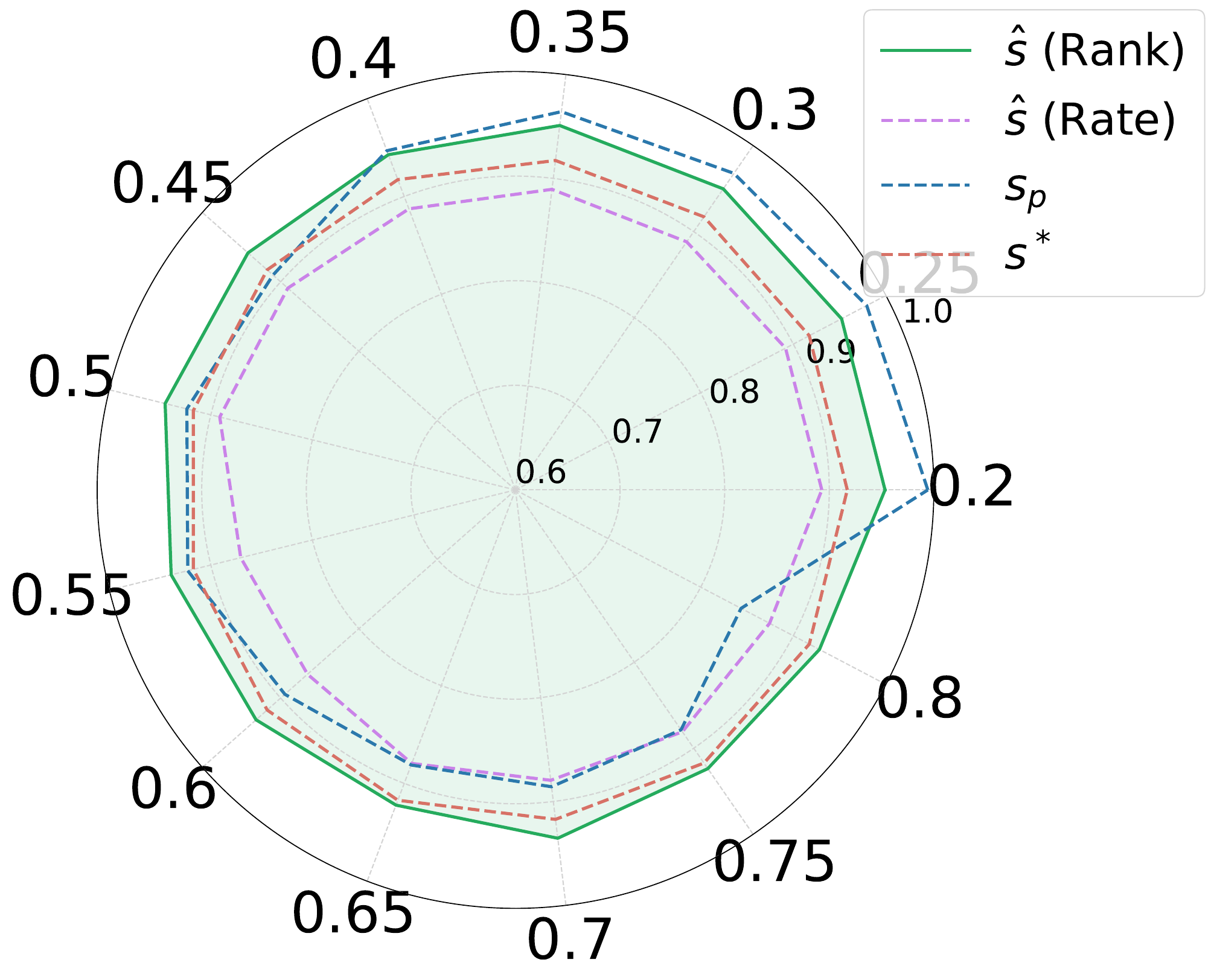}
    }
    \subfigure[LOCAL\_NOISE]{
        \includegraphics[width=0.23\textwidth]{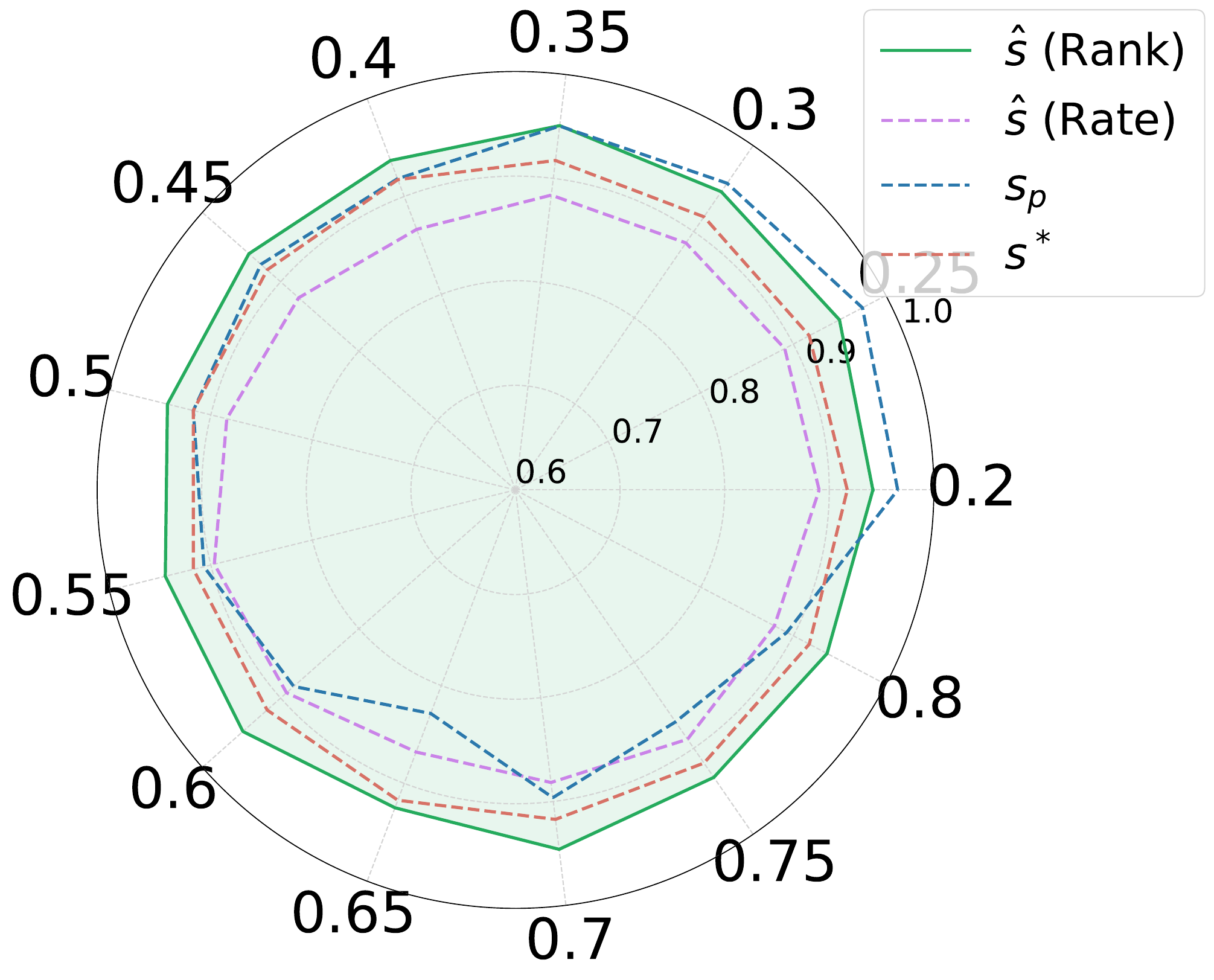}
    }
    \subfigure[WHITEOUT]{
        \includegraphics[width=0.23\textwidth]{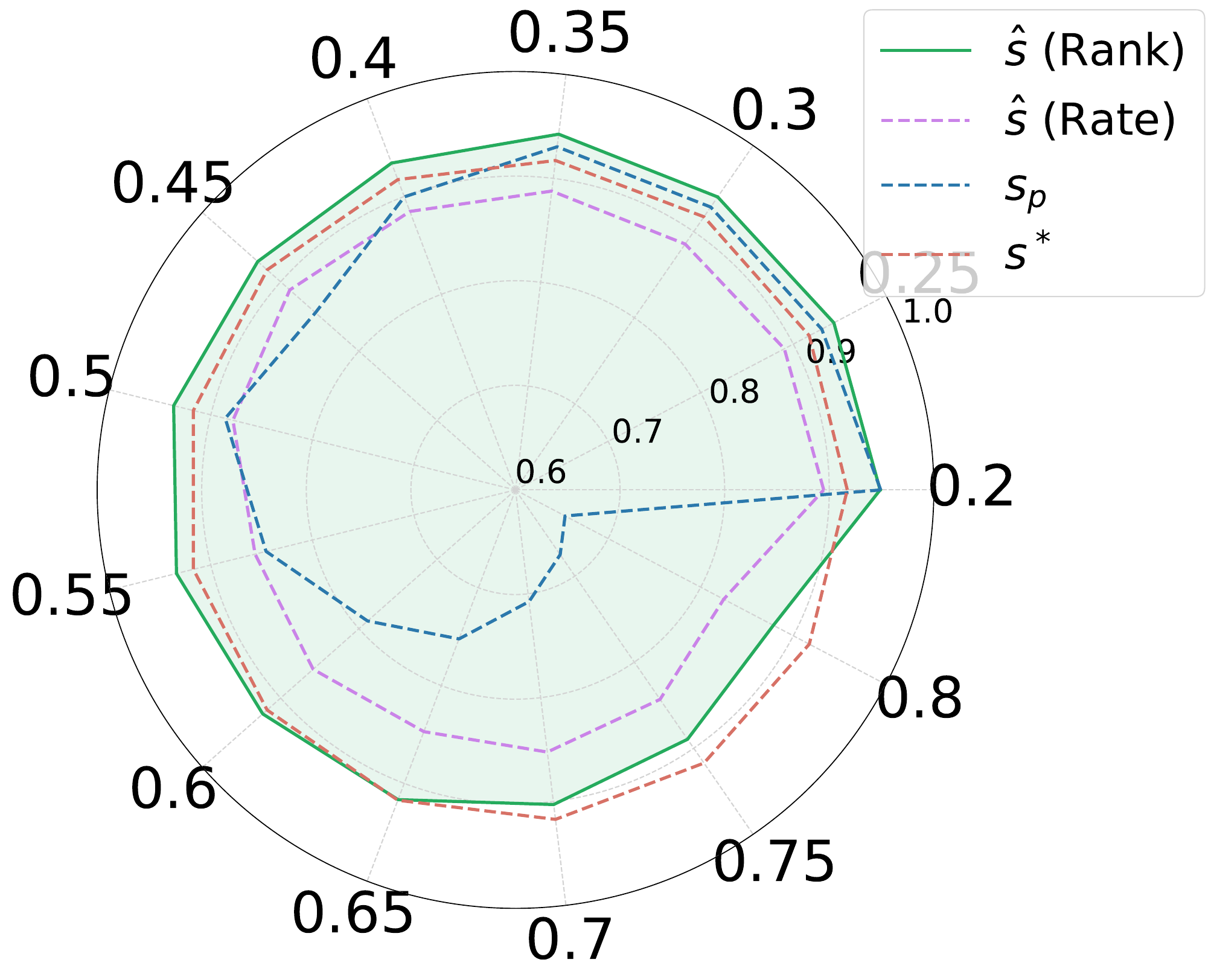}
    }
    \caption{Radar plots under different noise conditions (TCV).}
    \label{fig:radar_TCV}
\end{figure}

\begin{figure}[htbp]
    \centering
    \subfigure[BLUR]{
        \includegraphics[width=0.23\textwidth]{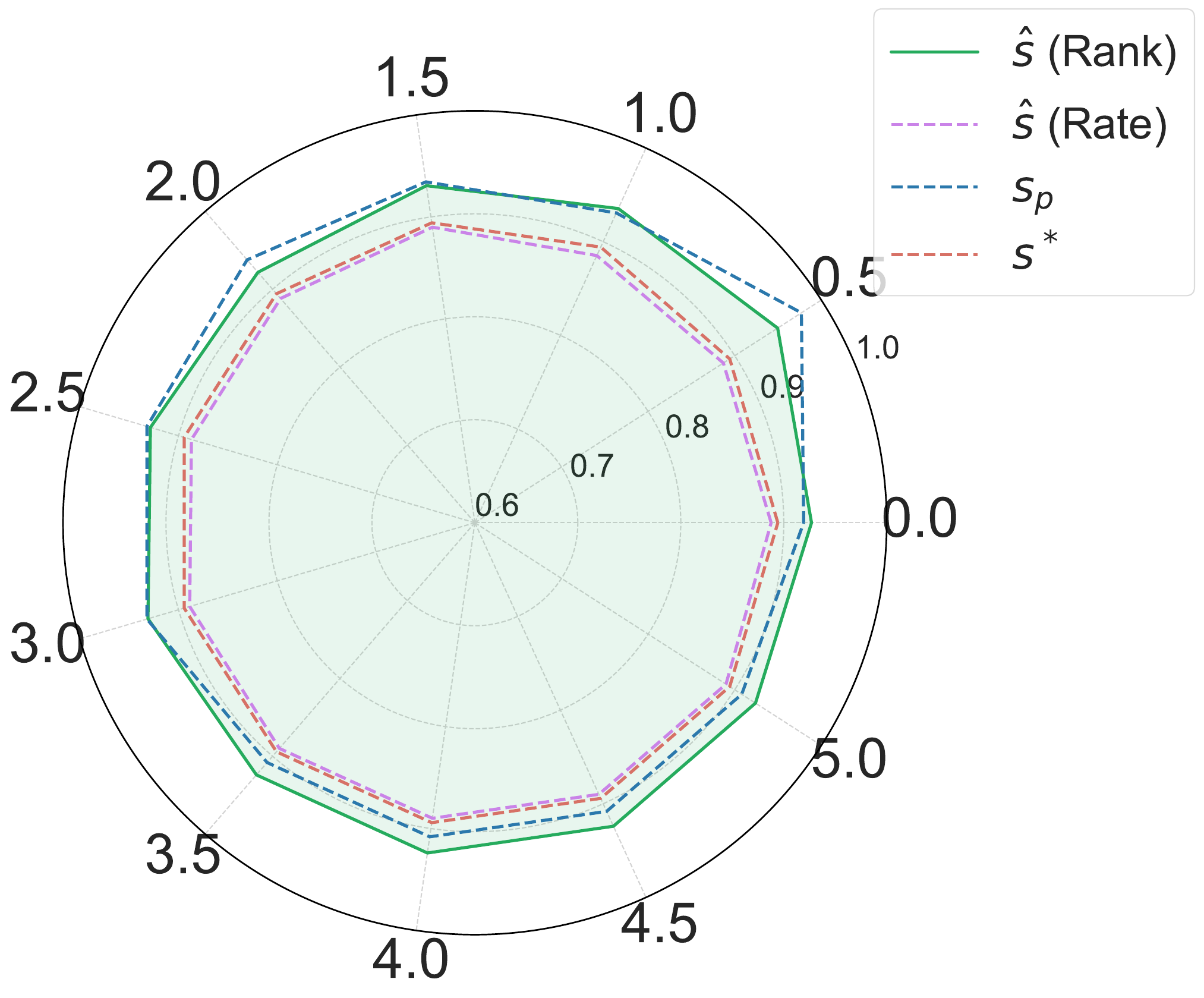}
    }
    \subfigure[LOCAL\_BLUR]{
        \includegraphics[width=0.23\textwidth]{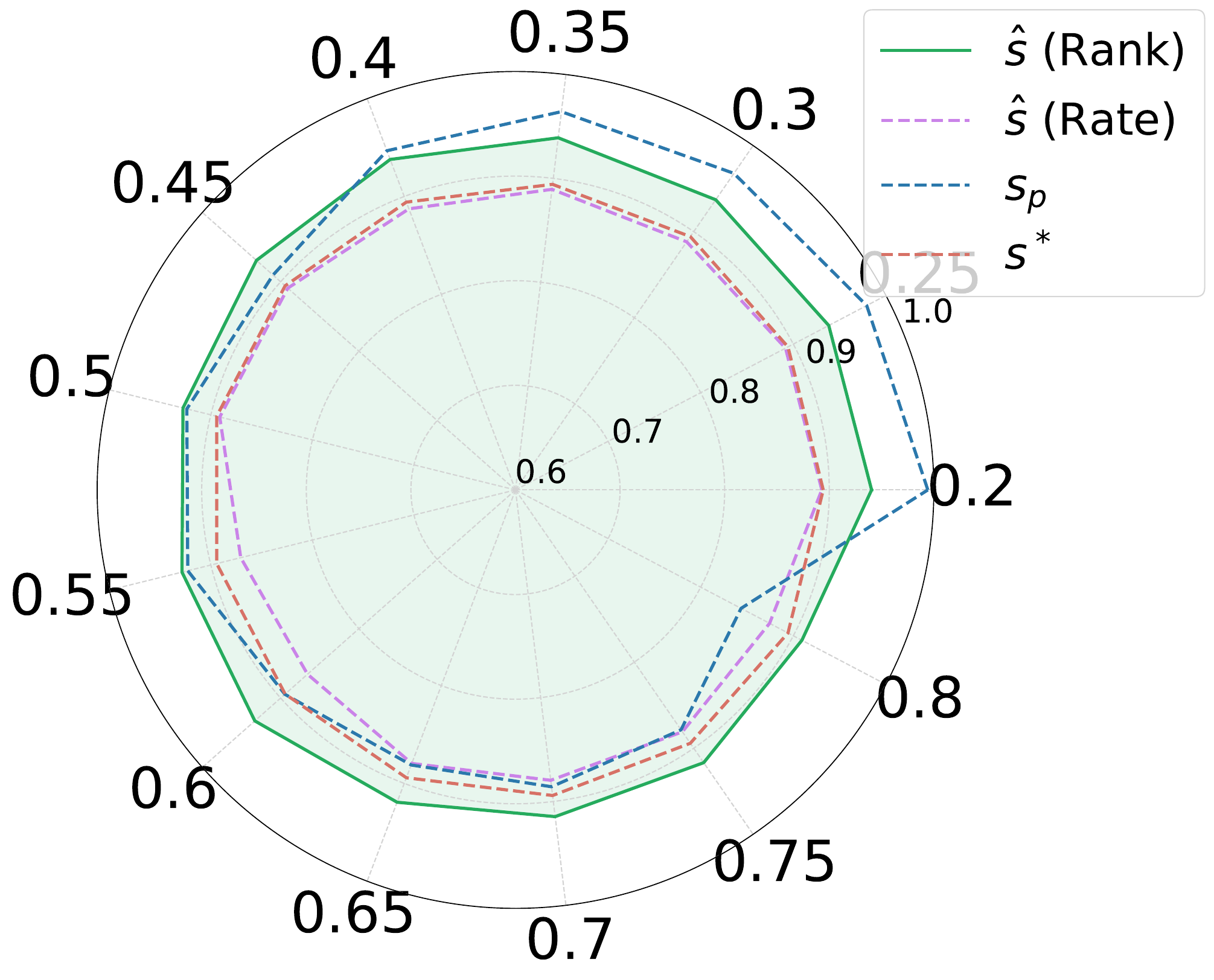}
    }
    \subfigure[LOCAL\_NOISE]{
        \includegraphics[width=0.23\textwidth]{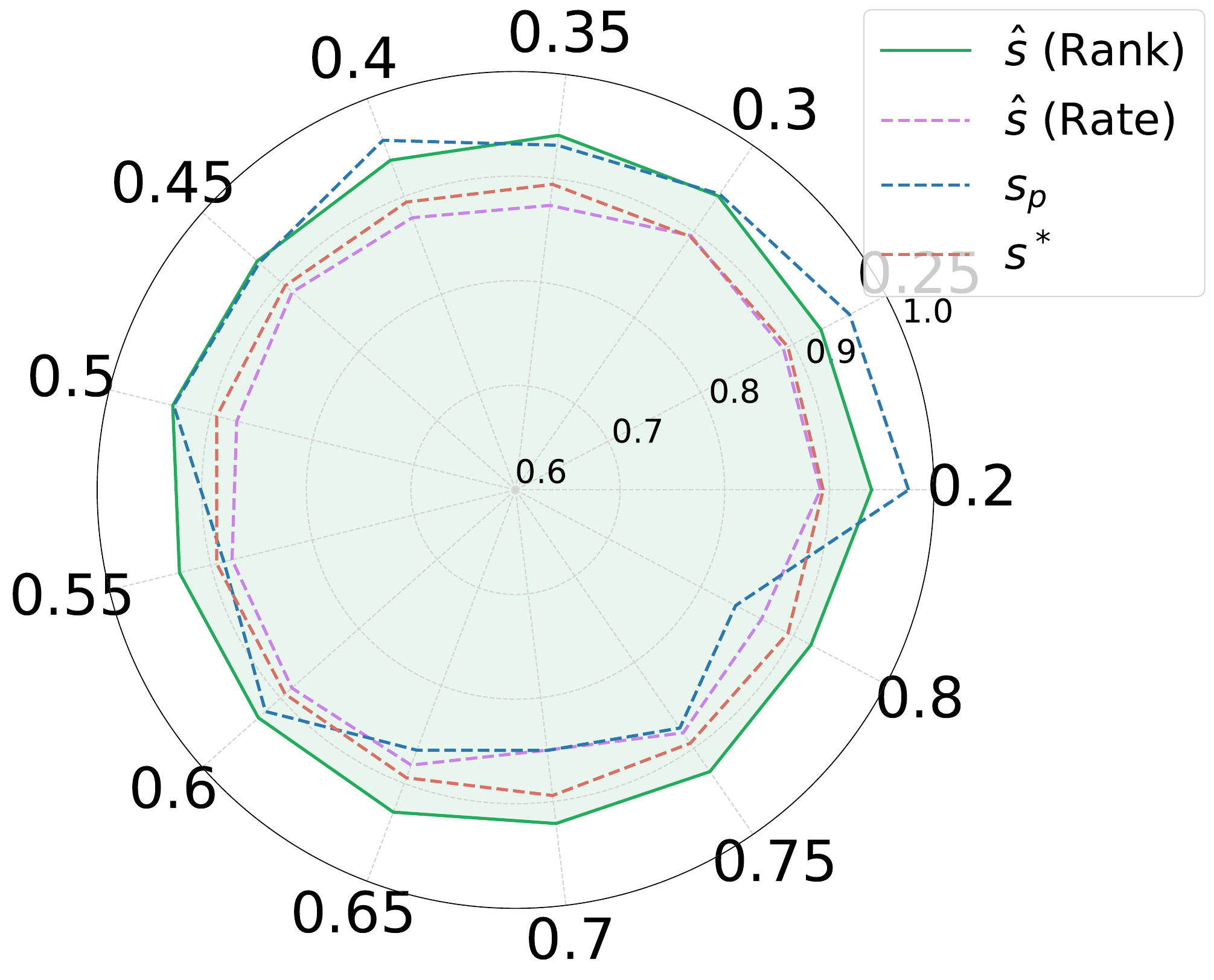}
    }
    \subfigure[WHITEOUT]{
        \includegraphics[width=0.23\textwidth]{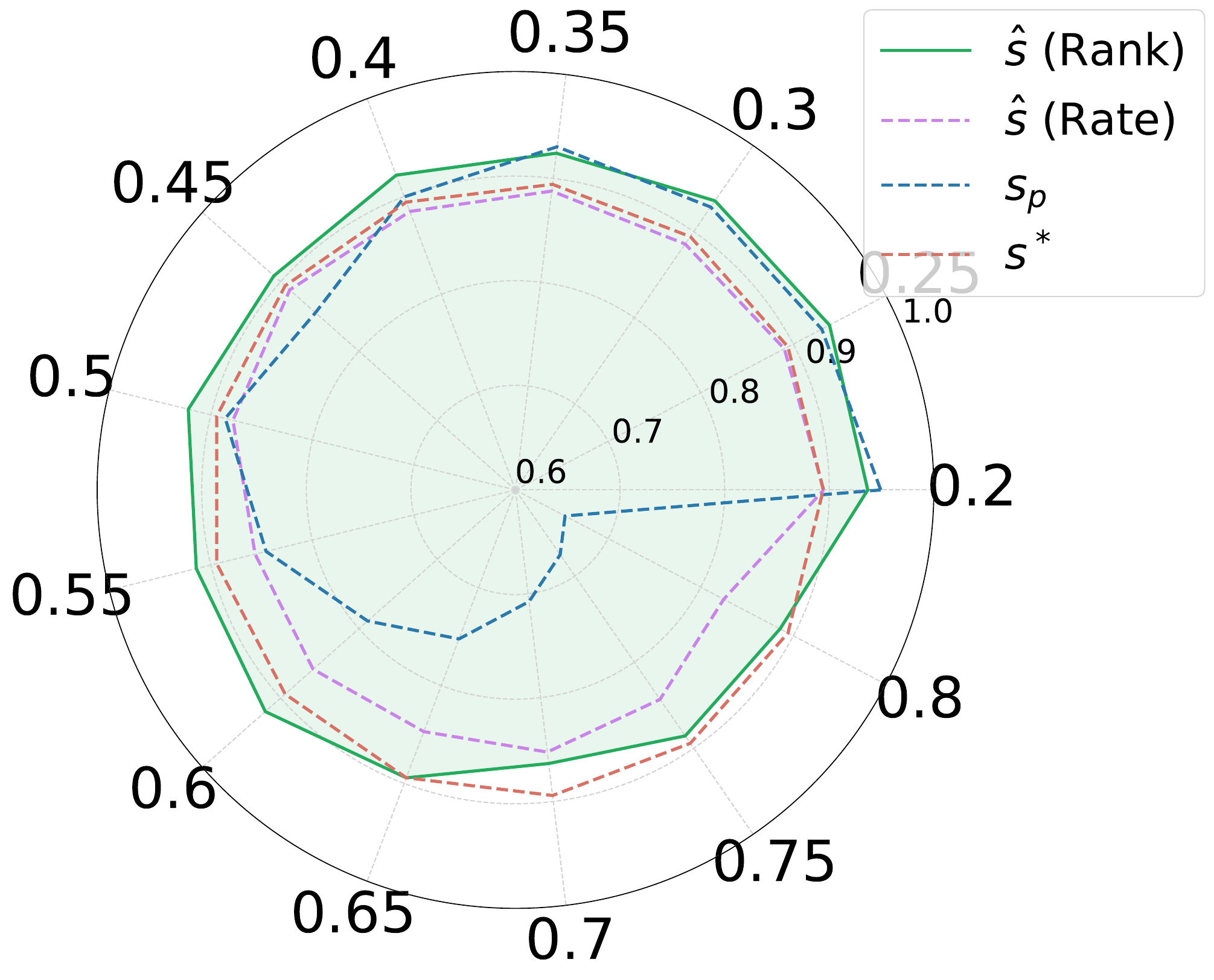}
    }
    \caption{Radar plots under different noise conditions (CrowdBT).}
    \label{fig:radar_CrowdBT}
\end{figure}

% \newpage

\end{document}